\newcommand{\bs}[1]{\boldsymbol{#1}}
\newcommand{\dpar}[2]{\frac{\partial #1}{\partial #2}}
\newcommand{\myappref}[1]{Appendix~\ref{#1}}
\newcommand{\myeqref}[1]{Eq.~\eqref{#1}}
\newcommand{\myfigref}[1]{Fig.~\ref{#1}}
\newcommand{\mytabref}[1]{Table~\ref{#1}}
\newcommand{\mypropref}[1]{Proposition~\ref{#1}}
\newcommand{\mylemmaref}[1]{Lemma~\ref{#1}}
\setlist[itemize]{leftmargin=2em}
\setlist[enumerate]{leftmargin=2em}
\newtheorem{theorem}{Theorem}[section]
\newtheorem{proposition}[theorem]{Proposition}
\newtheorem{lemma}[theorem]{Lemma}
\newtheorem*{remark}{Remark} 
\DeclareMathOperator{\tr}{tr}
\definecolor{tabblue}{HTML}{1f77b4}
\definecolor{taborange}{HTML}{ff7f0e}
\definecolor{tabgreen}{HTML}{2ca02c}
\definecolor{tabred}{HTML}{d62728}
\definecolor{tabpurple}{HTML}{9467bd}
\newcommand{\commentquercus}[1]{\textcolor{black}{#1}}
\newcommand{\commentnat}[1]{\textcolor{black}{#1}}
\newcommand{\PEA}[1]{\textcolor{black}{#1}}
\title{Data-driven particle dynamics: Structure-preserving coarse-graining for emergent behavior in non-equilibrium systems}
\author[1]{Quercus Hern\'andez}
\author[1]{Max Win}
\author[2]{Thomas O'Connor}
\author[1]{Paulo E. Arratia}
\author[1]{Nathaniel Trask}
\affil[1]{{\small School of Engineering and Applied Science, University of Pennsylvania, 220 South 33rd Street, Philadelphia,
PA 19104, USA.}}
\affil[2]{{\small Department of Materials Science and Engineering, Carnegie Mellon University, Pittsburgh, PA, USA.}}
\begin{document}

\twocolumn[
\maketitle

\begin{abstract}
Multiscale systems are ubiquitous in science and technology, but are notoriously challenging to simulate as short spatiotemporal scales must be appropriately linked to emergent bulk physics. When expensive high-dimensional dynamical systems are coarse-grained into low-dimensional models, the entropic loss of information leads to emergent physics which are dissipative, history-dependent, and stochastic. To machine learn coarse-grained dynamics from time-series observations of particle trajectories, we propose a framework using the metriplectic bracket formalism that preserves these properties by construction; most notably, the framework guarantees discrete notions of the first and second laws of thermodynamics, conservation of momentum, and a discrete fluctuation-dissipation balance crucial for capturing non-equilibrium statistics. We introduce the mathematical framework abstractly before specializing to a particle discretization. As labels are generally unavailable for entropic state variables, we introduce a novel self-supervised learning strategy to identify emergent structural variables. We validate the method on benchmark systems and demonstrate its utility on two challenging examples: (1) coarse-graining star polymers at challenging levels of coarse-graining while preserving non-equilibrium statistics, and (2) learning models from high-speed video of colloidal suspensions that capture coupling between local rearrangement events and emergent stochastic dynamics. We provide open-source implementations in both PyTorch and LAMMPS, enabling large-scale inference and extensibility to diverse particle-based systems.
\end{abstract}
\hfill \break]

\section{Introduction}

Multiscale \commentnat{phenomena governing complex systems, from quantum materials to geophysical flows, exhibit emergent behaviors in which small-scale, potentially stochastic physics influences widely separated spatiotemporal scales. For example, quantum effects govern ferromagnetic ordering in magnetic materials \cite{lim2018improper}, while topological entanglement leads to anomalous diffusion in polymer melts \cite{o2020topological}. In geophysics, the bulk evolution of soil landscapes can be tied to microstructural events such as jamming transitions in dense suspensions \cite{jerolmack2019viewing,kostynick2022rheology}. Bridging these disparate scales is an outstanding challenge for multiscale simulation, as computationally tractable strategies must avoid resolving prohibitively short spatiotemporal scales while preserving their influence on resolved ones. This process is typically referred to as ``coarse-graining" and spans multiple applications. For example, in molecular physics, the resolution of electron motion by density functional theory (DFT) or vibrational motion by molecular dynamics (MD)  would mandate $10^{12}-10^{15}$ timestep simulations to resolve $O(1) s$  timescales, and many works aim to identify coarse-grained representations, either empirically by ansatz \cite{groot1997dissipative}, through formal procedures \cite{noid2008multiscale,izvekov2005multiscale}, or with machine learning (ML) to supervise pair potentials from DFT data \cite{behler2007generalized}. In turbulence, full resolution of the energy cascade requires prohibitive resolution of the Kolmogorov scale; in practice, coarse-graining to either the Taylor microscale (in large eddy simulation/LES) or integral length scale (in unsteady Reynolds averaged simulation/URANS) provides tractable simulation; turbulence literature similarly reflects a trend from empirical ansatz \cite{smagorinsky1963general}, to formal coarse-graining procedures \cite{germano1991dynamic,parish2017jcp,li2014construction}, to machine-learned LES/RANS models \cite{ling2016reynolds,duraisamy2021perspectives,kochkov2021machine}. Our aim is to build a generally applicable framework that can machine learn a stochastic dynamical system governing large scale, easily measurable, dynamics while preserving the essential statistical character of unresolved physics.}

\begin{figure*}[t!]
\centerline{\includegraphics[width=\linewidth]{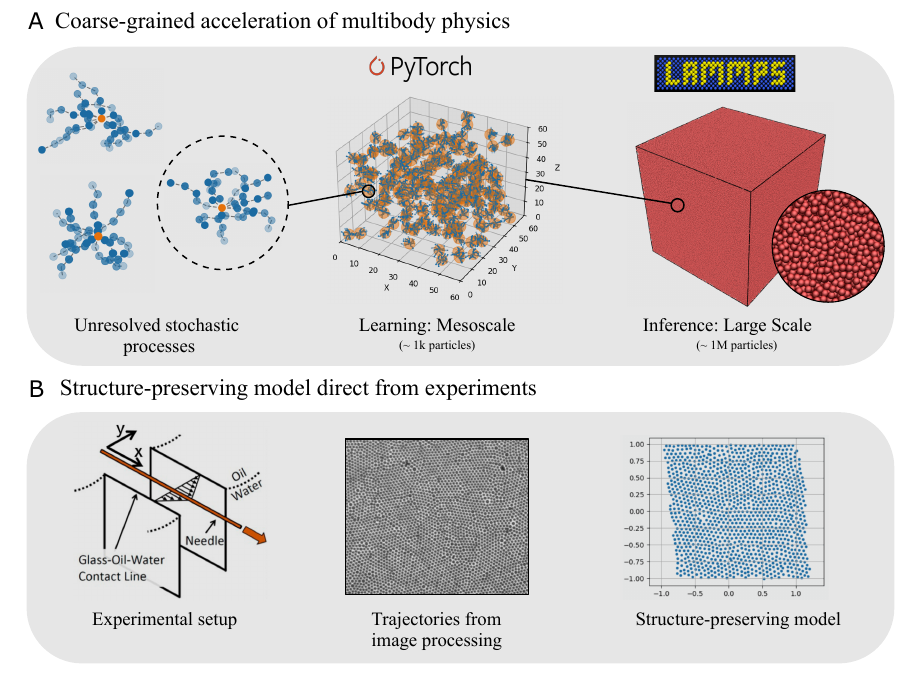}}
\caption{Exemplar applications using data-driven particle dynamics to bridge scales in simulated (top) and experimental (bottom) datasets. (A) Detailed simulation of polymers fully resolve stochastic fluctuations (left), in-silico experiments of a small domain supervise model discovery (middle), yielding a data-driven model for non-equilibrium bulk response solved in a massive parallel simulator (right). (B) A 2D suspension in an oscillatory shear flow provides a rich multiphysics/multiscale exemplar incorporating lubrication, electrostatics, contact, and stochasticity (left). Trajectories of particle data are extracted using computer vision techniques, providing complete top-down (but noisy) data to supervise model collection (center), ultimately yielding a model which predicts linkages between structure and emergent dynamics (right). \label{fig:overview}}
\end{figure*}

\commentnat{We address the challenge by building a unified ``top-down" coarse-graining framework which can machine learn dynamics from partial observations of coarse-grained variables without labels for unobservable states, using bracket structure to build thermodynamically consistent representations of unobservable fluctuations. The current machine learning paradigm is largely ``bottom-up", where data and simulations must be collected at the most fundamental level to understand emergent dynamics \cite{jin2022bottom}. This ab initio strategy naturally yields a potentially unwieldy hierarchy of models, linking descriptions of physics that bridge submolecular, molecular, mesoscopic, continuum, and systems/network scales. This is particularly relevant for experimental systems in which velocimetry or digital image correlation may yield observable fields but unobserved fields are either intractable or impossible to measure (e.g. material structure or entropy, respectively); extracting predictive models direct from experiment is the primary goal of this work. Following the trend of self-supervised learning which has recently emerged in the machine learning community, we demonstrate how geometric structure in the unresolved physics at lower levels can be exploited to infer their effect on higher levels of the hierarchy without the need for labeled data. This enables one to e.g. learn a continuum scale model without DFT/MD labels, subverting the prevailing wisdom that top-down approaches cannot capture unresolved statistical features \cite{jin2022bottom}. }

\commentnat{For a machine learned model to credibly describe coarse-grained representations of systems, emergent thermodynamic and stochastic structure must be preserved at a discrete level, independent of data availability, optimization error, or model form error. Several frameworks offer candidate mathematical formalisms to identify this requisite structure. The Mori-Zwanzig (MZ) formalism \cite{zwanzig1960ensemble,mori1965transport,grabert2006projection,zwanzig2001nonequilibrium} interprets coarse-graining as a projection of full-order dynamics onto coarse-grained variables. Through projections, coarse-graining induces three terms: a mean-field coarsened dynamics; a non-Markovian/history-dependent dissipative memory kernel; and a stochastic term accounting for work done by unresolved physics. Although the full-order system may be reversible, MZ establishes the need to consistently treat non-Markovian stochasticity and dissipation; the work done on the system by fluctuations must exactly balance dissipation to ensure energy is conserved, providing so-called \textit{detailed fluctuation-dissipation balance} \cite{callen1951irreversibility,kubo1966fluctuation}. In non-equilibrium systems, this provides a formalism to model the transfer of energy from unresolved non-equilibrium fluctuations and resolved state variables via dissipative mechanisms. Through fluctuation-dissipation balance, we aim to infer the statistical character of fluctuations through labels of observable dissipative processes.}

\commentnat{While powerful, the MZ formalism often requires restrictive Markovian assumptions to obtain tractable models \cite{chorin2000optimal}. The metriplectic/GENERIC formalism \cite{morrison1984bracket,morris1997modeling,grmela1997dynamics,ottinger1997dynamics,morrison2024inclusive} provides an algebraic alternative where processes are described via reversible and irreversible brackets whose matrix representations admit natural parameterizations in tensor-based ML frameworks like PyTorch. A stochastic metriplectic model takes the form:}

\begin{equation}\label{eq:GENERIC_stochastic}
d\bs{x} = \left[\bs{L}\dpar{E}{\bs{x}} + \bs{M}\dpar{S}{\bs{x}} + k_B\dpar{}{\bs{x}}\cdot\bs{M}\right]dt + d\tilde{\bs{x}}
\end{equation}
\commentnat{for: state $\bs{x}$, energy and entropy functionals $E(\bs{x})$ and $S(\bs{x})$, Poisson matrix $\bs{L}(\bs{x})=-\bs{L}^T$, friction matrix $\bs{M}(\bs{x})=\bs{M}^T$, Boltzmann constant $k_B$ and fluctuation $d\tilde{\bs{x}}$. Under the degeneracy constraint $\bs{L}\bs{\nabla}S=\bs{M}\bs{\nabla}E=\bs{0}$, one achieves constant energy and non-decreasing entropy in the $k_B = 0$ case. If fluctuations are posed which satisfy}
\begin{equation}
\mathbb{E}\left[d\tilde{\bs{x}}d\tilde{\bs{x}}^T\right] = 2k_B\bs{M}dt,
\end{equation}
\commentnat{detailed fluctuation-dissipation balance is achieved so that energy is preserved despite stochastic forcing. In the conventional literature, an analytically derived model is identified as metriplectic and inherits theoretical benefits. The current work instead poses a parameterized metriplectic bracket so that, \textit{by construction}, any fit model preserves detailed balance and preserves statistical thermodynamical structure. This provides a linkage between observable dissipative processes and unobserved statistical structure.}

\commentnat{In this manuscript, we prescribe a general construction of parametric $E,S, \bs{L}, \bs{M}$ as perturbations of non-canonical Hamiltonian systems which serves as a broadly applicable template, before specializing to a particle Hamiltonian appropriate for learning material/Lagrangian descriptions suitable for experimental particle tracking, data-driven MD, and large deformation solid mechanics. The particle bracket is adapted from classical particle methods \cite{monaghan1992smoothed,espanol2003smoothed} and provides interpretable predictions of volume, internal energy, pressure, and temperature via an equation of state (EOS) which discretely preserves Gibbs' relations. Our approach provides a new capability to recover a model from time-series observations of particle trajectories which extrapolates beyond equilibrium training data with the capacity to recover a conventional EOS. From a computational perspective, we provide an open-source LAMMPS implementation that exhibits weak scaling, with the potential to evaluate these models at the exascale. }

\commentnat{This development enables the design of new experimental campaigns in which predictive models are rigorously extracted from particle-tracking data. Such a capability is particularly promising for systems which defy conventional first-principles modeling. Here, we exemplify this approach using an experimental dense colloidal suspension composed of nearly $5\times10^4$ particles undergoing shear deformation (see Fig. \ref{fig:overview}B).  For such systems, predictive models have proven elusive \cite{galloway2022relationships,galloway2020scaling, charbonneau2017glass, manning2011vibrational}, with limited progress using supervised learning to identify configurational features that correlate with emergent suspension dynamics \cite{cubuk2015identifying,cubuk2015identifying,richard2020predicting}. This challenging but tractable modeling task could be done for a few particles, but cannot scale up to millions of particles with conventional simulation. We demonstrate with our top-down approach a dynamical model that predicts deviations from linear shear profiles associated with rare dislocation events driven by force dipoles, establishing a nontrivial linkage between structure and emergent dynamics. }

\section{Prior literature} 

\subsection{Candidate coarse-graining theories}

\commentnat{A number of theoretical frameworks provide a principled foundation for coarse-graining. Mori and Zwanzig's initial works \cite{mori1965transport,zwanzig1960ensemble} were generalized to optimal prediction extensions \cite{chorin2000optimal} and the metriplectic/GENERIC \cite{morrison1984bracket,ottinger1997dynamics} theories; Ottinger showed \cite{ottinger1998general} that GENERIC can be derived as a special case of MZ.  The renormalization group (RNG) theory \cite{yakhot1992rg}, as well as information theoretic frameworks like relative entropy, iterative Boltzmann,  and MaxEnt \cite{shell2016coarse,mashayak2015relative,chaimovich2011coarse, jin2022bottom}, and more recent stochastic  \cite{esposito2012stochastic} provide alternative formalisms to coarse-grain. In the mechanics community, homogenization theory and the more finite element-centric variational multiscale (VMS) method offer related projection-based variational techniques but lack an explicit tie to non-equilibrium thermodynamical/statistical behavior \cite{hughes1998vms,hughes2004multiscale}. }

\commentnat{In fluids, these frameworks have provided a theory-informed basis to develop turbulence and constitutive models, including representative earlier works in RNG \cite{yakhot1986renormalization,smith1998rgreview} and more recent MZ \cite{parish2017jcp,parish2017non,li2014construction,stinis2007higher} and VMS \cite{john2005finite,hughes2002variational,pradhan2020variational} works. In molecular dynamics, coarse-graining is arguably more mature, with excellent review articles \cite{jin2022bottom, schilling2022coarse} elucidating the bottom-up/top-down divide, summarizing recent trends, and providing MD-specific ties to information theory. While some coarse-grained MD methods conjecture effective potentials in an ad hoc manner \cite{groot1997dissipative}, several works have used MZ or variational/information-theoretic frameworks to provide rigorous justification of their functional form \cite{li2015incorporation,izvekov2021mori,noid2008multiscale}.}

\commentnat{The present work requires a theory able to provide a machine-learnable ansatz for dynamics which can: prescribe the reversible, irreversible, and stochastic dynamics that emerge in coarse-grained descriptions; guarantee detailed fluctuation-dissipation balance; provide discrete notions of the fist and second laws of thermodynamics; support a tractable computation amenable to end-to-end automatic differentiation; and identify internal variables describing unobservable physics. Interestingly, \cite{jin2022bottom,dama2013theory} advocate ``ultra-coarse-grained" state dynamics, where the state is enriched with an additional field which encodes unresolved structure; the self-supervised entropic variable identified in the present framework plays an analogous role.}

\subsection{Supervised learning: force matching, missing physics and closure modeling}

\commentnat{In data-driven modeling, the inference of parameterized dynamics $\dot{X} = F(X,t;\theta)$ from either timeseries $\left\{X_i(\cdot)\right\}$ or labeled state-force pairs $\left\{X(t_i),F(t_i)\right\}$ is sometimes referred to as \textit{force matching}. When a force is partially known but missing terms due to a coarse-graining procedure, different communities refer to this as \textit{closure modeling}, \textit{missing physics}, or \textit{residual correction}. A typical configuration is to discretize in time and infer parameters $\theta$ by minimizing the residual associated with a given timestep (e.g. $\min_\theta || X^{n+1} - X^n - F(X^n,t^n;\theta)||^2$), to integrate the parameterized ODE and train over rolled-out trajectories, or to employ equality constrained optimization. For all configurations, it is assumed that labels for a fully observable state $X$ are available. }

\commentnat{In the MD community, while conventional approximants \cite{ercolessi1994interatomic,izvekov2005multiscale} or atomic cluster expansions \cite{drautz2019atomic} have provided reliable bases for regressing $F$, neural architectures have surged in popularity due to their ability to extract information from high-dimensional inputs. While early works used neural networks \cite{behler2007generalized} or Gaussian processes \cite{bartok2010gaussian} to regress univariate force-distance relationships, contemporary graph architectures can process many-body geometric configurations of either particles or continuum meshes \cite{gilmer2017neural,sriram2022towards,zhang2018deep,thomas2018tensor,weiler20183d,kondor2018clebsch,brandstetter2021geometric,schutt2017schnet,gasteiger2020directional,liu2022spherical}. Autoencoders can identify latent representations from bottleneck structure, providing candidate representations for reduced dynamics \cite{brunton2021modern,wang2019coarse,wang2019learning,ilnicka2023designing}. In most instances, training data takes the form of ab initio DFT calculations that give a complete prescription of $F$ from molecular configuration in the bottom-up configuration.}

\commentnat{Recent work has shown that low force-matching residuals are insufficient benchmarks for coarse-grained models, while structural descriptors (e.g., radial distribution functions) and dynamical quantities (e.g., velocity autocorrelation functions, diffusion coefficients) are necessary benchmarks to qualify thermodynamic and stability performance \cite{fu2022forces}. Without physically consistent coupling between dynamics and fluctuations, many ML methods rely on \textit{backmapping} \cite{jin2022bottom}, employing generative architectures trained on ab initio data to reproduce stochastic behavior in both continuum \cite{li2024synthetic, du2024conditional, sardar2024spectrally} and molecular \cite{gebauer2019symmetry,noe2019boltzmann,xu2022geodiff} systems. Here, we demonstrate that metriplectic brackets prescribe thermodynamically valid fluctuations without ab initio training data, whereas popular physics-agnostic GNNs fail to capture correct statistical benchmarks. Jin et al \cite{jin2022bottom} highlight a particular challenge in bottom-up coarse graining: the choice of coarse-grained variables and the identification of a model-form compatible with target ensemble statistics require careful constructions, often prompting overly simplistic Langevin descriptions of missing friction. Our approach provides a rich class of automatically compatible stochastic/dissipative processes.}

\subsection{Structure-preserving algorithms}
\commentnat{While the incorporation of physical principles has broadly been recognized to improve scientific machine learning, recent works have distinguished between \textit{physics-informed} techniques \cite{lagaris1998artificial,karniadakis2021physics} which impose physics by introducing penalties, compared to \textit{structure-preserving} which integrate physics either by construction or via equality constraints \cite{sanderse2024scientific}. Among other pathologies, the former loosely endows properties only to within optimization error, and therefore cannot provide detailed fluctuation dissipation balance \cite{wang2021understanding,wang2023expert}. The latter has particularly been considered in computer vision and molecular dynamics, where substantial effort focuses on equivariant/invariant architectures which provide uniform predictions as input images/graphs are rotated or translated \cite{liao2022equiformer,liao2023equiformerv2,klein2023equivariant,passaro2023reducing,wang2024enhancing,fu2022forces,vignac2023midi}. This is distinct from the notion of geometry in variational/geometric mechanics, where symmetries of governing brackets, rather than geometric transformations, guarantee notions of conservation or dissipation. For this work, we consider the latter when we define a data-driven coarse-graining approach to be structure preserving. In Fu's assessment of ML-MD benchmarks \cite{fu2022forces}, the former is shown to meet more stringent benchmarks but with millions of parameters; we achieve the same performance but with tens of thousands of parameters (a $10-1000\times$ improvement in parameter efficiency, and therefore inference time).}

\commentnat{Enforcing these symmetries are commonplace for reversible systems, where Hamiltonian networks \cite{greydanus2019hamiltonian,david2023symplectic,chen2019symplectic}, Lagrangian networks \cite{cranmer2020lagrangian}, and similar constructions yield architectures which conserve energy by design. Irreversible systems are only recently considered in the scientific ML literature; while early works considered physics-informed soft constraints \cite{hernandez2021structure,hernandez2022thermodynamics}, strong imposition of dissiptive structure has been considered only more recently with variational or port-Hamiltonian descriptions \cite{zhong2020dissipative,sosanya2022dissipative,xiao2024generalized,huang2022variational,yu2021onsagernet, desai2021port}. Some consider metriplectic brackets \cite{lee2021machine,zhang2022gfinns,gruber2023reversible}, but with a construction that scales superlinearly in the degrees of freedom so that structure-preserving dynamics have been shown only for thousands of degrees of freedom. The present work prescribes metriplectic brackets in linear $O(N)$ complexity, allowing us to demonstrate weak parallel scaling at inference time in LAMMPS \cite{plimpton1995fast}. Open source code is provided  (\url{https://github.com/PIMILab}) for both learning models in PyTorch and performing inference at massive scales in LAMMPS.  The current work thus amounts to the first demonstration of learned metriplectic dynamics with O(N) scaling enabling massively parallel inference.}

\section{Results}
 \commentnat{We consider a diverse set of physics to highlight important consequences of structure preservation in our approach:}
 \begin{itemize}
 \item \textbf{Ideal gas.} Recovery of a continuum fluid system with a conventional equation of state allows us to demonstrate out-of-distribution performance for non-equilibrium flow predicted from equilibrium training data and viceversa.
 \item \textbf{Star polymer.} A conventional coarse-grained MD demonstrates performance superior to a well-calibrated empirical coarse-grained model and recovery of structural/dynamic statistics where conventional GNNs fail.
 \item \textbf{Viscoelastic solid.} The generality of the framework is highlighted by predicting a solid system, proving generalization to other energy functionals (i.e. strain energy) to treat continuum systems.
 \item \textbf{Jammed colloidal system.} Illustration of full capability of framework to bridge structure-dynamics to predict rare events in a complex multiphysics system from experimental data; lubrication/contact/electrostatic interactions governing defect dynamics are unresolved, yet learned stochastic dynamics capture their impact on statistics of defect dynamics.
 \end{itemize}

\commentnat{In light of \cite{fu2022forces}, which stresses the importance of structural/dynamical statistics benchmarks, we include comparisons to popular GNN architectures when predicting the Radial Distribution Function (RDF), Velocity Autocorrelation Function (VACF), and Mean Squared Displacement (MSD).}

\begin{itemize}
    \item \textbf{Graph Network-based Simulator (GNS).} We take Graph Network-based Simulator  \cite{sanchez2020learning} as a representative state of the art ``black-box" GNN model commonly used for scientific machine learning tasks.
    \item \textbf{Stochastic Graph Network-based Simulator (GNS-SDE).} We add an additional message passing network which imposes multiplicative Gaussian noise and uses a maximum likelihood loss  \cite{dridi2021learning}  to ensure improved performance is not due to maximum likelihood training.
    \item \textbf{Conventional coarse-grained model (DPD).} We use the same maximum likelihood training to calibrate the parameters of a standard DPD model \cite{groot1997dissipative}, providing a respresentative baseline for accuracy accessible to a conventional ad hoc coarse graining method. While accurate for an ideal gas, for the more complicated systems the calibrated DPD provides an overly "soft" potential, with lower amplitude peaks in RDF compared to our learned models.
\end{itemize}
Details for all benchmarks and comparison methods may be found in \myappref{app:hyperparams}.

\subsection{Ideal gas}

We first consider an ideal gas in a 3D periodic box over three different forcings to demonstrate out-of-distribution generalization, sketched in \myfigref{fig:ideal_gas}. Training is performed for vortex forcing (\textsc{Taylor-Green}, \cite{taylor1937mechanism}), while inference is tested under configurations unseen during training for an unforced system (\textsc{Self-diffusion}) and shear forcing via Lees-Edwards boundary conditions (\textsc{Shear flow}, \cite{lees1972computer}).

\begin{figure}[h]
\centerline{\includegraphics[width=\linewidth]{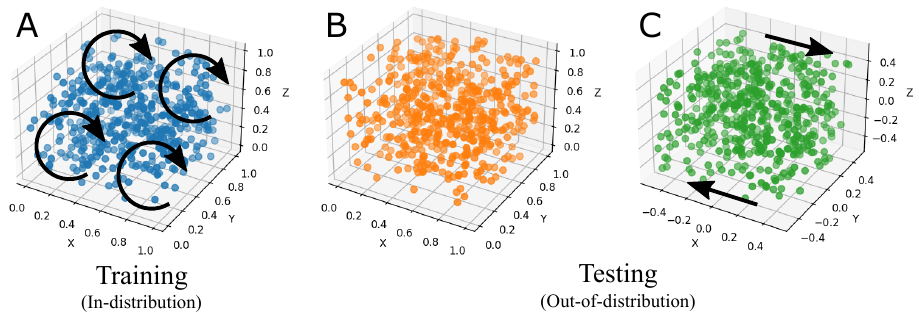}}
\caption{\textsc{Ideal Gas} datasets. Flow configurations for (A) \textsc{Taylor-Green} training dataset, and (B) \textsc{Self Diffusion} and (C) \textsc{Shear Flow} testing datasets. We confirm generalization of non-equilibrium statistics in (B) and (C) despite distinct flow conditions held out during training.\label{fig:ideal_gas}}
\end{figure}

\myfigref{fig:taylor_green_app} demonstrates good agreement between the proposed methodology and spatial statistics for long extrapolation rollouts. On the other hand, both GNS and GNS-SDE fail to capture the statistics of the flow, and are outperformed by classical DPD technique. We also present in \mytabref{tab:results_app} the relative error results of the same model tested under two different periodic boundary conditions without any additional training. We can observe that our method maintains its performance under extrapolation to unseen flows while the traditional GNN methods diverge in longer rollouts.

We further include two additional tests to provide visual intuition of our algorithm's performance in inference; these cases have not been held out during training.   \myfigref{fig:shear_flow_app} shows how the proposed method achieves the correct average velocity shear profile with respect to the reference solution for the \textsc{Shear Flow} dataset. \myfigref{fig:shear_flow_app} shows the inferred \textsc{Self-Diffusion} mixing of a domain scaled up to $L=2$. 

\commentquercus{To demonstrate the capability to predict non-equilibrium dynamics when trained under equilibrium conditions, we also trained the model under quiescent equilibrium conditions and evaluated its rollout performance under externally driven flows. As shown in \myfigref{fig:ideal_gas_app} the equilibrium-trained model reproduces the key dynamical statistics, confirming that the metriplectic structure generalizes beyond its training regime. However, its performance slightly deteriorates when trained on both equilibrium and non-equilibrium data due to the model not having encountered the broader velocity distributions present in non-equilibrium states.}

\subsection{Challenging coarse graining experiment: Star polymer}

A star polymer melt in a 3D periodic box provides a conventional coarse-grained MD exemplar, for which bottom-up ML is natural, but the feasibility to capture statistics in a ``top-down" configuration is unclear. Each polymer is composed of a core atom and 10 arms bonded by a \commentquercus{finite extensible nonlinear elastic (FENE)} potential, whereas the interaction between polymers is modeled by a Lennard-Jones potential. Two different internal configurations are considered: 1 and 5 beads per arm, which results in less and higher coarse-graining levels (\myfigref{fig:star_polymer}). The fully resolved datasets are generated using LAMMPS \cite{plimpton1995fast} and then coarse-grained by computing the center of mass position and velocity of each polymer. 

\begin{figure}[h]
\centerline{\includegraphics[width=\linewidth]{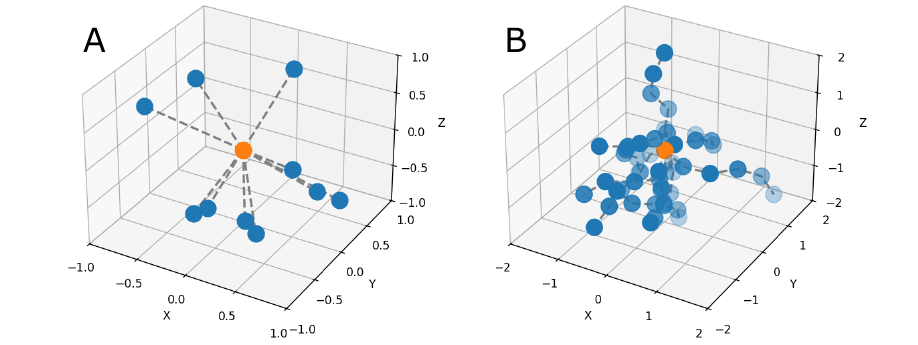}}
\caption{Representative star polymers coarse grained into a single data-driven particle in \textsc{Star Polymer} datasets. (A) \textsc{Star Polymer 11} with 1 core, 10 arms and 1 bead per arm. (B) \textsc{Star Polymer 51} with 1 core, 10 arms and 5 beads per arm.\label{fig:star_polymer}}
\end{figure}

The results are shown in \myfigref{fig:star_polymers} and \mytabref{tab:results_app}. We observe that, similar to the previous dataset, the correct statistics are obtained while GNN baselines fail to predict statistical structure. \commentquercus{Both GNS and GNS-SDE predict a nearly flat RDF, meaning that the prediction lacks spatial structure and correlations. Both the VACF and MSD deviate from the ground truth by orders of magnitude, which suggests that they overestimate the velocity and dissipation. Similarly, the deviation in the MSD shows an overestimation of the diffusion regime, totally misrepresenting the stochastic dynamics of the system. This highlights the major qualitative impact physical structure preservation has on even simple systems, and that ``black-box" techniques are poor candidates for coarse-graining, as the well-calibrated DPD model outperforms these methods with drastically fewer parameters. In contrast, our method outperforms the well-calibrated model by a factor of $2\times-10\times$ in relative error.}

\begin{figure*}
\centering
\begin{tikzpicture}
\pgfplotsset{width=\textwidth, height=8cm}

\begin{axis}[
    width=0.33\textwidth,
    height=4cm,
    ylabel=VACF,
    grid=major, 
    grid style={dashed,gray!30}, 
    tick label style={font=\tiny}, 
    font=\small,
    ylabel style={align=center, yshift=-1.5em}, 
    restrict y to domain*=0:0.3,
    legend style={font=\tiny,},       
]
\addplot [color=black, thick, dashed] table[x=t_vec, y=VACF_gt] {plots/star_polymer2/star_polymer_11.txt};
\addplot [color=tabblue, thick] table[x=t_vec, y=VACF_net] {plots/star_polymer2/star_polymer_11.txt};
\addplot [color=taborange, thick] table[x=t_vec, y expr={ ( \thisrow{VACF_net} <= 0 || \thisrow{VACF_net} >= 0.3 ) ? nan : \thisrow{VACF_net} },] {plots/star_polymer2/star_polymer_11_gns.txt};
\addplot [color=tabgreen, thick] table[x=t_vec, y expr={ ( \thisrow{VACF_net} <= 0 || \thisrow{VACF_net} >= 0.3 ) ? nan : \thisrow{VACF_net} },] {plots/star_polymer2/star_polymer_11_gns_sde.txt};
\addplot [color=tabred, thick] table[x=t_vec, y=VACF_net] {plots/star_polymer2/star_polymer_11_dpd.txt};
\end{axis}

\begin{axis}[xshift=0.33\textwidth,
    width=0.33\textwidth,
    height=4cm,
    ylabel=RDF,
    grid=major, 
    grid style={dashed,gray!30}, 
    tick label style={font=\tiny}, 
    font=\small,
    ylabel style={align=center, yshift=-1.5em}, 
    legend columns=5,              
    transpose legend=false,
    legend style={
    at={(0.5,1.05)},  
    anchor=south,
    legend cell align=left,
    font=\scriptsize,
    draw=none,                 
    /tikz/every even column/.append style={column sep=0.3cm}
    }
]
\addplot [color=black, thick, dashed] table[x=r_RDF, y=RDF_gt] {plots/star_polymer2/star_polymer_11.txt};
\addplot [color=tabblue, thick] table[x=r_RDF, y=RDF_net] {plots/star_polymer2/star_polymer_11.txt};
\addplot [color=taborange, thick] table[x=r_RDF, y=RDF_net] {plots/star_polymer2/star_polymer_11_gns.txt};
\addplot [color=tabgreen, thick] table[x=r_RDF, y=RDF_net] {plots/star_polymer2/star_polymer_11_gns_sde.txt};
\addplot [color=tabred, thick] table[x=r_RDF, y=RDF_net] {plots/star_polymer2/star_polymer_11_dpd.txt};

\addlegendimage{black, thick, dashed}
\addlegendentry{GT}
\addlegendimage{tabblue, thick}
\addlegendentry{Ours}
\addlegendimage{taborange, thick}
\addlegendentry{GNS}
\addlegendimage{tabgreen, thick}
\addlegendentry{GNS-SDE}
\addlegendimage{tabred, thick}
\addlegendentry{DPD}

\end{axis}

\begin{axis}[xshift=0.66\textwidth,
    width=0.33\textwidth,
    height=4cm,
    ylabel=MSD,
    grid=major, 
    grid style={dashed,gray!30}, 
    tick label style={font=\tiny}, 
    font=\small,
    ylabel style={align=center, yshift=-1.5em}, 
    restrict y to domain*=0:6,
]
\addplot [color=black, thick, dashed] table[x=t_vec, y=MSD_gt] {plots/star_polymer2/star_polymer_11.txt};
\addplot [color=tabblue, thick] table[x=t_vec, y=MSD_net] {plots/star_polymer2/star_polymer_11.txt};
\addplot [color=taborange, thick] table[x=t_vec, y expr={ ( \thisrow{MSD_net} <= 0 || \thisrow{MSD_net} >= 6 ) ? nan : \thisrow{MSD_net} },] {plots/star_polymer2/star_polymer_11_gns.txt};
\addplot [color=tabgreen, thick] table[x=t_vec, y expr={ ( \thisrow{MSD_net} <= 0 || \thisrow{MSD_net} >= 6 ) ? nan : \thisrow{MSD_net} },] {plots/star_polymer2/star_polymer_11_gns_sde.txt};
\addplot [color=tabred, thick] table[x=t_vec, y expr={ ( \thisrow{MSD_net} <= 0 || \thisrow{MSD_net} >= 6 ) ? nan : \thisrow{MSD_net} },] {plots/star_polymer2/star_polymer_11_dpd.txt};
\end{axis}

\node[anchor=north west, font=\sffamily\bfseries\Large] at (-1,2.5) {A};
\node[anchor=north west, font=\sffamily\bfseries\Large] at (-1,-0.5) {B};

\begin{axis}[
    xlabel=$t$,
    width=0.33\textwidth,
    	yshift=-3.2cm,
    height=4cm,
    ylabel=VACF,
    grid=major, 
    grid style={dashed,gray!30}, 
    tick label style={font=\tiny}, 
    font=\small,
    ylabel style={align=center, yshift=-1.5em}, 
    xlabel style={yshift=0.5em},
    restrict y to domain*=0:0.3,
]
\addplot [color=black, thick, dashed] table[x=t_vec, y=VACF_gt] {plots/star_polymer2/star_polymer_51.txt};
\addplot [color=tabblue, thick] table[x=t_vec, y=VACF_net] {plots/star_polymer2/star_polymer_51.txt};
\addplot [color=taborange, thick] table[x=t_vec, y expr={ ( \thisrow{VACF_net} <= 0 || \thisrow{VACF_net} >= 0.3 ) ? nan : \thisrow{VACF_net} },] {plots/star_polymer2/star_polymer_51_gns.txt};
\addplot [color=tabgreen, thick] table[x=t_vec, y expr={ ( \thisrow{VACF_net} <= 0 || \thisrow{VACF_net} >= 0.3 ) ? nan : \thisrow{VACF_net} },] {plots/star_polymer2/star_polymer_51_gns_sde.txt};
\addplot [color=tabred, thick] table[x=t_vec, y=VACF_net] {plots/star_polymer2/star_polymer_51_dpd.txt};
\end{axis}

\begin{axis}[xshift=0.33\textwidth,
	yshift=-3.2cm,
    width=0.33\textwidth,
    height=4cm,
    xlabel=$r$,
    ylabel=RDF,
    grid=major, 
    grid style={dashed,gray!30}, 
    tick label style={font=\tiny}, 
    font=\small,
    ylabel style={align=center, yshift=-1.5em}, 
    xlabel style={yshift=0.5em},
]
\addplot [color=black, thick, dashed] table[x=r_RDF, y=RDF_gt] {plots/star_polymer2/star_polymer_51.txt};
\addplot [color=tabblue, thick] table[x=r_RDF, y=RDF_net] {plots/star_polymer2/star_polymer_51.txt};
\addplot [color=taborange, thick] table[x=r_RDF, y=RDF_net] {plots/star_polymer2/star_polymer_51_gns.txt};
\addplot [color=tabgreen, thick] table[x=r_RDF, y=RDF_net] {plots/star_polymer2/star_polymer_51_gns_sde.txt};
\addplot [color=tabred, thick] table[x=r_RDF, y=RDF_net] {plots/star_polymer2/star_polymer_51_dpd.txt};
\end{axis}

\begin{axis}[xshift=0.66\textwidth,
	yshift=-3.2cm,
    width=0.33\textwidth,
    height=4cm,
    xlabel=$t$,
    ylabel=MSD,
    grid=major, 
    grid style={dashed,gray!30}, 
    tick label style={font=\tiny}, 
    font=\small,
    ylabel style={align=center, yshift=-1.5em}, 
    restrict y to domain*=0:6,
    xlabel style={yshift=0.5em},
]
\addplot [color=black, thick, dashed] table[x=t_vec, y=MSD_gt] {plots/star_polymer2/star_polymer_51.txt};
\addplot [color=tabblue, thick] table[x=t_vec, y=MSD_net] {plots/star_polymer2/star_polymer_51.txt};
\addplot [color=taborange, thick] table[x=t_vec, y expr={ ( \thisrow{MSD_net} <= 0 || \thisrow{MSD_net} >= 6 ) ? nan : \thisrow{MSD_net} },] {plots/star_polymer2/star_polymer_51_gns.txt};
\addplot [color=tabgreen, thick] table[x=t_vec, y expr={ ( \thisrow{MSD_net} <= 0 || \thisrow{MSD_net} >= 6 ) ? nan : \thisrow{MSD_net} },] {plots/star_polymer2/star_polymer_51_gns_sde.txt};
\addplot [color=tabred, thick] table[x=t_vec, y expr={ ( \thisrow{MSD_net} <= 0 || \thisrow{MSD_net} >= 6 ) ? nan : \thisrow{MSD_net} },] {plots/star_polymer2/star_polymer_51_dpd.txt};
\end{axis}

\end{tikzpicture}
\caption{Correlation metrics for the (A) \textsc{Star polymer 11} and (B) \textsc{Star polymer 51} datasets, establishing recovery of structure and dissipative response. \commentquercus{Other deep learning baselines (GNS, GNS-SDE) fail to recover structural or dynamical statistics and produce unstable results (outside the plotting range). In contrast, a classical model such as DPD captures some dynamical information but inaccurately.}}
\label{fig:star_polymers}
\end{figure*}

\subsection{Weak scaling study}

\mytabref{tab:time} shows the per timestep performance of the tested models in a serial implementation. The models have been optimized with the TorchScript compiler and run on a single GPU over a $T=2000$ timestep run. As expected, the conventional DPD model exhibits the highest computational efficiency among the models evaluated, primarily due to its minimal architecture with only four trainable parameters. Our method is within a factor of two of GNS/GNS-SDE, as we employ multiple internal neural networks; illustrating that the current framework is of comparable computational complexity to conventional ML pair potentials. Importantly, all coarse-grained models achieve significantly faster runtimes than the fully resolved dynamics, demonstrating the computational benefits of coarse-graining.

Given a trained model, we provide code to perform inference in LAMMPS \cite{plimpton1995fast} as a custom pair style. LAMMPS is an exascale code, and thus provides a means of taking models trained on small systems and performing inference on arbitrarily large numbers of particles. In \myfigref{fig:scaling} we demonstrate weak scaling up to 8 million coarse-grained particles over 64 processors. 

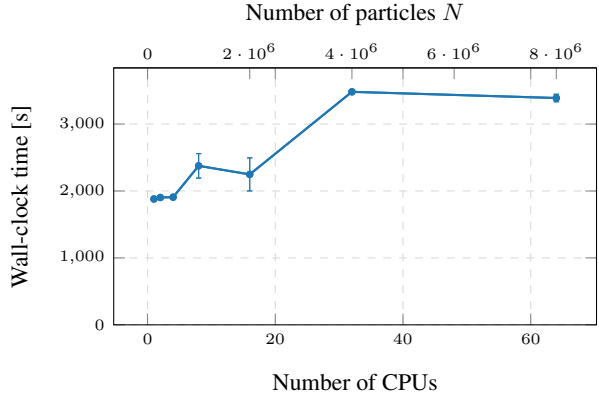
\begin{figure}
    \centering
    \begin{tikzpicture}
\pgfplotsset{width=8cm, height=5cm}
    \begin{axis}[xlabel={Number of CPUs}, 
ylabel={Wall-clock time [s]}, 
                 grid=major, 
                 grid style={dashed,gray!30}, 
                 font=\small, 
                 tick label style={font=\tiny},
                 ymin=0]
        \addplot+[tabblue,thick,mark=*,mark size=1pt,mark options={fill=tabblue}, error bars/.cd,
            y dir=both,
            y explicit] table [col sep=space, 
            x=X1, 
            y=Y, 
            y error=std, 
            header=true] {plots/scaling/data.txt};
            
    \end{axis}
    
    \pgfplotsset{scaled x ticks=false}
    \begin{axis}[axis x line*=top, 
                 xlabel={Number of particles $N$},
                 axis y line= none,
                 font=\small, 
                 tick label style= {font=\tiny}, 
                 xlabel near ticks,
                 ymin=0]
                 
        \addplot+[tabblue,thick,mark=*,mark size=1pt,mark options={fill=tabblue}, error bars/.cd, 
        y dir=both,
        y explicit] table [
        col sep=space, 
        x=X2, 
        y=Y, 
        y error=std,
        header=true] {plots/scaling/data.txt};
        
    \end{axis}
\end{tikzpicture}    
    \caption{Computation time required for a fixed number of CPUs per processor without GPU acceleration. A flat curve denotes weak scaling, demonstrating that systems of arbitrary size may be considered by scaling CPUs proportional to numbers of particles.}
    \label{fig:scaling}
\end{figure}

\section{Extension to general physics problems}

\commentquercus{The previous examples assumed a specific choice of internal energy, }appropriate for a fluid-like system which stores energy only via dilatation.  We now demonstrate how broader classes of physics may be considered by extending the definition of potential energy functional to handle additional inputs by adding a tensorial strain energy dependence appropriate for solid systems. This allows the model to account for reversible effects stored elastically as shear stresses, which are not present in fluids. The derivation of dissipative and noise terms is independent of this addition and remains unchanged; this case serves as an example for how to add additional reversible physics in the framework.

\subsection{Viscoelastic solid}

We consider next a viscoelastic material under shear to demonstrate the capture of strain energy \commentquercus{(\myfigref{fig:results_viscoelastic_app}). This task is equivalent to discovering the underlying constitutive law of the system, encoded in the learned strain energy potential and the dissipative forces.} The predictions of the model\commentquercus{, also reported in \mytabref{tab:results_app},} are in good agreement with the reference finite element solution, up to a large 50\% strain. The model is also able to complete a full cycle starting from rest and returning to its original configuration after a relaxation period. \commentquercus{On the other hand, thermodynamically inconsistent baseline models fail to capture the constitutive behaviour characteristic of solid-like dynamics, resulting in unphysical spatial correlations (\myfigref{fig:results_viscoelastic_app}, denoted as ``Ours no $U^{\text{dev}}$''). Moreover, we have perfomed an ablation study, illustrating that neglecting the new strain energy term precludes prediction of spatial statistics. This is due to the lack of restoring elastic forces which drive the conservative behaviour of deviatoric stress terms.}

\begin{figure}
\centering
\input{plots/results_needle/results_needle}
\caption{\commentquercus{Predicting dynamics and structure of experimental colloidal systems undergoing shear deformation. Dynamics: instantaneous snapshot of jammed colloidal system with particles color-coded by velocity magnitude $|V|$ for (A) experimental dataset and (B) model prediction showing good agreement. (C) Structure: Model can predict the experimental sample radial distribution function (RDF). Neither conventional GNNs or DPD are able to predict structure.}}
\label{fig:results_needle}
\end{figure}

\subsection{Experimental dataset: jammed colloidal system}

Finally, we consider an experimental dataset \PEA{consisting of a dense colloidal particle suspension undergoing cyclic shear deformation \cite{keim2014mechanical,keim2013yielding,galloway2020scaling}. Here, particles are confined to a two-dimensional (2D) monolayer, which allows for visualization of the evolving microstructure; the suspension is a bidisperse (50\% by number) mixture of 4:1 and 5:6 $\mu$m-diameter sulfate latex particles (Invitrogen) adsorbed at the interface between water and oil (decane). This material is subjected to a linear shear deformation in a custom-made interfacial stress rheometer (ISR) \cite{brooks1999interfacial,keim2013yielding,keim2014mechanical}, which provides bulk rheological data. In short, a thin magnetized needle is embedded in the monolayer inside an open channel formed by two walls (\myfigref{fig:overview}B); a static Helmholtz field keeps the needle centered in the channel, while additional electromagnets move the needle back and forth, uniformly shearing the suspension monolayer in the channel. This setup allows for the visualization/characterization of particle position/trajectories using high-resolution optical microscopy. Images are processed using an in-house particle tracking algorithm and velocities are obtained with finite differences. For more details on the experimental setup, please see \cite{keim2013yielding,keim2014mechanical,galloway2020scaling}. This experimental dataset integrates fluid- and solid-like behavior of jammed polycrystaline suspensions, as well as electrostatics, lubrication and contact physics, highlighting the broad applicability of the framework in multiphysics settings.}

\begin{figure}[h]
\centerline{\includegraphics[width=\linewidth]{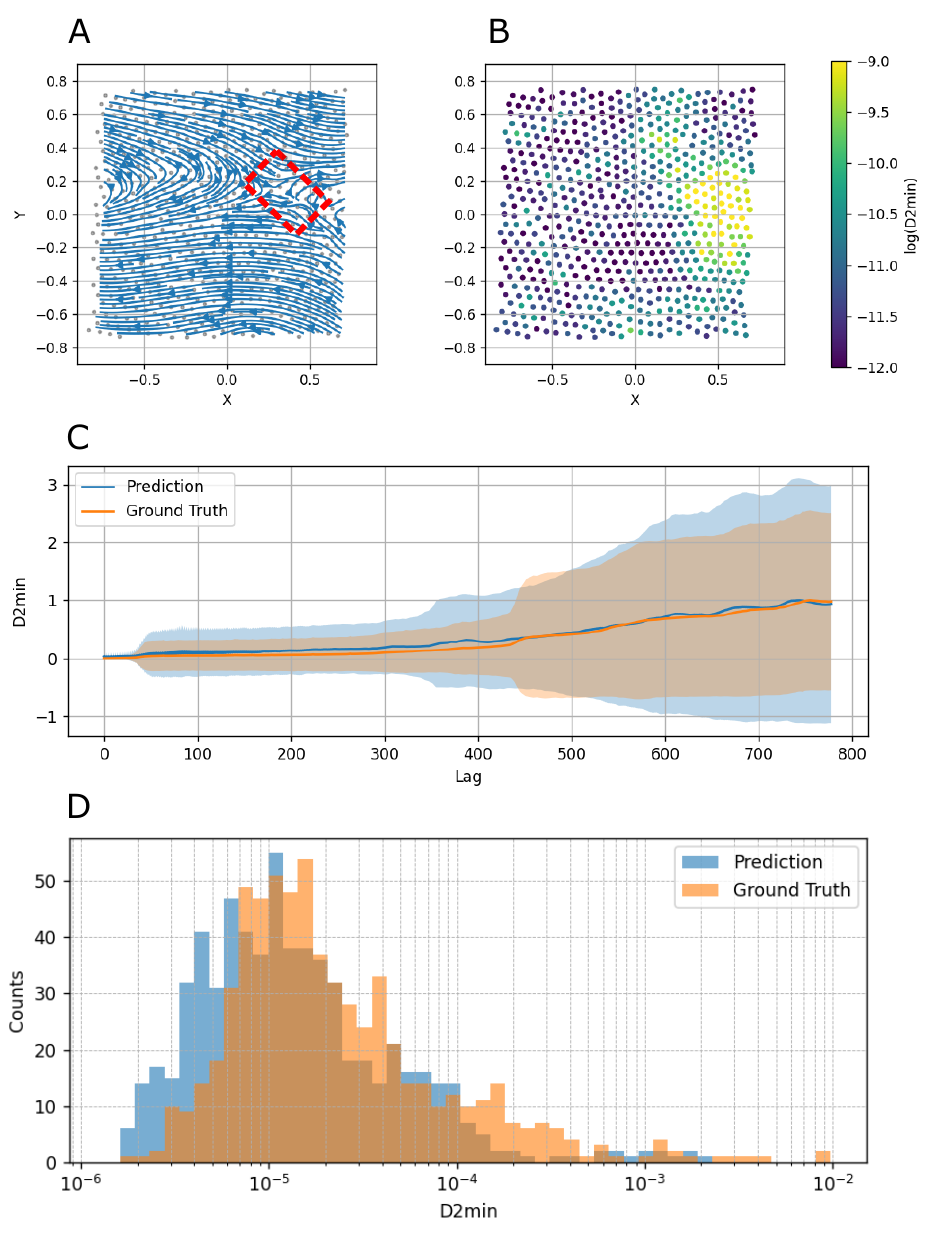}}
\caption{Dynamical properties of the model predictions on the experimental dataset for a domain crop. (A) Streamlines computed from displacements for a lag of 200 snapshots of the first cycle. The red square identifies a hyperbolic motion of a dislocation event fully predicted by the neural network. Predicting the force dipole driving the displacement hopping has been a major challenge.(B) Non-affine squared displacement field ($D^2_{\text{min}}$), where the hotspots represent non-affine deformation clusters, highlighting that the learned model is able to identify local dislocation events. (C) Normalized mean and standard deviation of $D^2_{\text{min}}$ for different lag times on a peak-to-peak cycle. While the magnitude of the force dipole driving the displacement is underestimated, after normalization we observe qualitative agreement with experiment. (D) Histogram of non-affine displacements for lag 150. \label{fig:needle}}
\end{figure}

\PEA{Figure \ref{fig:results_needle} shows snapshots of the experimental (A) and model-predicted (B) sheared colloidal suspension color-coded by their local speed $|V|$. The model successfully reproduces key flow features, including regions of relatively high particle speed near the moving needle (bottom of image) and low particle speed near the stationary wall (top). Figure \ref{fig:results_needle}C shows strong agreement between the experimental and model-predicted radial distribution function (RDF). The RDF indicates that the suspension is disordered at long range but exhibits short-range order, consistent with crystalline domains only a few hundred particles in size. These domains are separated by extensive grain boundaries composed of disordered configurations. Notably, our model outperforms several commonly used methods in predicting the experimental RDF (\myfigref{fig:results_needle}).} 

\PEA{Next, we explore the sample dynamical properties using the proposed framework.  \myfigref{fig:needle}A shows streamlines associated with particle motions during shear.  We find the expected hyperbolic motion of dislocations associated with the force quadrupole, consistent with the geometry of a single plastic event found in incompressible elastic media \cite{picard2004elastic} and in the experimental dataset \cite{keim2014mechanical}. The learned model also predicts local non-affine deformations at the particle scale, as shown in Fig. \ref{fig:needle}B.  Here, we quantify non-affine (particle) rearrangements using the quantity $D^2_{\text{min}}$ \cite{falk_dynamics_1998,keim2014mechanical}. This quantity, computed between 2 instants, measures how much each particle and its nearest ``shells" of neighbors move unlike a continuous elastic solid; it is the mean squared residual displacement after subtracting the best affine transformation \cite{falk_dynamics_1998,keim2014mechanical}. Measures of $D^2_{\text{min}}$ are used to identify discrete, local plastic rearrangements, which are a key feature of the shear transformation zone (STZ) picture of plasticity. Figure \ref{fig:needle}C shows predictions of the mean and standard deviation trends of $D^2_{\text{min}}$ (non-affine deformation) as a function of time for one full deformation cycle, while Fig. \ref{fig:needle}D shows the distribution of non-affine rearrangements at a particular phase (lag 150) in the deformation cycle. Both analyses reveal the good agreement between the experimental dataset and the model. In addition, we compare the statistics of the same dislocation metric histograms of both the reference and the predicted simulations (\myfigref{fig:needle_app}) and observe that, up to a constant factor of $20\%$, the model can reproduce the statistics of local non-affine rearrangements. Overall, the model is mechanistically resolving the force dipole driving displacement events but underestimating the magnitude of the interaction. Nevertheless, the model successfully captures the underlying mechanisms governing dislocation dynamics, which is a notoriously difficult task \cite{galloway2020scaling,manning2011vibrational,richard2020predicting}. Importantly, though these rare events initiate through mechanisms below the coarse-graining length-scale, we obtain correct emergent statistics documented through the $D^2_{min}$ field.}

\section{Discussion}

We have presented a top-down deep learning methodology to construct physical models of conservative, dissipative, and stochastic effects which are consistent with the laws of thermodynamics and detailed fluctuation-dissipation balance. The model is supervised by time-series data for position and velocity only, with a self-supervised technique identifying emergent entropic variables that govern dissipative physics. The technique successfully captures the coarse-grained response of molecular systems using only top-down information, providing a new alternative to traditional bottom-up models which require complete characterization (e.g. via ab initio MD for particles, or other characterization of driving forces). We have further demonstrated that diverse physics may be handled by modifying the internal energy functional, allowing us to capture viscoelasticity and electrokinetic suspensions in a common framework.

The presented methodology is mostly agnostic to the architectural choice of the machine-learnable quantities. The algorithm could be improved with more sophisticated architectures such as cross-attention \cite{waswani2017attention} or conditioning over parametric variables such as differing molecule types. One might also use, e.g. graph autoencoders to directly map from microstructure configuration to the emergent entropy variable rather than using self-supervised learning \cite{lee2019self,zhang2019hierarchical}. We have chosen to demonstrate the framework with simple dense network architectures to highlight the significance of structure-preservation without relying on large datasets or transformers which may not be tractable to curate in data-sparse experimental settings.

Let $\bs{x}$ be a set of a dynamical system state variables. In metriplectic dynamics, the deterministic evolution of $\bs{x}$ is given by
\begin{equation}\label{eq:GENERIC_stochastic}
d\bs{x} = \left[\bs{L}\dpar{E}{\bs{x}} + \bs{M}\dpar{S}{\bs{x}} + k_B\dpar{}{\bs{x}}\cdot\bs{M}\right]dt + d\tilde{\bs{x}}.
\end{equation}
where $E(\bs{x})$ and $S(\bs{x})$ are functionals prescribing generalized energy and entropy of the system, $\bs{L}(\bs{x})=-\bs{L}^T$ is a skew-symmetric Poisson matrix and $\bs{M}(\bs{x})=\bs{M}^T$ is a symmetric positive semi-definite friction matrix. These two terms prescribe the reversible and irreversible dynamics of the system, respectively. By imposing the degeneracy conditions,
\begin{equation}\label{eq:degeneracy}
\bs{L}\bs{\nabla}S=\bs{M}\bs{\nabla}E=\bs{0},
\end{equation}
the dynamics satisfy the following discrete versions of the first and second laws of thermodynamics independent of the choice of state variables; these degeneracy conditions are challenging to impose in machine learning contexts. The stochastic effects $d\tilde{\bs{x}}$ are directly related to the dissipative effects via the fluctuation dissipation theorem \cite{callen1951irreversibility,kubo1966fluctuation}:
\begin{equation}\label{eq:FDT}
d\tilde{\bs{x}}d\tilde{\bs{x}}^T = 2k_B\bs{M}dt,
\end{equation}
where $k_B$ is the Boltzmann constant, meaning that $d\tilde{\bs{x}}$ is an infinitesimal of order $1/2$. Intuitively, the extra mechanical work induced by random fluctuation should be balanced with the dissipation forces to maintain thermodynamical equilibrium. We prove in \myappref{app:proofs} conservation properties and fluctuation-dissipation principles for dynamics of this class.
\begin{proposition}\label{prop:energy_stochastic}
\eqref{eq:GENERIC_stochastic} satisfies energy conservation $dE=0$ if $\bs{M}\bs{\nabla} E=\bs{0}$ and $d\tilde{\bs{x}}\cdot\bs{\nabla}E=0$.
\end{proposition}
\begin{proof}
See proof in \mypropref{prop:energy_stochastic}.
\end{proof}

We next demonstrate a general construction for metriplectic dynamics as a dissipative/stochastic perturbation of an arbitrary non-canonical Hamiltonian system. While we later specialize this to particles, the technique can be applied to any discrete system (e.g. finite element simulations or circuit models). Assume that the state $\bs{x}$ can be partitioned into reversible and irreversible variables $\bs{x} = (\bs{x}_\text{rev},\bs{x}_\text{irr}) $ such that in the absence of irreversibility
\begin{equation}
    d\bs{x}_{\text{rev}} = \bs{L}_{\text{rev}}\dpar{E}{\bs{x}_{\text{rev}}} dt,
\end{equation}
or in other words, $\bs{L}_{\text{rev}}=-\bs{L}_{\text{rev}}^T$ prescribes non-canonical Hamiltonian dynamics for $\bs{x}_{\text{rev}}$. Under the further assumption that the reversible state can be written in terms of generalized position-momentum coordinates such that $\bs{x}_{\text{rev}} = (\bs{q},\bs{p})$ and $\bs{x} = (\bs{q},\bs{p}, \bs{x}_{\text{irr}})$, we define the Poisson matrix as a block system
\begin{equation}
    \bs{L} =
\begin{bmatrix}
\bs{L}_\text{rev} & \bs{0} \\
\bs{0} & \bs{0} \\
\end{bmatrix}
\end{equation}
If we assume that the entropic variable depends only on the irreversible state $S(\bs{x}_{\text{irr}})$ then the degeneracy condition $\bs{L}\bs{\nabla}S=\bs{0}$ holds trivially. To construct noise and dissipation compatible with the other degeneracy conditions $\bs{M}\bs{\nabla}E=\bs{0}$ and $d\tilde{\bs{x}}\cdot\bs{\nabla}E=0$, we express the noise based on the Cholesky decomposition of $\bs{M}=\bs{Q}\bs{Q}^T$ as $d\tilde{\bs{x}} = \bs{Q} d\bs{W}$. We enforce the degeneracy condition via the ansatz,
\begin{equation}
Q_{ji}=C_{ijk}\dpar{E}{\bs{x}_k},
\end{equation}
 where $C_{ijk}$ is a sparse 3-tensor with the skew-symmetry $C_{ijk}=-C_{ikj}$. See \mypropref{prop:3tensor} for the detailed derivation specifying a precise functional form.

With the degeneracy conditions satisfied, we have thus arrived at a construction for a metriplectic system. We construct metriplectic ML architectures via the following ``metriplectic cookbook":  
\begin{enumerate}
	\item Select reversible $\bs{x}_{\text{rev}}$ and irreversible variables $\bs{x}_\text{irr}$ such that $\bs{x}=(\bs{x}_{\text{rev}},\bs{x}_\text{irr})$.
    \item Develop parameterized energies and entropies, assuming functional dependencies $E(\bs{x}_{\text{rev}},\bs{x}_\text{irr})$ and $S(\bs{x}_\text{irr}).$
    \item Develop a parametrization of $d\tilde{\bs{x}}$ (and consequently $\bs{M}$) consistent with the fluctuation-dissipation theorem.
    \item Use the metriplectic \myeqref{eq:GENERIC_stochastic} to determine the dynamical equations.
    \item Use maximum likelihood to fit the model to data, using a marginal distribution over subsets of degrees of freedom to minibatch and maintain $O(1)$/particle complexity training.
    \item Introduce a self-supervised ``teacher network" to identify unobservable $\bs{x}_{irr}$ labels during training.
\end{enumerate}

\subsection{Data-driven particle dynamics}

Consider now a collection of $N$ particles with positions $\bs{r}_i$ and velocities $\bs{v}_i = \frac{d\bs{r}_i}{dt}$ for $i\in V=\{1,...,N\}$. We introduce energy and entropy functionals
\begin{equation}\label{eq:EandS}
E=\sum_i \left[\frac{1}{2}m_i \bs{v}_i^2 + U_i \right],\quad S=\sum_i S_i,
\end{equation}
where: $m_i=m$ is a particle mass, assumed to be constant; $S_i$ is a per particle entropy; $U_i=U(\mathcal{V}_i,S_i)$ is a function prescribing per particle internal energy; and $\mathcal{V}_i(\bs{r}_1,...,\bs{r}_N)$ is a per particle volume which will be prescribed as a function of neighboring particle positions. We identify this with the general framework of the previous section by selecting generalized position and momentum  $(\bs{q}_i,\bs{p}_i)$ as $\bs{x}_{\text{rev},i} =(\bs{r}_i,\bs{v}_i)$ and selecting entropy as the irreversible variable $\bs{x}_{\text{irr},i} = S_i$. 

Recall the first law of thermodynamics $dU=TdS-Pd\mathcal{V}$, from which we may define per particle temperature and pressure as partials of $U$
\begin{equation}\label{eq:EoS}
T_i=\dpar{U_i}{S_i}, \quad P_i=-\dpar{U_i}{\mathcal{V}_i}.
\end{equation}

For these choices, the gradients of energy and entropy are given by
\begin{equation}\label{eq:gradE_gradS}
\dpar{E}{\bs{x}_i} = 
\begin{bmatrix}
\dpar{U}{\bs{r}_i} \\
m_i\bs{v}_i \\
T_i 
\end{bmatrix},\quad
\dpar{S}{\bs{x}_i} =
\begin{bmatrix} 
\bs{0} \\
\bs{0} \\
1 
\end{bmatrix}.
\end{equation}

Substituting these expressions into \myeqref{eq:GENERIC_stochastic} we obtain per particle dynamics
\begin{equation}\label{eq:GENERIC_discrete_part}
\begin{split}
\underbrace{\begin{bmatrix}
d\bs{r}_i \\
d\bs{v}_i \\
dS_i 
\end{bmatrix}}_{d\bs{x}_i}
&= \sum_{j}
\underbrace{\frac{1}{m}
\begin{bmatrix}
\bs{0} & \bs{I}\delta_{ij} & \bs{0}\\
-\bs{I}\delta_{ij} & \bs{0} & \bs{0}\\
\bs{0} & \bs{0} & \bs{0}
\end{bmatrix}}_{\bs{L}_{ij}}
\underbrace{\begin{bmatrix}
\dpar{U}{\bs{r}_j} \\
m\bs{v}_j \\
T_j 
\end{bmatrix}}_{\dpar{E}{\bs{x}_j}} dt \\
&+
\sum_{j}
\bs{M}_{ij}
\underbrace{\begin{bmatrix}
\bs{0} \\
\bs{0} \\
1
\end{bmatrix}}_{\dpar{S}{\bs{x}_j}}dt
+
k_B\sum_{j} \dpar{}{\bs{x}_j}\cdot\bs{M}_{ij}dt
+
\underbrace{\begin{bmatrix}
d\tilde{\bs{r}}_i \\
d\tilde{\bs{v}}_i \\
d\tilde{S}_i 
\end{bmatrix}}_{d\tilde{\bs{x}}_i}.
\end{split}
\end{equation}

\subsection{Machine learnable parameterizations}\label{subsec:states}

To close this system of equations, we will introduce small neural networks shared across all particles to define:
\begin{itemize}
    \item Particle volume $\mathcal{V}_i(\bs{r}_1,...,\bs{r}_N)$.
    \item Per particle internal energy  $U_i(\mathcal{V}_i,S_i)$
    \item Thermal fluctuations $d\tilde{\bs{x}} = \bs{Q} d\bs{W}$.
\end{itemize}
After specifying these terms, $\bs{M}$ may be calculated from $\bs{Q}$, providing a closed system of equations in \myeqref{eq:GENERIC_discrete_part}. For simplicity all neural networks are taken as shallow multilayer perceptrons, unless otherwise noted. There is substantial room for improvement to adopt more sophisticated e.g. graph attention networks or transformers. We have been able to achieve state-of-the-art results with simple architectures and so choose to present this work with minimal complexity in the ML architecture; the simple MLPs adopted here have the added benefit of being amenable to finite difference calculation of gradients at inference time, allowing a fast implementation in LAMMPS.

We stress that in contrast to traditional particle methods, the operators are not constructed to discretize a known PDE; rather, we impose algebraic symmetries which impose structure preservation to build a ``model skeleton", and then use observational data to ``flesh out the skeleton".

\subsubsection{Volume definition}

To obtain a method which scales as $O(N)$ in the number of particles, we prescribe $\mathcal{V}$ to include only contributions from particles within a neighborhood of radius $h$, denoted as $\mathcal{N}_i=\{j\in V,|\bs{r}_{ij}|<h\}$ and parameterize the volume
\begin{equation}\label{eq:volume}
\frac{1}{\mathcal{V}_i}=\sum_{j\in\mathcal{N}_i}W_{ij}=\sum_{j\in\mathcal{N}_i} \exp \left[\text{MLP}_\mathcal{V}\left(\frac{|\bs{r}_{ij}|}{h};\theta_\mathcal{V}\right)\right]\left(1-\frac{|\bs{r}_{ij}|^2}{h^2}\right)_+,
\end{equation}
where for convenience we define the inverse volume $d_i = \mathcal{V}_i^{-1}$, $\bs{r}_{ij}=\bs{r}_i-\bs{r}_j$ denotes the relative positions between particle $i$ and neighboring particles $j\in\mathcal{N}_i$, and $|\cdot|$ denotes the Euclidean norm.

This amounts to a message passing/aggregation network with no attention mechanism. We require only that $W$ be positive to ensure a positive volume for each particle. In light of Noethers theorem, we require that $W$ be invariant under shifts in coordinate to ensure momentum conservation, which we impose by using a radial kernel. More sophisticated graph attention mechanisms, deep message passing networks, and non-radial shift invariant architectures could be incorporated without compromising structure preservation.

We note that this architecture is inspired by classical particle methods such as SPH, where particle volume is defined via a kernel density estimate, see \cite{monaghan1992smoothed,morris1997modeling}, and other works which assign a ``virtual" notion of volume to particles \cite{kwan2012conservative,trask2020conservative}.

\subsubsection{Convexity and monotonicity of internal energy}

In classical variational mechanics, the Gibbs relations mandate that internal energy must respect certain convexity and monotonicity constraints to preserve hyperbolicity in the resulting equations. Equivalently, from a simple physical perspective, pressure and temperature must be positive, requiring $\partial_\mathcal{V} U \leq 0$ and $\partial_S U \geq 0$. The second derivatives define the isothermal compressibility and the specific heat at constant volume, respectively, and must also be positive, i.e. $U$ is convex in both arguments. In this work, we emply the recently proposed Constrained Monotonic Neural Network \cite{runje2023constrained}:
\begin{equation}\label{eq:internal}
U_i=\text{CMNN}_U(S_{i},\mathcal{V}_i;\theta_U).
\end{equation} 
guaranteeing by construction that the first and second arguments be monotonically decreasing and increasing, respectively. This is achieved by enforcing the weight signs, and ensure the convexity by using softplus activation functions so that no constrained optimizers need to be used to obtain a thermodynamically consistent internal energy. For more details, see the \myappref{app:cmnn}.

\subsubsection{Stochastic fluctuations}

When specializing the general ansatz (\myeqref{eq:GENERIC_stochastic}) to a particle setting, additional care must be taken to ensure that forces satisfy conservation of linear and angular momentum. Motivated by \cite{espanol2003smoothed}, we assume a functional form for which random forcing in $d\widetilde{\bs{v}}$ consists only of pairwise interactions, from which the degeneracy conditions enforce as a consequence:

\begin{equation}\label{eq:dx_tilde}
\begin{split}
md\tilde{\bs{v}}_i&=\sqrt{2k_B}\sum_{j\in\mathcal{N}_i}\left[A_{ij}d\bs{\overline{W}}_{ij}+B_{ij}\frac{\bs{I}}{D}\tr(d\bs{W}_{ij})\right]\bs{e}_{ij}, \\
T_id\tilde{S}_i&=-\frac{\sqrt{2k_B}}{2}\sum_{j\in\mathcal{N}_i}\left[A_{ij}d\bs{\overline{W}}_{ij}+B_{ij}\frac{\bs{I}}{D}\tr(d\bs{W}_{ij})\right]:\bs{e}_{ij}\bs{v}_{ij} \\
+&\sqrt{2k_B}\sum_{j\in\mathcal{N}_i} C_{ij}dV_{ij},
\end{split}
\end{equation}
where $A_{ij}=A_{ji}$, $B_{ij}=B_{ji}$ and $C_{ij}=C_{ji}$ are $ij$-symmetric functions of the position and entropy of the particles, $D$ is the dimension of the problem and $\bs{e}_{ij}=\bs{r}_{ij}/|\bs{r}_{ij}|$ is the interdistance unitary vector. The Wiener differentials $d\bs{W}_{ij}=d\bs{W}_{ji}$ and $dV_{ij}=-dV_{ji}$ are symmetric and skew-symmetric, respectively, with regard to particle labels $i,j$. Their components are independent zero mean and unit variance Gaussian increments which are uncorrelated in components and particle index except if they belong to the same interaction (more details in \myappref{app:equations}). 

Unlike in \cite{espanol2003smoothed}, where $A$, $B$, and $C$ were chosen to arrive at an accurate discretization of the fluctuating Navier-Stokes equations, we instead define shallow MLPs to identify state-dependent nonlinear scalings of the fluctuations
\begin{equation}\label{eq:coefs}
\begin{split}
A_{ij}&=\text{MLP}_A\left(\frac{|\bs{r}_{ij}|}{h},T_i;\theta_A\right)\cdot\text{MLP}_A\left(\frac{|\bs{r}_{ij}|}{h},T_j;\theta_A\right),\\
B_{ij}&=\text{MLP}_B\left(\frac{|\bs{r}_{ij}|}{h},T_i;\theta_B\right)\cdot\text{MLP}_B\left(\frac{|\bs{r}_{ij}|}{h},T_j;\theta_B\right),\\
C_{ij}&=\text{MLP}_C\left(\frac{|\bs{r}_{ij}|}{h},T_i;\theta_C\right)\cdot\text{MLP}_C\left(\frac{|\bs{r}_{ij}|}{h},T_j;\theta_C\right),
\end{split}
\end{equation}
parametrized by $\theta_A,\theta_B,\theta_C$, which ensures the symmetry over particle pair indices and the functional dependency of viscosity with temperature. Additionally, we include the Boltzmann constant $k_B$ and the particle mass $m$ as trainable parameters to tune the relative importance of stochastic fluctuations relative to conservative dynamics. This is optional and can be replaced with their actual value, consistent with the units of the studied problem.

\subsection{Dynamic equations}\label{subsec:dynamics}

With the specified parameterization and using the metriplectic equation in \myeqref{eq:GENERIC_discrete_part}, we can finally summarize the dynamic equations of the method:
\begin{equation}
\begin{split}
d\bs{r}_i&= \bs{v}_idt\\
md\bs{v}_i&=-\dpar{U}{\bs{r}_i}dt + \bs{F}_{\text{diss}}^{\bs{v}} + \bs{F}_{\text{div}}^{\bs{v}} + md\tilde{\bs{v}}_i \\
T_idS_i&= F_{\text{diss}}^{S} + F_{\text{div}}^{S} + T_id\tilde{S}_i
\end{split}
\end{equation}
where $F_{\text{diss}}$ are the dissipative terms and $F_{\text{div}}$ the divergence terms of the dynamics, balanced out by the fluctuation-dissipation theorem. The developed formulation ensures the following conservation laws: 

\begin{proposition}
The metriplectic equation in \myeqref{eq:GENERIC_discrete_part} satisfies momentum conservation $d\bs{P}=\bs{0}$.
\end{proposition}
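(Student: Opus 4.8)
The plan is to show that the total momentum $\bs{P} = \sum_i m \bs{v}_i$ has vanishing differential by accounting for each contribution to $md\bs{v}_i$ separately in \myeqref{eq:GENERIC_discrete_part} and summing over $i$. Writing $d\bs{P} = \sum_i m d\bs{v}_i$, the terms to handle are: (i) the reversible force $-\partial U/\partial \bs{r}_i$ coming from $\bs{L}_{ij}\partial E/\partial \bs{x}_j$; (ii) the dissipative and divergence contributions $\bs{F}_{\text{diss}}^{\bs{v}}$ and $\bs{F}_{\text{div}}^{\bs{v}}$ arising from $\bs{M}_{ij}\partial S/\partial \bs{x}_j$ and $k_B \partial_{\bs{x}_j}\cdot\bs{M}_{ij}$; and (iii) the stochastic kick $m d\tilde{\bs{v}}_i$ given explicitly in \myeqref{eq:dx_tilde}.

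First I would treat the conservative term. Because $\mathcal{V}_i$ depends on positions only through the relative vectors $\bs{r}_{ij}$ (via the radial kernel $W_{ij}$ in \myeqref{eq:volume}), the total potential $\sum_i U(\mathcal{V}_i, S_i)$ is translation invariant, so $\sum_i \partial U/\partial \bs{r}_i = \bs{0}$ by Noether's theorem — equivalently, one checks that $\partial U/\partial \bs{r}_i = \sum_j \bs{f}_{ij}$ with $\bs{f}_{ij} = -\bs{f}_{ji}$ directed along $\bs{e}_{ij}$, so the double sum cancels pairwise. Next, for the stochastic term, I would sum \myeqref{eq:dx_tilde} over $i$: each pairwise increment appears once for $i$ with $+\bs{e}_{ij}$ and once for $j$ with $+\bs{e}_{ji} = -\bs{e}_{ij}$, while the coefficients $A_{ij}, B_{ij}$ and the Wiener differentials $d\bs{W}_{ij} = d\bs{W}_{ji}$ are $ij$-symmetric; hence $\sum_i m d\tilde{\bs{v}}_i = \bs{0}$ identically (pathwise, not merely in expectation). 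The dissipative and divergence velocity forces have the same pairwise, antisymmetric, along-$\bs{e}_{ij}$ structure by construction (this is exactly why the particle ansatz for $\bs{M}$ and $\bs{Q}$ was chosen as pairwise interactions in \mypropref{prop:3tensor}), so their sum over $i$ likewise telescopes to zero. Finally $d\bs{r}_i = \bs{v}_i dt$ contributes nothing to $d\bs{P}$, and $m$ is constant, so $d\bs{P} = \bs{0}$.

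The main obstacle is verifying the antisymmetry of the dissipative and divergence terms $\bs{F}_{\text{diss}}^{\bs{v}}$ and $\bs{F}_{\text{div}}^{\bs{v}}$ with the level of rigor the proposition demands, since these are not written out explicitly in the main text — one must expand $\bs{M}_{ij}\,\partial S/\partial \bs{x}_j$ and the divergence term $k_B\,\partial_{\bs{x}_j}\!\cdot\bs{M}_{ij}$ using the explicit form of $\bs{M} = \bs{Q}\bs{Q}^T$ with $\bs{Q}$ built from the pairwise ansatz, and check that the resulting velocity forces are sums of pairwise terms $\bs{g}_{ij} = -\bs{g}_{ji}$ aligned with $\bs{e}_{ij}$. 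The cleanest route is to note that $\bs{Q} d\bs{W}$ already has this pairwise-antisymmetric structure in its $\bs{v}$-block (as displayed in \myeqref{eq:dx_tilde}), and that $\bs{M} = \bs{Q}\bs{Q}^T$ inherits a block structure whose $\bs{v}$-rows are built from the same $\bs{e}_{ij}$ directions, so $\bs{M}_{ij}\,\partial S/\partial\bs{x}_j$ and its divergence are again pairwise and antisymmetric under $i\leftrightarrow j$; summing over $i$ then annihilates them. I would defer the detailed bookkeeping of the $\bs{M}$-block entries to an appendix (referencing \mypropref{prop:3tensor}) and in the main proof simply invoke the pairwise-antisymmetry of every force contribution plus the translation invariance of $U$ to conclude $d\bs{P} = \bs{0}$.
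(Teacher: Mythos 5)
Your decomposition, and your treatment of the conservative and stochastic terms, match the paper's proof: both arguments cancel the pressure/stress forces by the $i\leftrightarrow j$ index swap (the paper additionally invokes $\nabla W(\bs{0})=\bs{0}$ to dispose of self-interactions), and both show $\sum_i m\,d\tilde{\bs{v}}_i=\bs{0}$ pathwise from the symmetry of $A_{ij},B_{ij},d\bs{W}_{ij}$ against the antisymmetry of $\bs{e}_{ij}$. Where you genuinely diverge is in the dissipative and divergence terms, which you propose to handle by explicitly expanding $\bs{M}_{ij}\,\partial S/\partial\bs{x}_j$ and $k_B\,\partial_{\bs{x}_j}\!\cdot\bs{M}_{ij}$ and verifying pairwise antisymmetry; that computation does go through (the paper's appendix on the dissipative terms derives exactly these antisymmetric pairwise forces), but it is lengthy, and your ``cleanest route'' via the block structure of $\bs{Q}\bs{Q}^T$ is still hand-waving over the same bookkeeping, since the diagonal ($\delta_{ij}\sum_k$) and off-diagonal blocks of $\bs{M}$ must be combined before the antisymmetric pairwise form emerges. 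The paper instead writes $d\bs{P}$ via It\^o's lemma and observes that, with $\bs{M}=d\tilde{\bs{x}}\,d\tilde{\bs{x}}^T/(2k_B\,dt)$, the identity $d\tilde{\bs{x}}\cdot\nabla\bs{P}=\bs{0}$ already proven for the noise immediately forces $\bs{M}\,\nabla\bs{P}=\bs{0}$; the dissipative term then vanishes by the symmetry of $\bs{M}$, i.e. $\nabla\bs{P}\cdot\bs{M}\,\nabla S=\nabla S\cdot\bs{M}\,\nabla\bs{P}=0$, and the divergence plus It\^o-correction terms collapse into $k_B\,\nabla\cdot\left(\bs{M}\,\nabla\bs{P}\right)=0$. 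That trick buys momentum conservation for the irreversible sector entirely from the single noise cancellation, with no blockwise expansion of $\bs{M}$. One small inaccuracy in your write-up: the dissipative velocity force is \emph{not} aligned with $\bs{e}_{ij}$ --- it contains a term proportional to $\bs{v}_{ij}$ --- but only the antisymmetry $\bs{g}_{ij}=-\bs{g}_{ji}$ is needed for linear momentum, so this does not break your argument (alignment with $\bs{e}_{ij}$ would matter for angular momentum).
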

\begin{proof}
See proof in \mypropref{prop:momentum}.
\end{proof}

\begin{proposition}
The metriplectic equation in \myeqref{eq:GENERIC_discrete_part} satisfies energy conservation $dE=0$.
\end{proposition}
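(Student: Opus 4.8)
\emph{Proof proposal.} The plan is to reduce the statement to the abstract stochastic energy identity (\mypropref{prop:energy_stochastic}, retaining the divergence term): once one checks that the particle model of \myeqref{eq:GENERIC_discrete_part} together with the noise \myeqref{eq:dx_tilde} is an instance of the general metriplectic construction — $\bs{L}$ skew, $\bs{M}$ assembled from the skew three-tensor ansatz of \mypropref{prop:3tensor}, and the noise of the form $d\tilde{\bs{x}} = \bs{Q}\,d\bs{W}$ — energy conservation follows formally. Because \myeqref{eq:GENERIC_discrete_part} is a stochastic equation, I would compute the energy increment with It\^o's lemma, $dE = \bs{\nabla}E\cdot d\bs{x} + \tfrac12\sum_{kl}(\partial_k\partial_l E)\,dx_k\,dx_l$, and then sort the result into reversible, dissipative, divergence/It\^o, and noise contributions, using $\dpar{E}{\bs{r}_i}=\dpar{U}{\bs{r}_i}$, $\dpar{E}{\bs{v}_i}=m\bs{v}_i$, $\dpar{E}{S_i}=T_i$ from \myeqref{eq:gradE_gradS}.

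First I would dispatch the routine pieces. Contracting $\bs{\nabla}E$ against the block $\bs{L}_{ij}$ of \myeqref{eq:GENERIC_discrete_part}, the $\bs{r}_i$-row produces $\dpar{U}{\bs{r}_i}\cdot\bs{v}_i$ and the $\bs{v}_i$-row produces $-\,\bs{v}_i\cdot\dpar{U}{\bs{r}_i}$, cancelling termwise — the coordinate form of $\bs{\nabla}E\cdot\bs{L}\,\bs{\nabla}E=0$. The dissipative drift contributes $\sum_i\big(\bs{v}_i\cdot\bs{F}^{\bs{v}}_{\text{diss}} + F^{S}_{\text{diss}}\big)$ to $dE$ (using $T_i\,dS_i = F^{S}_{\text{diss}}+\dots$), i.e. $\bs{\nabla}E\cdot\bs{M}\,\bs{\nabla}S = (\bs{M}\,\bs{\nabla}E)\cdot\bs{\nabla}S$, which vanishes as soon as $\bs{M}\,\bs{\nabla}E=\bs{0}$; I would derive the latter from $Q_{ji}=C_{ijk}\,\partial_k E$ with $C_{ijk}=-C_{ikj}$, since contracting the skew index pair against the symmetric product $\partial_j E\,\partial_k E$ annihilates the relevant inner sum.

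The crux, and the step I expect to be the main obstacle, is the cancellation between the divergence term $k_B\,\dpar{}{\bs{x}_j}\cdot\bs{M}_{ij}$ and the second-order It\^o correction. Differentiating the pointwise identity $\bs{M}\,\bs{\nabla}E=\bs{0}$ and contracting indices yields $\sum_{kl}(\partial_k M_{kl})\,\partial_l E = -\sum_{kl}M_{kl}\,\partial_k\partial_l E$; feeding in the fluctuation--dissipation relation \myeqref{eq:FDT}, $d\tilde{\bs{x}}\,d\tilde{\bs{x}}^{\top}=2k_B\bs{M}\,dt$, makes $k_B\,\bs{\nabla}E\cdot(\dpar{}{\bs{x}}\cdot\bs{M})\,dt$ cancel the It\^o term $k_B\sum_{kl}M_{kl}\,\partial_k\partial_l E\,dt$ exactly. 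It remains to show the noise term $\bs{\nabla}E\cdot d\tilde{\bs{x}} = \sum_i\big(m\,\bs{v}_i\cdot d\tilde{\bs{v}}_i + T_i\,d\tilde{S}_i\big)$ vanishes (there is no noise in $d\bs{r}_i$); I would obtain this straight from \myeqref{eq:dx_tilde} by relabeling $i\leftrightarrow j$ and using $d\bs{W}_{ij}=d\bs{W}_{ji}$, $\bs{e}_{ij}=-\bs{e}_{ji}$, and $dV_{ij}=-dV_{ji}$: the $A,B$ contributions to $\sum_i m\,\bs{v}_i\cdot d\tilde{\bs{v}}_i$ reorganize into $\tfrac{\sqrt{2k_B}}{2}\sum_{ij}[\cdots]:\bs{e}_{ij}\bs{v}_{ij}$, which is exactly minus the first line of $\sum_i T_i\,d\tilde{S}_i$, while the $C$ contribution is odd under the pair swap and drops out. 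Summing the surviving pieces over all particles leaves $dE=0$; the difficulty is really the bookkeeping of the It\^o correction against the divergence term, not any conceptual subtlety.
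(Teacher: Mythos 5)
Your proposal is correct and follows essentially the same route as the paper: It\^o's lemma, the four-way split into reversible/dissipative/divergence--It\^o/noise contributions, skew-symmetry of $\bs{L}$, the degeneracy $\bs{M}\bs{\nabla}E=\bs{0}$, the identity $k_B\,\bs{\nabla}\cdot(\bs{M}\bs{\nabla}E)=0$ absorbing the divergence drift against the It\^o correction, and the index-swap cancellation of $\bs{\nabla}E\cdot d\tilde{\bs{x}}$ using the symmetries of $A_{ij},B_{ij},C_{ij},d\bs{W}_{ij}$ and the skew-symmetries of $\bs{e}_{ij},dV_{ij}$ (the paper's \mypropref{prop:energy_stochastic} combined with \mypropref{prop:degeneracy}). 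Your additional appeal to the three-tensor ansatz for $\bs{M}\bs{\nabla}E=\bs{0}$ is redundant but harmless, since your direct verification that $d\tilde{\bs{x}}\cdot\bs{\nabla}E=0$ for every noise sample already yields that degeneracy via $2k_B\bs{M}\,dt = d\tilde{\bs{x}}d\tilde{\bs{x}}^T$, exactly as the paper argues.
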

\begin{proof}
See proof in \mypropref{prop:energy_stochastic}.
\end{proof}

\subsection{Self-supervision of entropy}\label{subsec:entropy}

While position and velocity are easily measurable either in experimental or by post-processed synthetic data (e.g. fully resolved MD data), calculating entropy would require an enumeration of microstates which is intractable for the vast majority of cases. To fully supervise the system, we therefore need to generate time-series labels of $\bs{x}_{\text{irr}}$. This has been a challenge for several works which have aimed to machine learn metriplectic dynamics \cite{gruber2023reversible,hernandez2021structure}, limiting their applicability to small degree of freedom systems. In this work we adopt a self-supervised approach which allows scalability to massive numbers of degrees of freedom.

To generate labels for $S_i$, we introduce an auxiliary edge convolutional operator \cite{wang2019dynamic} which constructs a closure for the entropy in terms of position and velocity
\begin{equation}\label{eq:S_estimation}
S_i=\frac{1}{|\mathcal{N}_i|}\sum_{j\in\mathcal{N}_i}\text{MLP}_S\left(\frac{|\bs{r}_{ij}|}{h}, \bs{v}_{ij};\theta_S\right).
\end{equation}

In principle, this architecture could incorporate position-velocity information from a finite time history. For simplicity however, we present results using only information from the current timestep. In light of the fluctuation-dissipation theorem, we conjecture that knowledge of the energy dissipated (present in the position and velocity fields) is sufficient to identify the magnitude of the corresponding entropy increment.

We highlight that this architecture is \emph{not} a closure in the dynamics—at the conclusion of training $\bs{M}$ is prescribed, the self-supervision network is discarded, and at inference time entropy is instead evolved by solution of Equation \ref{eq:GENERIC_stochastic}.

\subsection{Training via log-likelihood}\label{subsec:training}

Let $\mathcal{D}=\{(\bs{x}_{\text{rev}}^t,\bs{x}_{\text{rev}}^{t+1})\}_{t=0}^{N_\text{train}}$ be a dataset consisting of position-velocity pairs. To train the system, we construct a joint probability distribution $p(\bs{x}_{\text{rev}}^{t+1},\bs{x}_{\text{irr}}^{t+1}|\bs{x}_{\text{rev}}^{t},\bs{x}_{\text{irr}}^{t})$ corresponding to a numerical integration of Equation \ref{eq:GENERIC_stochastic} and train the architectures with maximum likelihood. We denote all trainable parameters via
$$\Theta=\{\theta_\mathcal{V},\theta_U,\theta_A,\theta_B,\theta_C,\theta_S,k_B,m\}$$ and define the maximum log-likelihood objective
\begin{equation}
\Theta^*=\arg\min_{\Theta} \text{NLL}(x^1,...,x^T).
\end{equation}

Through a marginalization procedure detailed in \myappref{app:NLL}, we can split the problem as per particle-wise loss corresponding to the multivariate Gaussian distribution:
\begin{equation}\label{eq:mu_sigma}
\begin{split}
\bs{\mu}^{t+1}_i&=
\begin{bmatrix}
\bs{v}_i^t+\frac{d\bs{v}_i^t}{dt}\Delta t \\
S_i^t+\frac{dS_i^t}{dt}\Delta t
\end{bmatrix}, \\
\bs{\Sigma}^{t+1}_{ii}&=\begin{bmatrix}
\Delta\tilde{\bs{v}}_i \Delta\tilde{\bs{v}}_i^T &  \Delta\tilde{\bs{v}}_i \Delta\tilde{S}_i\\
\Delta\tilde{S}_i \Delta\tilde{\bs{v}}^T_i &  \Delta\tilde{S}_i\Delta\tilde{S}_i
\end{bmatrix}.
\end{split}
\end{equation}
Thus the final loss function is defined as
\begin{equation}\label{eq:loss}
\begin{split}
\mathcal{L}=&\frac{1}{N_\text{train}}\sum_{t=0}^{N_\text{train}}\frac{1}{N}\sum_{i=0}^{N}\Big[\frac{1}{2}\ln|\bs{\Sigma}_{ii}^{t+1}| \\
&+\frac{1}{2}(\bs{x}_i^{t+1}-\bs{\mu}_i^{t+1})^T\left(\bs{\Sigma}_{ii}^{t+1}\right)^{-1}(\bs{x}_i^{t+1}-\bs{\mu}_i^{t+1})\Big],
\end{split}
\end{equation}
which can be minimized using standard batch gradient descent.

\subsection{Test metrics}\label{subsec:testing}

The benchmark algorithms are trained in fair conditions using the same training procedure and integrator as the presented method (see \myappref{app:gnns} for more details). All methods are tested over long rollouts of temporal length 25 times the training set to ensure steady-state statistics are achieved. Three metrics are considered: velocity autocorrelation function, radial distribution function and mean squared displacement, defined as the following ensemble averages:
\begin{equation}
\begin{split}
\text{VACF}(t)&=\langle\bs{v}(\tau)\cdot\bs{v}(\tau+t)\rangle, \\
\text{RDF}(r)&=\frac{\langle\rho(r)\rangle}{\rho}, \\
\text{MSD}(t)&=\langle|\bs{r}(\tau)-\bs{r}(\tau+t)|^2\rangle,
\end{split}
\end{equation}
where $\rho(r)$ is the local number density of particles at distance $r$ and $\tau$ is a dummy variable representing the initial reference time over which ensemble averages are computed. Reproducing all three correlation metrics is necessary to correctly capture the dynamic behavior of the molecular dynamic system \cite{chandler1988introduction}:
\begin{itemize}
    \item VACF: This metric evaluates how the velocity of a particle at one time is correlated with its velocity at a later time, and provides insight into the momentum relaxation of the system. The integral of the function is the self-diffusion coefficient of the system.
    \item RDF: This metric measures how the particle density varies as a function of distance from a reference particle. It is related to the structural organization and relative arrangements of particles, and is strongly determined by the conservative forces of the system.
    \item MSD: This metric quantifies the average squared distance that particles travel over time. It is used to characterize the diffusive behavior of particles and distinguish between different transport regimes. It is related to the VACF via the Green-Kubo relations \cite{green1954markoff,kubo1957statistical}.
\end{itemize}

The $\text{VACF}, \text{RDF},$ and $\text{MSD}$ curves are compared with the ground truth statistics through the point-wise L2 relative error:
\begin{equation}
\varepsilon_{\bs{y}}=\frac{|\bs{y}_\text{GT}-\bs{y}_\text{pred}|}{|\bs{y}_\text{GT}|}.
\end{equation}
\noindent The code was implemented using Pytorch Geometric \cite{fey2019fast}, and is publicly available online at \url{https://github.com/PIMILab}. All the experiments were performed using Adam optimizer \cite{kingma2014adam} and run on a single NVIDIA H200 GPU. The dataset generation and hyperparameter details can be found in \myappref{app:hyperparams}.

\subsection{Extension to new physics}\label{subsec:elasticity}

To extend the method to viscoelastic, we first need to recall the law of thermodynamics for deformed bodies \cite{landau2012theory} as $dU=TdS+\bs{\sigma}:d\bs{\varepsilon}$. By applying a volumetric and deviatoric decomposition of the stress tensor $\sigma^{\alpha\beta}=-P\delta^{\alpha\beta}+\tau^{\alpha\beta}$, we obtain $dU=TdS-Pd\mathcal{V}+\bs{\tau}:d\bar{\bs{\varepsilon}}$ where $\mathcal{V}$ is the volume and $\bs{\tau}$ and $\bar{\bs{\varepsilon}}$ are the traceless stress and strain tensors. This is the same as the previous equation of state in \myeqref{eq:internal} with an additional shear elastic energy component. Now we may define per particle temperature, pressure and deviatoric stress as partials of $U$
\begin{equation}\label{eq:EoS}
T_i=\dpar{U_i}{S_i}, \quad P_i=-\dpar{U_i}{\mathcal{V}_i} ,\quad \bs{\tau}_i=\dpar{U_i}{\bar{\bs{\varepsilon}}_i}.
\end{equation}

To model the traceless strain tensor, we propose a similar parametrization to \myeqref{eq:volume} but enforce symmetries via
\begin{equation}\label{eq:strain1}
\begin{split}
\bs{\varepsilon}_i&=\sum_{j\in\mathcal{N}_i}\bs{W}_{ij}=\sum_{j\in\mathcal{N}_i}\frac{1}{2}(\bs{l}_{ij}+\bs{l}_{ij}^T),\\
\bs{l}_{ij}&=\text{MLP}_{\bs{\varepsilon}}\left(\frac{\bs{u}_{ij}}{h},\frac{\bs{r}^0_{ij}}{h};\theta_{\bs{\varepsilon}}\right)-\text{MLP}_{\bs{\varepsilon}}\left(\bs{0},\frac{\bs{r}^0_{ij}}{h};\theta_{\bs{\varepsilon}}\right),
\end{split}
\end{equation}
where $\bs{l}_{ij}$ is a lower triangular matrix whose entries are computed using a multilayer perceptron parametrized by $\theta_{\bs{\varepsilon}}$, $\bs{r}_{ij}^0=\bs{r}^0_i-\bs{r}^0_j$ is the undeformed configuration relative positions and $\bs{u}_{ij}=\bs{r}_{ij}-\bs{r}_{ij}^0$ is the displacement with respect to the reference configuration. This parametrization ensures zero strain under zero displacements and is also invariant under coordinate transformations. Finally, we subtract the trace to obtain a traceless strain tensor which will give rise to shear stresses,
\begin{equation}\label{eq:strain}
\bar{\bs{\varepsilon}}_i=\bs{\varepsilon}_i-\frac{\bs{I}}{D}\tr(\bs{\varepsilon}_i)=\sum_{j\in\mathcal{N}_i}\overline{\bs{W}}_{ij},
\end{equation}
where $D$ is the dimension of the problem. This formulation is inspired in large strain elasticity, but many other options could be considered based on classic SPH deformation gradient tensor or equivariant architectures. In regard to the internal energy model, we assume additive internal energies with similar monotonicity and convexity contraints as in the fluid model case (convex in the strain argument, as its second derivative represent the elasticity tensor):
\begin{equation}\label{eq:internal_solid}
\begin{split}
U_i=U^{\text{vol}}_i+U^{\text{dev}}_i=& \text{CMNN}_{U^{\text{vol}}}(S_{i},\mathcal{V}_i;\theta_{U^{\text{vol}}}) \\
&+\text{CMNN}_{U^{\text{dev}}}(S_{i},I(\bar{\bs{\varepsilon}}_i);\theta_{U^{\text{dev}}}),
\end{split}
\end{equation} 
where $I(\bar{\bs{\varepsilon}}_i)$ are the $D-1$ non-zero invariants of the traceless strain tensor. The additional terms in the conservative dynamics equation can be found in the \myappref{app:cons}.

The results are evaluated with L2 relative error in position and RDF function to evaluate the structural fidelity of the predictions. As for the dynamics, we use the non-affine displacement ($D_{\text{min}}^2$) which evaluates how much the local neighbourhood of a particle deviates from an affine transformation between two time snapshots. 
\begin{equation}
D^2_{\text{min}}(t_1,t_2)=\min_{\bs{J}}\langle|\bs{u}_i(t_1)-\bs{J}\bs{u}_j(t_2))|^2\rangle,
\end{equation}
where $\bs{J}$ is the best least squares fit local deformation gradient.

\section*{Acknowledgements}
We thank Pep Español and George Karniadakis for their insightful feedback and valuable discussions throughout the development of this work. We also acknowledge Joel T. Clemmer for his assistance with the LAMMPS code implementation and Aleksandra Vojvodic for the computational resources. The authors acknowledge support from the Department of Energy under the MMICCs program and the Department of Defense under the "Complexity, Nonlocality, and Uncertainty in Heterogeneous Solids" MURI program.

\bibliography{sDPD_ML_bib2}
\bibliographystyle{unsrt}

\clearpage
\onecolumn          
\appendix           

\section{Dynamical equations}\label{app:equations}

This section provides derivations of the complete dynamical equations, from the starting point of the metriplectic problem to the particle discretization used in this work. By recalling the formulation of the main text, consider now a collection of $N$ particles with positions $\bs{r}_i$ and velocities $\bs{v}_i = \frac{d\bs{r}_i}{dt}$ for $i\in V=\{1,...,N\}$. We introduce energy and entropy functionals
\begin{equation}\label{eq:EandS}
E=\sum_i \left[\frac{1}{2}m_i \bs{v}_i^2 + U_i \right],\quad S=\sum_i S_i,
\end{equation}
where: $m_i=m$ is a particle mass, assumed to be constant; $S_i$ is a per particle entropy; $U_i=U(\mathcal{V}_i,S_i,\bar{\bs{\varepsilon}}_i)$ is a function prescribing per particle internal energy; and $\mathcal{V}_i(\bs{r}_1,...,\bs{r}_N)$ is a per particle volume which will be prescribed as a function of neighboring particle positions. The dynamic equations of the system is described by the metriplectic equation:

\begin{equation}\label{eq:GENERIC_discrete_part_app}
\underbrace{\begin{bmatrix}
d\bs{r}_i \\
d\bs{v}_i \\
dS_i 
\end{bmatrix}}_{d\bs{x}_i}
= \sum_{j}
\underbrace{\frac{1}{m}
\begin{bmatrix}
\bs{0} & \bs{I}\delta_{ij} & \bs{0}\\
-\bs{I}\delta_{ij} & \bs{0} & \bs{0}\\
\bs{0} & \bs{0} & \bs{0}
\end{bmatrix}}_{\bs{L}_{ij}}
\underbrace{\begin{bmatrix}
\dpar{U}{\bs{r}_j} \\
m\bs{v}_j \\
T_j 
\end{bmatrix}}_{\dpar{E}{\bs{x}_j}} dt
+
\sum_{j}
\bs{M}_{ij}
\underbrace{\begin{bmatrix}
\bs{0} \\
\bs{0} \\
1
\end{bmatrix}}_{\dpar{S}{\bs{x}_j}}dt
+
k_B\sum_{j} \dpar{}{\bs{x}_j}\cdot\bs{M}_{ij}dt
+
\underbrace{\begin{bmatrix}
d\tilde{\bs{r}}_i \\
d\tilde{\bs{v}}_i \\
d\tilde{S}_i 
\end{bmatrix}}_{d\tilde{\bs{x}}_i},
\end{equation}
subject to the fluctuation-dissipation theorem:
\begin{equation}\label{eq:FDT}
d\tilde{\bs{x}}d\tilde{\bs{x}}^T=2k_B\bs{M}dt.
\end{equation}

The stochastic differentials $d\tilde{\bs{x}}_i=(d\tilde{\bs{r}}_i, d\tilde{\bs{v}}_i,d\tilde{S}_i)$ are defined as: 

\begin{equation}\label{eq:dx_tilde}
\begin{split}
d\tilde{\bs{r}}_i&=\bs{0} \\
md\tilde{\bs{v}}_i&=\sqrt{2k_B}\sum_{j\in\mathcal{N}_i}\left[A_{ij}d\bs{\overline{W}}_{ij}+B_{ij}\frac{\bs{I}}{D}\tr(d\bs{W}_{ij})\right]\bs{e}_{ij}, \\
T_id\tilde{S}_i&=-\frac{\sqrt{2k_B}}{2}\sum_{j\in\mathcal{N}_i}\left[A_{ij}d\bs{\overline{W}}_{ij}+B_{ij}\frac{\bs{I}}{D}\tr(d\bs{W}_{ij})\right]:\bs{e}_{ij}\bs{v}_{ij}
+\sqrt{2k_B}\sum_{j\in\mathcal{N}_i} C_{ij}dV_{ij},
\end{split}
\end{equation}
where $A_{ij}=A_{ji}$, $B_{ij}=B_{ji}$ and $C_{ij}=C_{ji}$ are symmetric functions of the position and entropy of the particles, $D$ is the dimension of the problem and $\bs{e}_{ij}=\bs{r}_{ij}/|\bs{r}_{ij}|$ is the interdistance unitary vector. The Wiener differentials $d\bs{W}_{ij}=d\bs{W}_{ji}$ and $dV_{ij}=-dV_{ji}$ are symmetric and skew-symmetric with respect to particle labels $i,j$. Their components are independent zero mean and unit variance Gaussian increments which are uncorrelated in components and particle index except if they belong to the same interaction. This logic can be compactly expressed by using Kronecker deltas with the following differential rules:
\begin{equation}\label{eq:Wiener}
\begin{split}
d\bs{W}_{ii'}^{\alpha\alpha'}d\bs{W}_{jj'}^{\beta\beta'}&=(\delta_{ij}\delta_{i'j'} + \delta_{i'j}\delta_{ij'})\delta^{\alpha\beta}\delta^{\alpha'\beta'}dt, \\
d V_{ii'}dV_{jj'}&=(\delta_{ij}\delta_{i'j'} - \delta_{i'j}\delta_{ij'})dt, \\
d\bs{W}_{ii'}^{\alpha\alpha'}dV_{jj'}&=0,
\end{split}
\end{equation}
where indices $i,j$ and $i',j'$ denote pairs of particles (latin subscripts) and $\alpha,\beta$ and $\alpha',\beta'$ denote the matrix component indices (greek superscripts). Last, we define the traceless symmetric matrix $d\bs{\overline{W}}_{ij}$ as
\begin{equation}\label{eq:dWbar}
d\bs{\overline{W}}_{ij}=\frac{1}{2}(d\bs{W}_{ij}+d\bs{W}_{ij}^T)-\frac{\bs{I}}{D}\tr(d\bs{W}_{ij}).
\end{equation}

\subsection{Computation of conservative terms}\label{app:cons}

This section aims to derive the conservative terms of the dynamics. Isolating the reversible terms in \myeqref{eq:GENERIC_discrete_part_app}, we obtain the usual Hamilton's equations in our chosen coordinates:
\begin{equation}
\begin{bmatrix}
d\bs{r}_i \\
d\bs{v}_i \\
dS_i 
\end{bmatrix}_\text{rev}
= 
\begin{bmatrix}
\bs{v}_i \\
-\frac{1}{m}\dpar{U}{\bs{r}_i} \\
0
\end{bmatrix}dt.
\end{equation}

We now focus on the non-trivial term, which is the momentum equation. By expanding the gradient and using the chain rule:
\begin{equation}\label{eq:gradU}
-\dpar{U}{\bs{r}_i}=-\sum_k\dpar{U_k}{\bs{r}_i}=-\sum_k\left[\dpar{U_k}{\mathcal{V}_k}\dpar{\mathcal{V}_k}{\bs{r}_i}+\dpar{U_k}{\bar{\bs{\varepsilon}}_k}\dpar{\bar{\bs{\varepsilon}}_k}{\bs{r}_i}\right].
\end{equation}

First, the volumetric term gives rise to a hydrostatic pressure force. By using the equation of state relationships and the volume definition, we get:
\begin{equation}
\begin{split}
-\sum_k\dpar{U_k}{\mathcal{V}_k}\dpar{\mathcal{V}_k}{\bs{r}_i}&=-\sum_k(-P_k)\left(-\frac{1}{d^2_k}\right)\dpar{d_i}{\bs{r}_i}=-\sum_k \frac{P_k}{d^2_k}\sum_j\dpar{W_{kj}}{\bs{r}_i}=-\sum_{k,j}\frac{P_k}{d^2_k}\dpar{W_{kj}}{\bs{r}_{kj}}\dpar{\bs{r}_{kj}}{\bs{r}_i}\\
&= -\sum_{k,j}\frac{P_k}{d^2_k}\dpar{W_{kj}}{\bs{r}_{kj}}(\delta_{ik}-\delta_{ij})=-\sum_j\frac{P_i}{d^2_i}\dpar{W_{ij}}{\bs{r}_{ij}}+\sum_k\frac{P_k}{d^2_k}\dpar{W_{ki}}{\bs{r}_{ki}}=\sum_j\left[\frac{P_j}{d^2_j}\dpar{W_{ji}}{\bs{r}_{ji}}-\frac{P_i}{d^2_i}\dpar{W_{ij}}{\bs{r}_{ij}}\right]\\
\end{split}
\end{equation}

For the elasticity extension, the deviatoric term contributes to the dynamics with a shear elastic restoring force. By using the chain rule, we obtain:
\begin{equation}
\begin{split}
-\sum_k\dpar{U_k}{\bar{\bs{\varepsilon}}^{\alpha\beta}_k}\dpar{\bar{\bs{\varepsilon}}^{\alpha\beta}_k}{\bs{r}^{\gamma}_i}&= -\sum_k\bs{\tau}^{\alpha\beta}_k\sum_j\dpar{\overline{\bs{W}}^{\alpha\beta}_{kj}}{\bs{r}_i^\gamma}=-\sum_k\bs{\tau}^{\alpha\beta}_k\sum_j\dpar{\overline{\bs{W}}^{\alpha\beta}_{kj}}{\bs{u}_{kj}^\delta}\dpar{\bs{u}_{kj}^\delta}{\bs{r}_i^\gamma}=-\sum_{k,j}\bs{\tau}^{\alpha\beta}_k\dpar{\overline{\bs{W}}^{\alpha\beta}_{kj}}{\bs{u}_{kj}^\delta}\delta_{\delta\gamma}(\delta_{ik}-\delta_{ij})\\
&=-\sum_{j}\bs{\tau}^{\alpha\beta}_i\dpar{\overline{\bs{W}}^{\alpha\beta}_{ij}}{\bs{u}_{ij}^\gamma}+\sum_{k}\bs{\tau}^{\alpha\beta}_k\dpar{\overline{\bs{W}}^{\alpha\beta}_{ki}}{\bs{u}_{ki}^\gamma}=\sum_j\left[\bs{\tau}_j^{\alpha\beta}\dpar{\overline{\bs{W}}_{ji}^{\alpha\beta}}{\bs{u}_{ji}^\gamma} - \bs{\tau}_i^{\alpha\beta}\dpar{\overline{\bs{W}}_{ij}^{\alpha\beta}}{\bs{u}_{ij}^\gamma}\right]
\end{split}
\end{equation}

\subsection{Computation of dissipative terms}\label{app:diss}

Once established $d\tilde{\bs{x}}$ by the ansatz in \myeqref{eq:dx_tilde}, the rest of the viscous dynamics can be straightforwardly computed using the fluctuation-dissipation theorem \myeqref{eq:FDT} and substituting the components of the friction matrix $\bs{M}$ and $\bs{\nabla}\cdot\bs{M}$:
\begin{equation}
\begin{split}
\begin{bmatrix}
d\bs{r}_i \\
d\bs{v}_i \\
dS_i 
\end{bmatrix}_\text{irr}
&= 
\sum_{j\in\mathcal{N}_i}
\frac{ d\tilde{\bs{x}}d\tilde{\bs{x}}^T}{2k_B}
\begin{bmatrix}
\bs{0} \\
\bs{0} \\
1
\end{bmatrix}
+
k_B\sum_{j\in\mathcal{N}_i} \dpar{}{\bs{x}_j}\cdot \frac{ d\tilde{\bs{x}}d\tilde{\bs{x}}^T}{2k_B} +d\tilde{\bs{x}}_i \\
&= 
\frac{1}{2k_B}\sum_{j\in\mathcal{N}_i}
\begin{bmatrix}
\bs{0} & \bs{0} & \bs{0}\\
\bs{0} &  d\tilde{\bs{v}}_i d\tilde{\bs{v}}_j^T &  d\tilde{\bs{v}}_i d\tilde{S}_j\\
\bs{0} &  d\tilde{S}_i d\tilde{\bs{v}}_j^T &  d\tilde{S}_i d\tilde{S}_j
\end{bmatrix}
\begin{bmatrix}
\bs{0} \\
\bs{0} \\
1
\end{bmatrix}
+
\frac{1}{2}\sum_{j\in\mathcal{N}_i} \dpar{}{\bs{x}_j}\cdot\begin{bmatrix}
\bs{0} & \bs{0} & \bs{0}\\
\bs{0} &  d\tilde{\bs{v}}_i d\tilde{\bs{v}}_j^T &  d\tilde{\bs{v}}_i d\tilde{S}_j\\
\bs{0} &  d\tilde{S}_i d\tilde{\bs{v}}_j^T &  d\tilde{S}_i d\tilde{S}_j
\end{bmatrix} + d\tilde{\bs{x}}_i\\
&=\frac{1}{2k_B}\sum_{j\in\mathcal{N}_i}
\begin{bmatrix}
\bs{0} \\
 d\tilde{\bs{v}}_i d\tilde{S}_j \\
 d\tilde{S}_i d\tilde{S}_j
\end{bmatrix} + \frac{1}{2}\sum_{j\in\mathcal{N}_i}
\begin{bmatrix}
\bs{0} \\
\dpar{}{\bs{v}_j}\cdot  d\tilde{\bs{v}}_i d\tilde{\bs{v}}_j^T +\dpar{}{S_j}\cdot  d\tilde{\bs{v}}_i d\tilde{S}_j\\
\dpar{}{\bs{v}_j}\cdot  d\tilde{S}_i d\tilde{\bs{v}}_j^T
+\dpar{}{S_j}\cdot  d\tilde{S}_i d\tilde{S}_j
\end{bmatrix}+
\begin{bmatrix}
\bs{0} \\
d\tilde{\bs{v}}_i \\
d\tilde{S}_i
\end{bmatrix}.
\end{split}
\end{equation}

By operating each term, we obtain the following expressions for the irreversible part of the dynamics:
\begin{equation}\label{eq:v_S_irr}
\begin{split}
d\bs{v}_i|_\text{irr}&=\frac{1}{2k_B}\sum_{j\in\mathcal{N}_i} d\tilde{\bs{v}}_i d\tilde{S}_j + \frac{1}{2}\sum_{j\in\mathcal{N}_i}\left(\dpar{}{\bs{v}_j}\cdot  d\tilde{\bs{v}}_i d\tilde{\bs{v}}_j^T +\dpar{}{S_j}\cdot  d\tilde{\bs{v}}_i d\tilde{S}_j \right)+d\tilde{\bs{v}}_i, \\
dS_i|_\text{irr}&=\frac{1}{2k_B}\sum_{j\in\mathcal{N}_i} d\tilde{S}_i d\tilde{S}_j + \frac{1}{2}\sum_{j\in\mathcal{N}_i}\left(\dpar{}{\bs{v}_j}\cdot  d\tilde{S}_i d\tilde{\bs{v}}_j^T +\dpar{}{S_j}\cdot  d\tilde{S}_i d\tilde{S}_j\right)+d\tilde{S}_i.
\end{split}
\end{equation}

In this section we focus in the derivation of each term. First, we need to establish some important identities.

\begin{lemma}\label{lemma:identities}
Given $d\bs{W}_{ii'}$ and $d\bs{W}_{jj'}$ two matrix-valued Wiener processes with the properties stated in \myeqref{eq:Wiener}, the following identities hold:
\begin{enumerate}
\item $\tr(d\bs{W}_{ii'})\tr(d\bs{W}_{jj'})=D(\delta_{ij}\delta_{i'j'} + \delta_{i'j}\delta_{ij'})dt.$
\item $\tr(d\bs{W}_{ii'})d\bs{\overline{W}}_{jj'}=0.$
\item $\displaystyle d\bs{\overline{W}}_{ii'}^{\alpha\alpha'}d\bs{\overline{W}}_{jj'}^{\beta\beta'}=(\delta_{ij}\delta_{i'j'} + \delta_{i'j}\delta_{ij'})\left[\frac{1}{2}(\delta^{\alpha\beta}\delta^{\alpha'\beta'}+\delta^{\alpha'\beta}\delta^{\alpha\beta'})-\frac{1}{D}\delta^{\alpha\alpha'}\delta^{\beta\beta'}\right]dt.$
\item $\displaystyle \left[A_{ii'}d\bs{\overline{W}}_{ii'}^{\alpha\alpha'}+B_{ii'}\frac{\delta^{\alpha\alpha'}}{D}\tr(d\bs{W}_{ii'})\right]\left[A_{jj'}d\bs{\overline{W}}_{jj'}^{\beta\beta'}+B_{jj'}\frac{\delta^{\beta\beta'}}{D}\tr(d\bs{W}_{jj'})\right]\\=(\delta_{ij}\delta_{i'j'} + \delta_{i'j}\delta_{ij'})\left[\frac{1}{2}A_{ii'}A_{jj'}(\delta^{\alpha\beta}\delta^{\alpha'\beta'}+\delta^{\alpha'\beta}\delta^{\alpha\beta'})+(B_{ii'}B_{jj'}-A_{ii'}A_{jj'})\frac{1}{D}\delta^{\alpha\alpha'}\delta^{\beta\beta'}\right]dt.$
\end{enumerate}
\end{lemma}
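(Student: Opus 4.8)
The plan is to establish the four identities in order, each using its predecessors, by treating every product of Wiener differentials as a deterministic quantity of order $dt$ via the multiplication rules in \myeqref{eq:Wiener}. Throughout I abbreviate the particle-label prefactor as $P_{ii'jj'}=\delta_{ij}\delta_{i'j'}+\delta_{i'j}\delta_{ij'}$, so that \myeqref{eq:Wiener} reads $d\bs{W}_{ii'}^{\alpha\alpha'}d\bs{W}_{jj'}^{\beta\beta'}=P_{ii'jj'}\,\delta^{\alpha\beta}\delta^{\alpha'\beta'}dt$. The key structural point is that $d\bs{W}_{ij}$ is symmetric only under the particle-label swap $i\leftrightarrow j$, not under the matrix-component swap $\alpha\leftrightarrow\beta$, so transposed components must be tracked explicitly, and $d\bs{\overline{W}}_{ij}$ as defined in \myeqref{eq:dWbar} is where the component-symmetrization and trace removal enter.

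For the first identity I expand $\tr(d\bs{W}_{ii'})=\sum_\alpha d\bs{W}_{ii'}^{\alpha\alpha}$ in both factors, apply the rule with repeated component indices, and use the double contraction $\sum_{\alpha,\beta}\delta^{\alpha\beta}\delta^{\alpha\beta}=D$. For the second identity I write $d\bs{\overline{W}}_{jj'}^{\beta\beta'}=\tfrac12\bigl(d\bs{W}_{jj'}^{\beta\beta'}+d\bs{W}_{jj'}^{\beta'\beta}\bigr)-\tfrac1D\delta^{\beta\beta'}\tr(d\bs{W}_{jj'})$ and contract it against $\tr(d\bs{W}_{ii'})$: the symmetrized part contributes $P_{ii'jj'}\delta^{\beta\beta'}dt$ via the single contraction $\sum_\alpha d\bs{W}_{ii'}^{\alpha\alpha}d\bs{W}_{jj'}^{\beta\beta'}=P_{ii'jj'}\delta^{\beta\beta'}dt$, while the trace-removal part contributes $-P_{ii'jj'}\delta^{\beta\beta'}dt$ by the first identity, and the two cancel exactly. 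For the third identity I set $S_{ii'}^{\alpha\alpha'}=\tfrac12\bigl(d\bs{W}_{ii'}^{\alpha\alpha'}+d\bs{W}_{ii'}^{\alpha'\alpha}\bigr)$ so that $d\bs{\overline{W}}_{ii'}^{\alpha\alpha'}=S_{ii'}^{\alpha\alpha'}-\tfrac1D\delta^{\alpha\alpha'}\tr(d\bs{W}_{ii'})$ with $\tr S_{ii'}=\tr d\bs{W}_{ii'}$; a four-term expansion of $S_{ii'}^{\alpha\alpha'}S_{jj'}^{\beta\beta'}$ collapses to $\tfrac12 P_{ii'jj'}\bigl(\delta^{\alpha\beta}\delta^{\alpha'\beta'}+\delta^{\alpha\beta'}\delta^{\alpha'\beta}\bigr)dt$, the two $S$–trace cross terms each give $\tfrac1D P_{ii'jj'}\delta^{\alpha\alpha'}\delta^{\beta\beta'}dt$ using $S_{ii'}^{\alpha\alpha'}\tr(d\bs{W}_{jj'})=P_{ii'jj'}\delta^{\alpha\alpha'}dt$, and the trace-trace term gives the same via the first identity, so the three $\tfrac1D\delta^{\alpha\alpha'}\delta^{\beta\beta'}$ contributions combine as $-\tfrac1D-\tfrac1D+\tfrac1D=-\tfrac1D$, yielding the traceless-symmetric projector. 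Finally, for the fourth identity I expand the product of the two bracketed combinations into four terms; the two $A$–$B$ cross terms are proportional to $d\bs{\overline{W}}\,\tr(d\bs{W})$ and vanish by the second identity (relabeled), the $AA$ term is evaluated by the third identity, the $BB$ term by the first, and collecting the $\tfrac1D\delta^{\alpha\alpha'}\delta^{\beta\beta'}$ pieces produces the coefficient $B_{ii'}B_{jj'}-A_{ii'}A_{jj'}$.

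The calculations are entirely mechanical, and I do not expect any genuine difficulty. The one place requiring care is the third identity: one must respect that $d\bs{W}_{ij}^{\alpha'\alpha}\neq d\bs{W}_{ij}^{\alpha\alpha'}$ in general (so the symmetrization really does split the contractions), and must check that the numerical coefficients of the trace-removal terms add up correctly rather than cancelling or doubling. Once the third identity is verified, the fourth follows immediately by linearity.
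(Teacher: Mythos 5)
Your proposal is correct and follows essentially the same route as the paper: expand the traces and the traceless-symmetric parts via \myeqref{eq:dWbar}, reduce every product to the Wiener rules of \myeqref{eq:Wiener}, and use the earlier identities to dispatch the cross terms in (3) and (4), with the coefficient bookkeeping $-\tfrac1D-\tfrac1D+\tfrac1D=-\tfrac1D$ matching the paper's computation. The only cosmetic difference is that the paper invokes identity (2) to kill one of the trace cross terms in (3) immediately rather than evaluating both cross terms explicitly as you do; the result is identical.
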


\begin{proof} First, it is important to remember some of the properties of Kronecker delta operations, which are intimately related to the properties of identity matrices:

\begin{itemize}
\item Symmetric property: $\delta^{\alpha\beta}=\delta^{\beta\alpha}$
\item Summation property: $\sum_\sigma\delta^{\alpha\sigma}\delta^{\sigma\beta}=\delta^{\alpha\beta}$
\item Cumulative summation: $\sum_{\alpha,\beta}\delta^{\alpha\beta}=D$ where $D$ is the dimension of the problem.
\item Conmutativity: $\delta^{\alpha\alpha'}\delta^{\beta\beta'}=\delta^{\beta\beta'}\delta^{\alpha\alpha'}$
\end{itemize}

By recalling the definition of $d\bs{\overline{W}}_{ij}$ in \myeqref{eq:dWbar} and the correlation rules of the Wiener processes defined in \myeqref{eq:Wiener}, the proofs are completed with straightforward algebraic manipulations:

\begin{flalign}
\tr(d\bs{W}_{ii'})\tr(d\bs{W}_{jj'})&=\sum_{\alpha}d\bs{W}_{ii'}^{\alpha\alpha}\sum_{\beta}d\bs{W}_{jj'}^{\beta\beta}=\sum_{\alpha,\beta} d\bs{W}_{ii'}^{\alpha\alpha}d\bs{W}_{jj'}^{\beta\beta} &\\
&=\sum_{\alpha,\beta}(\delta_{ij}\delta_{i'j'} + \delta_{i'j}\delta_{ij'})\delta^{\alpha\beta}\delta^{\alpha\beta}dt&\\
&=(\delta_{ij}\delta_{i'j'} + \delta_{i'j}\delta_{ij'})\sum_{\alpha,\beta}\delta^{\alpha\beta}dt&\\
&=D(\delta_{ij}\delta_{i'j'} + \delta_{i'j}\delta_{ij'})dt.
\end{flalign}

\begin{flalign}
\tr(d\bs{W}_{ii'})d\bs{\overline{W}}_{jj'} &=\tr(d\bs{W}_{ii'})\left[\frac{1}{2}(d\bs{W}_{jj'}^{\alpha\beta}+d\bs{W}_{jj'}^{\beta\alpha})-\frac{\delta^{\alpha\beta}}{D}\tr(d\bs{W}_{jj'})\right]&\\
&=\tr(d\bs{W}_{ii'})\frac{1}{2}(d\bs{W}_{jj'}^{\alpha\beta}+d\bs{W}_{jj'}^{\beta\alpha})-\frac{\delta^{\alpha\beta}}{D}\tr(d\bs{W}_{ii'})\tr(d\bs{W}_{jj'}) &\\
&=\sum_{\sigma}\left[d\bs{W}_{ii'}^{\sigma\sigma}\frac{1}{2}(d\bs{W}_{jj'}^{\alpha\beta}+d\bs{W}_{jj'}^{\beta\alpha})\right]-\frac{\delta^{\alpha\beta}}{D}\tr(d\bs{W}_{ii'})\tr(d\bs{W}_{jj'})&\\
&\stackrel{(1)}{=}\frac{1}{2}\sum_\sigma\left[ d\bs{W}_{ii'}^{\sigma\sigma}d\bs{W}_{jj'}^{\alpha\beta} + d\bs{W}_{ii'}^{\sigma\sigma}d\bs{W}_{jj'}^{\beta\alpha}\right]-\frac{\delta^{\alpha\beta}}{D}D(\delta_{ij}\delta_{i'j'} + \delta_{i'j}\delta_{ij'})dt&\\
&=\frac{1}{2}\sum_\sigma\left[(\delta_{ij}\delta_{i'j'} + \delta_{i'j}\delta_{ij'})(\delta^{\sigma\alpha}\delta^{\sigma\beta} + \delta^{\sigma\beta}\delta^{\sigma\alpha})\right]dt-\delta^{\alpha\beta}(\delta_{ij}\delta_{i'j'} + \delta_{i'j}\delta_{ij'})dt&\\
&=(\delta_{ij}\delta_{i'j'} + \delta_{i'j}\delta_{ij'})\left[\frac{1}{2}\sum_\sigma 2\delta^{\sigma\alpha}\delta^{\sigma\beta}-\delta^{\alpha\beta}\right]dt&\\
&=(\delta_{ij}\delta_{i'j'} + \delta_{i'j}\delta_{ij'})(\delta^{\alpha\beta}-\delta^{\alpha\beta})dt=0.
\end{flalign}

\begin{flalign}
 d\bs{\overline{W}}_{ii'}^{\alpha\alpha'}d\bs{\overline{W}}_{jj'}^{\beta\beta'}&= \left[\frac{1}{2}(d\bs{W}_{ii'}^{\alpha\alpha'}+d\bs{W}_{ii'}^{\alpha'\alpha})-\frac{\delta^{\alpha\alpha'}}{D}\tr(d\bs{W}_{ii'})\right]d\bs{\overline{W}}_{jj'}^{\beta\beta'} &\\
&\stackrel{(2)}{=}\frac{1}{2}(d\bs{W}_{ii'}^{\alpha\alpha'}+d\bs{W}_{ii'}^{\alpha'\alpha})\left[\frac{1}{2}(d\bs{W}_{jj'}^{\beta\beta'}+d\bs{W}_{jj'}^{\beta'\beta})-\frac{\delta^{\beta\beta'}}{D}\tr(d\bs{W}_{jj'})\right] &\\
&=\frac{1}{4}( d\bs{W}_{ii'}^{\alpha\alpha'}d\bs{W}_{jj'}^{\beta\beta'}+ d\bs{W}_{ii'}^{\alpha\alpha'}d\bs{W}_{jj'}^{\beta'\beta}+ d\bs{W}_{ii'}^{\alpha'\alpha}d\bs{W}_{jj'}^{\beta\beta'}+ d\bs{W}_{ii'}^{\alpha'\alpha}d\bs{W}_{jj'}^{\beta'\beta})&\\
&\quad-\frac{\delta^{\beta\beta'}}{2D} (d\bs{W}_{ii'}^{\alpha\alpha'}+d\bs{W}_{ii'}^{\alpha'\alpha})\tr(d\bs{W}_{jj'}) &\\
&\stackrel{(1)}{=}(\delta_{ij}\delta_{i'j'} + \delta_{i'j}\delta_{ij'})\left[\frac{1}{4}(\delta^{\alpha\beta}\delta^{\alpha'\beta'}+\delta^{\alpha\beta'}\delta^{\alpha'\beta}+\delta^{\alpha'\beta}\delta^{\alpha\beta'}+\delta^{\alpha'\beta'}\delta^{\alpha\beta})\right]dt&\\
&\quad-\frac{\delta^{\beta\beta'}}{2D}\sum_\sigma( d\bs{W}_{ii'}^{\alpha\alpha'}d\bs{W}_{jj'}^{\sigma\sigma} + d\bs{W}_{ii'}^{\alpha'\alpha} d\bs{W}_{jj'}^{\sigma\sigma}) &\\
&=(\delta_{ij}\delta_{i'j'} + \delta_{i'j}\delta_{ij'})\left[\frac{1}{4}(2\delta^{\alpha\beta}\delta^{\alpha'\beta'}+2\delta^{\alpha'\beta}\delta^{\alpha\beta'})-\frac{\delta^{\beta\beta'}}{2D}\sum_\sigma(\delta^{\alpha\sigma}\delta^{\alpha'\sigma}+\delta^{\alpha'\sigma}\delta^{\alpha\sigma})\right]dt&\\
&=(\delta_{ij}\delta_{i'j'} + \delta_{i'j}\delta_{ij'})\left[\frac{1}{2}(\delta^{\alpha\beta}\delta^{\alpha'\beta'}+\delta^{\alpha'\beta}\delta^{\alpha\beta'})-\frac{\delta^{\beta\beta'}}{2D}\sum_\sigma 2\delta^{\alpha\sigma}\delta^{\alpha'\sigma}\right]dt&\\
&=(\delta_{ij}\delta_{i'j'} + \delta_{i'j}\delta_{ij'})\left[\frac{1}{2}(\delta^{\alpha\beta}\delta^{\alpha'\beta'}+\delta^{\alpha'\beta}\delta^{\alpha\beta'})-\frac{1}{D}\delta^{\alpha\alpha'}\delta^{\beta\beta'}\right]dt.
\end{flalign}

\begin{flalign}
\Bigg[A_{ii'}d\bs{\overline{W}}_{ii'}^{\alpha\alpha'}&+B_{ii'}\frac{\delta^{\alpha\alpha'}}{D}\tr(d\bs{W}_{ii'})\Bigg]\left[A_{jj'}d\bs{\overline{W}}_{jj'}^{\beta\beta'}+B_{jj'}\frac{\delta^{\beta\beta'}}{D}\tr(d\bs{W}_{jj'})\right]\\
&=A_{ii'}A_{jj'} d\bs{\overline{W}}_{ii'}^{\alpha\alpha'}d\bs{\overline{W}}_{jj'}^{\beta\beta'} + A_{ii'}B_{jj'}\frac{\delta^{\beta\beta'}}{D} d\bs{\overline{W}}_{ii'}^{\alpha\alpha'}\tr(d\bs{W}_{jj'}) &\\
&\quad + B_{ii'}A_{jj'}\frac{\delta^{\alpha\alpha'}}{D}\tr(d\bs{W}_{ii'})d\bs{\overline{W}}_{jj'}^{\beta\beta'}+ B_{ii'}B_{jj'}\frac{\delta^{\alpha\alpha'}\delta^{\beta\beta'}}{D^2} \tr(d\bs{W}_{ii'})\tr(d\bs{W}_{jj'}) &\\
&\stackrel{(2)}{=} A_{ii'}A_{jj'} d\bs{\overline{W}}_{ii'}^{\alpha\alpha'}d\bs{\overline{W}}_{jj'}^{\beta\beta'} + B_{ii'}B_{jj'}\frac{\delta^{\alpha\alpha'}\delta^{\beta\beta'}}{D^2} \tr(d\bs{W}_{ii'})\tr(d\bs{W}_{jj'}) &\\
&\stackrel{(1,3)}{=}(\delta_{ij}\delta_{i'j'} + \delta_{i'j}\delta_{ij'})\left[\frac{A_{ii'}A_{jj'}}{2}(\delta^{\alpha\beta}\delta^{\alpha'\beta'}+\delta^{\alpha'\beta}\delta^{\alpha\beta'})-\frac{A_{ii'}A_{jj'}}{D}\delta^{\alpha\alpha'}\delta^{\beta\beta'}+\frac{B_{ii'}B_{jj'}}{D^2}\delta^{\alpha\alpha'}\delta^{\beta\beta'}D\right]dt &\\
&=(\delta_{ij}\delta_{i'j'} + \delta_{i'j}\delta_{ij'})\left[\frac{A_{ii'}A_{jj'}}{2}(\delta^{\alpha\beta}\delta^{\alpha'\beta'}+\delta^{\alpha'\beta}\delta^{\alpha\beta'})+\frac{B_{ii'}B_{jj'}-A_{ii'}A_{jj'}}{D}\delta^{\alpha\alpha'}\delta^{\beta\beta'}\right]dt.
\end{flalign}

\end{proof}

Now we can compute the $\bs{M}$ matrix by blocks using the fluctuation-dissipation theorem in \myeqref{eq:FDT},
\begin{equation}
\bs{M}_{ij}dt=\frac{ d\tilde{\bs{x}}_id\tilde{\bs{x}}_j^T}{2k_B}=\frac{1}{2k_B}\begin{bmatrix}
\bs{0} & \bs{0} & \bs{0}\\
\bs{0} &  d\tilde{\bs{v}}_i d\tilde{\bs{v}}_j^T &  d\tilde{\bs{v}}_i d\tilde{S}_j\\
\bs{0} &  d\tilde{S}_i d\tilde{\bs{v}}_j^T &  d\tilde{S}_i d\tilde{S}_j
\end{bmatrix}.
\end{equation}

Note that the complete $\bs{M}$ matrix is symmetric, but the individual blocks $\bs{M}_{ij}$ are not. We first need to apply the ansatz of $d\tilde{\bs{x}}$ in \myeqref{eq:dx_tilde}. For convenience, we will compute the individual components of the vectorial quantities, such as $\bs{v}^\alpha$ or $\bs{e}^\alpha$, by using the Einstein summation convention (only for the greek superscripts). We will need the following summation rules for the Kronecker deltas:
\begin{itemize}
\item Index contraction: $\delta^{\alpha\beta}\bs{v}^\alpha = \sum_\alpha\delta^{\alpha\beta}\bs{v}^\alpha = \bs{v}^\beta$
\item Dot product: $\bs{v}^\alpha\delta^{\alpha\beta}\bs{e}^\beta=\sum_{\alpha,\beta}\bs{v}^\alpha\delta^{\alpha\beta}\bs{e}^\beta=\sum_{\alpha}\bs{v}^\alpha\bs{e}^\alpha=\bs{v}\cdot\bs{e}$
\end{itemize}

We will also need to apply the symmetry of the coefficients $A_{ij}=A_{ji}$, $B_{ij}=B_{ji}$ and $C_{ij}=C_{ji}$, the skew-symmetry of $\bs{e}_{ij}=-\bs{e}_{ji}$ and the unitary modulus of $\bs{e}_{ij}\cdot\bs{e}_{ij}=1$. Now, we can compute the individual components by using \mylemmaref{lemma:identities}. First, we will contract the Greek superscripts and then the Roman subscripts.

\begin{flalign}
m^2\frac{ d\tilde{\bs{v}}_i^\alpha d\tilde{\bs{v}}_j^\beta}{2k_B}&=\sum_{i',j'}\left[A_{ii'}d\bs{\overline{W}}_{ii'}^{\alpha\alpha'}+B_{ii'}\frac{\delta^{\alpha\alpha'}}{D}\tr(d\bs{W}_{ii'})\right]\bs{e}_{ii'}^{\alpha'}\left[A_{jj'}d\bs{\overline{W}}_{jj'}^{\beta\beta'}+B_{jj'}\frac{\delta^{\beta\beta'}}{D}\tr(d\bs{W}_{jj'})\right]\bs{e}_{jj'}^{\beta'} &\\
&=\sum_{i',j'}(\delta_{ij}\delta_{i'j'} + \delta_{i'j}\delta_{ij'})\left[\frac{A_{ii'}A_{jj'}}{2}(\delta^{\alpha\beta}\delta^{\alpha'\beta'}+\delta^{\alpha'\beta}\delta^{\alpha\beta'})+\frac{B_{ii'}B_{jj'}-A_{ii'}A_{jj'}}{D}\delta^{\alpha\alpha'}\delta^{\beta\beta'}\right]\bs{e}_{ii'}^{\alpha'}\bs{e}_{jj'}^{\beta'}dt&\\
&=\sum_{i',j'}(\delta_{ij}\delta_{i'j'} + \delta_{i'j}\delta_{ij'})\left[\frac{A_{ii'}A_{jj'}}{2}(\delta^{\alpha\beta}\bs{e}_{ii'}\cdot\bs{e}_{jj'}+\bs{e}_{ii'}^{\beta}\bs{e}_{jj'}^{\alpha})+\frac{B_{ii'}B_{jj'}-A_{ii'}A_{jj'}}{D}\bs{e}_{ii'}^{\alpha}\bs{e}_{jj'}^{\beta}\right]dt&\\
&=\delta_{ij}\sum_{k\in\mathcal{N}_i}\left[\frac{A_{ik}A_{jk}}{2}(\delta^{\alpha\beta}\bs{e}_{ik}\cdot\bs{e}_{jk}+\bs{e}_{ik}^{\beta}\bs{e}_{jk}^{\alpha})+\frac{B_{ik}B_{jk}-A_{ik}A_{jk}}{D}\bs{e}_{ik}^{\alpha}\bs{e}_{jk}^{\beta}\right]dt &\\
&\quad+\left[\frac{A_{ij}A_{ji}}{2}(\delta^{\alpha\beta}\bs{e}_{ij}\cdot\bs{e}_{ji}+\bs{e}_{ij}^{\beta}\bs{e}_{ji}^{\alpha})+\frac{B_{ij}B_{ji}-A_{ij}A_{ji}}{D}\bs{e}_{ij}^{\alpha}\bs{e}_{ji}^{\beta}\right]dt&\\
&=\delta_{ij}\sum_{k\in\mathcal{N}_i}\left[\frac{A_{ik}^2}{2}(\delta^{\alpha\beta}+\bs{e}_{ik}^{\alpha}\bs{e}_{ik}^{\beta})+\frac{B_{ik}^2-A_{ik}^2}{D}\bs{e}_{ik}^{\alpha}\bs{e}_{ik}^{\beta}\right]dt 
-\left[\frac{A_{ij}^2}{2}(\delta^{\alpha\beta}+\bs{e}_{ij}^{\alpha}\bs{e}_{ij}^{\beta})+\frac{B_{ij}^2-A_{ij}^2}{D}\bs{e}_{ij}^{\alpha}\bs{e}_{ij}^{\beta}\right]dt.
\end{flalign}

We can use very similar proofs for the remaining terms. Remember that the Wiener processes $dV_{ij}$ are independent of $d\bs{W}_{ij}$  by \myeqref{eq:Wiener}, so the $C_{ij}$ terms only appear in the entropic term of the matrix. Also, we will need to use the skew-symmetry of $\bs{v}_{ij}=-\bs{v}_{ji}$.

\begin{flalign}
mT_j\frac{ d\tilde{\bs{v}}_i^\alpha d\tilde{S}_j}{2k_B} &=
\sum_{i',j'}\left[A_{ii'}d\bs{\overline{W}}_{ii'}^{\alpha\alpha'}+B_{ii'}\frac{\delta^{\alpha\alpha'}}{D}\tr(d\bs{W}_{ii'})\right]\bs{e}_{ii'}^{\alpha'}\Bigg(-\frac{1}{2}\left[A_{jj'}d\bs{\overline{W}}_{jj'}^{\beta\beta'}+B_{jj'}\frac{\delta^{\beta\beta'}}{D}\tr(d\bs{W}_{jj'})\right]\bs{e}_{jj'}^{\beta'}\bs{v}_{jj'}^{\beta} &\\
&\quad+C_{jj'}dV_{jj'}\Bigg) &\\
&=-\sum_{i',j'}(\delta_{ij}\delta_{i'j'} + \delta_{i'j}\delta_{ij'})\left[\frac{A_{ii'}A_{jj'}}{2}(\delta^{\alpha\beta}\delta^{\alpha'\beta'}+\delta^{\alpha'\beta}\delta^{\alpha\beta'})+\frac{B_{ii'}B_{jj'}-A_{ii'}A_{jj'}}{D}\delta^{\alpha\alpha'}\delta^{\beta\beta'}\right]\bs{e}_{ii'}^{\alpha'}\bs{e}_{jj'}^{\beta'}\frac{\bs{v}_{jj'}^{\beta}}{2}dt&\\
&=-\sum_{i',j'}(\delta_{ij}\delta_{i'j'} + \delta_{i'j}\delta_{ij'})\Bigg[\frac{A_{ii'}A_{jj'}}{2}\left(\frac{\bs{v}_{jj'}^\alpha}{2}\bs{e}_{ii'}\cdot\bs{e}_{jj'}+\bs{e}_{ii'}\cdot\frac{\bs{v}_{jj'}}{2}\bs{e}_{jj'}^\alpha\right)+\frac{B_{ii'}B_{jj'}-A_{ii'}A_{jj'}}{D}\bs{e}_{ii'}^{\alpha}\bs{e}_{jj'}\cdot\frac{\bs{v}_{jj'}}{2}\Bigg]dt&\\
&=-\delta_{ij}\sum_{k\in\mathcal{N}_i}\left[\frac{A_{ik}A_{jk}}{2}\left(\frac{\bs{v}_{jk}^\alpha}{2}\bs{e}_{ik}\cdot\bs{e}_{jk}+\bs{e}_{ik}\cdot\frac{\bs{v}_{jk}}{2}\bs{e}_{jk}^\alpha\right)+\frac{B_{ik}B_{jk}-A_{ik}A_{jk}}{D}\bs{e}_{ik}^{\alpha}\bs{e}_{jk}\cdot\frac{\bs{v}_{jk}}{2}\right]dt &\\
&\quad-\left[\frac{A_{ij}A_{ji}}{2}\left(\frac{\bs{v}_{ji}^\alpha}{2}\bs{e}_{ij}\cdot\bs{e}_{ji}+\bs{e}_{ij}\cdot\frac{\bs{v}_{ji}}{2}\bs{e}_{ji}^\alpha\right)+\frac{B_{ij}B_{ji}-A_{ij}A_{ji}}{D}\bs{e}_{ij}^{\alpha}\bs{e}_{ji}\cdot\frac{\bs{v}_{ji}}{2}\right]dt &\\
&=-\delta_{ij}\sum_{k\in\mathcal{N}_i}\left[\frac{A_{ik}^2}{2}\left(\frac{\bs{v}_{ik}^\alpha}{2}+\bs{e}_{ik}\cdot\frac{\bs{v}_{ik}}{2}\bs{e}_{ik}^\alpha\right)+\frac{B_{ik}^2-A_{ik}^2}{D}\bs{e}_{ik}^{\alpha}\bs{e}_{ik}\cdot\frac{\bs{v}_{ik}}{2}\right]dt &\\
&\quad-\left[\frac{A_{ij}^2}{2}\left(\frac{\bs{v}_{ij}^\alpha}{2}+\bs{e}_{ij}\cdot\frac{\bs{v}_{ij}}{2}\bs{e}_{ij}^\alpha\right)+\frac{B_{ij}^2-A_{ij}^2}{D}\bs{e}_{ij}^{\alpha}\bs{e}_{ij}\cdot\frac{\bs{v}_{ij}}{2}\right]dt. &\\
\end{flalign}

\begin{flalign}
mT_i\frac{ d\tilde{S}_i d\tilde{\bs{v}}_j^\alpha}{2k_B}&=
\sum_{i',j'}\left(-\frac{1}{2}\left[A_{ii'}d\bs{\overline{W}}_{ii'}^{\beta\beta'}+B_{ii'}\frac{\delta^{\beta\beta'}}{D}\tr(d\bs{W}_{ii'})\right]\bs{e}_{ii'}^{\beta'}\bs{v}_{ii'}^{\beta}+C_{ii'}dV_{ii'}\right)\left[A_{jj'}d\bs{\overline{W}}_{jj'}^{\alpha\alpha'}+B_{jj'}\frac{\delta^{\alpha\alpha'}}{D}\tr(d\bs{W}_{jj'})\right]\bs{e}_{jj'}^{\alpha'} &\\
&=-\sum_{i',j'}(\delta_{ij}\delta_{i'j'} + \delta_{ij'}\delta_{i'j})\left[\frac{A_{ii'}A_{jj'}}{2}(\delta^{\alpha\beta}\delta^{\alpha'\beta'}+\delta^{\alpha'\beta}\delta^{\alpha\beta'})+\frac{B_{ii'}B_{jj'}-A_{ii'}A_{jj'}}{D}\delta^{\alpha\alpha'}\delta^{\beta\beta'}\right]\bs{e}_{ii'}^{\beta'}\frac{\bs{v}_{ii'}^{\beta}}{2}\bs{e}_{jj'}^{\alpha'}dt &\\
&=-\sum_{i',j'}(\delta_{ij}\delta_{i'j'} + \delta_{ij''}\delta_{i'j})\left[\frac{A_{ii'}A_{jj'}}{2}\left(\frac{\bs{v}_{ii'}^\alpha}{2}\bs{e}_{ii'}\cdot\bs{e}_{jj'}+\bs{e}_{jj'}\cdot\frac{\bs{v}_{ii'}}{2}\bs{e}_{ii'}^\alpha\right)+\frac{B_{ii'}B_{jj'}-A_{ii'}A_{jj'}}{D}\bs{e}_{jj'}^{\alpha}\bs{e}_{ii'}\cdot\frac{\bs{v}_{ii'}}{2}\right]dt &\\
&=-\delta_{ij}\sum_{k\in\mathcal{N}_i}\left[\frac{A_{ik}A_{jk}}{2}\left(\frac{\bs{v}_{ik}^\alpha}{2}\bs{e}_{ik}\cdot\bs{e}_{jk}+\bs{e}_{jk}\cdot\frac{\bs{v}_{ik}}{2}\bs{e}_{ik}^\alpha\right)+\frac{B_{ik}B_{jk}-A_{ik}A_{jk}}{D}\bs{e}_{jk}^{\alpha}\bs{e}_{ik}\cdot\frac{\bs{v}_{ik}}{2}\right]dt &\\
&\quad-\left[\frac{A_{ij}A_{ji}}{2}\left(\frac{\bs{v}_{ij}^\alpha}{2}\bs{e}_{ij}\cdot\bs{e}_{ji}+\bs{e}_{ji}\cdot\frac{\bs{v}_{ij}}{2}\bs{e}_{ij}^\alpha\right)+\frac{B_{ij}B_{ji}-A_{ij}A_{ji}}{D}\bs{e}_{ji}^{\alpha}\bs{e}_{ij}\cdot\frac{\bs{v}_{ij}}{2}\right]dt&\\
&=-\delta_{ij}\sum_{k\in\mathcal{N}_i}\left[\frac{A_{ik}^2}{2}\left(\frac{\bs{v}_{ik}^\alpha}{2}+\bs{e}_{ik}\cdot\frac{\bs{v}_{ik}}{2}\bs{e}_{ik}^\alpha\right)+\frac{B_{ik}^2-A_{ik}^2}{D}\bs{e}_{ik}^{\alpha}\bs{e}_{ik}\cdot\frac{\bs{v}_{ik}}{2}\right]dt &\\
&\quad+\left[\frac{A_{ij}^2}{2}\left(\frac{\bs{v}_{ij}^\alpha}{2}+\bs{e}_{ij}\cdot\frac{\bs{v}_{ij}}{2}\bs{e}_{ik}^\alpha\right)+\frac{B_{ij}^2-A_{ij}^2}{D}\bs{e}_{ij}^{\alpha}\bs{e}_{ij}\cdot\frac{\bs{v}_{ij}}{2}\right]dt. &\\
\end{flalign}

\begin{flalign}
T_iT_j\frac{ d\tilde{S}_i d\tilde{S}_j}{2k_B} &=
\sum_{i',j'}\left(-\frac{1}{2}\left[A_{ii'}d\bs{\overline{W}}_{ii'}^{\alpha\alpha'}+B_{ii'}\frac{\delta^{\alpha\alpha'}}{D}\tr(d\bs{W}_{ii'})\right]\bs{e}_{ii'}^{\alpha'}\bs{v}_{ii'}^{\alpha}+C_{ii'}dV_{ii'}\right)\Bigg(-\frac{1}{2}\Bigg[A_{jj'}d\bs{\overline{W}}_{jj'}^{\beta\beta'} &\\
&\quad+B_{jj'}\frac{\delta^{\beta\beta'}}{D}\tr(d\bs{W}_{jk})\Bigg]\bs{e}_{jj'}^{\beta'}\bs{v}_{jj'}^{\beta}+C_{jj'}dV_{jj'}\Bigg) &\\
&=\sum_{i,j'}(\delta_{ij}\delta_{i'j'} + \delta_{ij'}\delta_{i'j})\left[\frac{A_{ii'}A_{jj'}}{2}(\delta^{\alpha\beta}\delta^{\alpha'\beta'}+\delta^{\alpha'\beta}\delta^{\alpha\beta'})+\frac{B_{ii'}B_{jj'}-A_{ii'}A_{jj'}}{D}\delta^{\alpha\alpha'}\delta^{\beta\beta'}\right]\bs{e}_{ii'}^{\alpha'}\frac{\bs{v}_{ii'}^{\alpha}}{2}\bs{e}_{jj'}^{\beta'}\frac{\bs{v}_{jj'}^{\beta}}{2}dt &\\
&\quad+\sum_{i',j'}(\delta_{ij}\delta_{i'j'} - \delta_{ij'}\delta_{i'j})C_{ii'}C_{jj'}dt &\\
&=\sum_{i,j'}(\delta_{ij}\delta_{i'j'} + \delta_{ij'}\delta_{i'j})\Bigg(\frac{A_{ii'}A_{jj'}}{2}\left[\left(\frac{\bs{v}_{ii'}}{2}\cdot\frac{\bs{v}_{jj'}}{2}\right)\bs{e}_{ii'}\cdot\bs{e}_{jj'}+\left(\bs{e}_{ii'}\cdot\frac{\bs{v}_{jj'}}{2}\right)\left(\bs{e}_{jj'}\cdot\frac{\bs{v}_{ii'}}{2}\right)\right]&\\
&\quad+\frac{B_{ii'}B_{jj'}-A_{ii'}A_{jj'}}{D}\left(\bs{e}_{ii'}\cdot\frac{\bs{v}_{ii'}}{2}\right)\left(\bs{e}_{jj'}\cdot\frac{\bs{v}_{jj'}}{2}\right)\Bigg)dt +\sum_{i',j'}(\delta_{ij}\delta_{i'j'} - \delta_{ij'}\delta_{i'j})C_{ii'}C_{jj'}dt &\\
&=\delta_{ij}\sum_{k\in\mathcal{N}_i}\Bigg(\frac{A_{ik}A_{jk}}{2}\left[\left(\frac{\bs{v}_{ik}}{2}\cdot\frac{\bs{v}_{jk}}{2}\right)\bs{e}_{ik}\cdot\bs{e}_{jk}+\left(\bs{e}_{ik}\cdot\frac{\bs{v}_{jk}}{2}\right)\left(\bs{e}_{jk}\cdot\frac{\bs{v}_{ik}}{2}\right)\right]&\\
&\quad+\frac{B_{ik}B_{jk}-A_{ik}A_{jk}}{D}\left(\bs{e}_{ik}\cdot\frac{\bs{v}_{ik}}{2}\right)\left(\bs{e}_{jk}\cdot\frac{\bs{v}_{jk}}{2}\right)\Bigg)dt &\\
&\quad+\frac{A_{ij}A_{ji}}{2}\left[\left(\frac{\bs{v}_{ij}}{2}\cdot\frac{\bs{v}_{ji}}{2}\right)\bs{e}_{ij}\cdot\bs{e}_{ji}+\left(\bs{e}_{ij}\cdot\frac{\bs{v}_{ji}}{2}\right)\left(\bs{e}_{ji}\cdot\frac{\bs{v}_{ij}}{2}\right)\right]+\frac{B_{ij}B_{ji}-A_{ij}A_{ji}}{D}\left(\bs{e}_{ij}\cdot\frac{\bs{v}_{ij}}{2}\right)\left(\bs{e}_{ji}\cdot\frac{\bs{v}_{ji}}{2}\right)dt&\\
&\quad+\delta_{ij}\sum_{k\in\mathcal{N}_i} C_{ik}C_{jk}dt-C_{ij}C_{ji}dt &\\
&=\delta_{ij}\sum_{k\in\mathcal{N}_i}\left(\frac{A_{ik}^2}{2}\left[\left(\frac{\bs{v}_{ik}}{2}\right)^2+\left(\bs{e}_{ik}\cdot\frac{\bs{v}_{ik}}{2}\right)^2\right]+\frac{B_{ik}^2-A_{ik}^2}{D}\left(\bs{e}_{ik}\cdot\frac{\bs{v}_{ik}}{2}\right)^2+C_{ik}^2\right)dt &\\
&\quad+\left(\frac{A_{ij}^2}{2}\left[\left(\frac{\bs{v}_{ij}}{2}\right)^2+\left(\bs{e}_{ij}\cdot\frac{\bs{v}_{ij}}{2}\right)^2\right]+\frac{B_{ij}^2-A_{ij}^2}{D}\left(\bs{e}_{ij}\cdot\frac{\bs{v}_{ij}}{2}\right)^2-C_{ij}^2\right)dt. &\\
\end{flalign}

First, lets compute the $\bs{M}$ contributions to the dynamics:

\begin{flalign}
m\sum_{j\in\mathcal{N}_i}\frac{ d\tilde{\bs{v}}_i d\tilde{S}_j}{2k_B}&=-\sum_{j\in\mathcal{N}_i}\frac{1}{T_j}\delta_{ij}\sum_{k\in\mathcal{N}_i}\left[\frac{A_{ik}^2}{2}\left(\frac{\bs{v}_{ik}}{2}+\bs{e}_{ik}\cdot\frac{\bs{v}_{ik}}{2}\bs{e}_{ik}\right)+\frac{B_{ik}^2-A_{ik}^2}{D}\bs{e}_{ik}\bs{e}_{ik}\cdot\frac{\bs{v}_{ik}}{2}\right]dt &\\
&\quad-\sum_{j\in\mathcal{N}_i}\frac{1}{T_j}\left[\frac{A_{ij}^2}{2}\left(\frac{\bs{v}_{ij}}{2}+\bs{e}_{ij}\cdot\frac{\bs{v}_{ij}}{2}\bs{e}_{ij}\right)+\frac{B_{ij}^2-A_{ij}^2}{D}\bs{e}_{ij}\bs{e}_{ij}\cdot\frac{\bs{v}_{ij}}{2}\right]dt &\\
&=-\frac{1}{2T_i}\sum_{j\in\mathcal{N}_i}\left[\frac{A_{ij}^2}{2}\left(\bs{v}_{ij}+\bs{e}_{ij}\cdot\bs{v}_{ij}\bs{e}_{ij}\right)+\frac{B_{ij}^2-A_{ij}^2}{D}\bs{e}_{ij}\bs{e}_{ij}\cdot\bs{v}_{ij}\right]dt &\\
&\quad-\frac{1}{2}\sum_{j\in\mathcal{N}_i}\frac{1}{T_j}\left[\frac{A_{ij}^2}{2}\left(\bs{v}_{ij}+\bs{e}_{ij}\cdot\bs{v}_{ij}\bs{e}_{ij}\right)+\frac{B_{ij}^2-A_{ij}^2}{D}\bs{e}_{ij}\bs{e}_{ij}\cdot\bs{v}_{ij}\right]dt &\\
&=-\frac{1}{2}\sum_{j\in\mathcal{N}_i}\left(\frac{1}{T_i}+\frac{1}{T_j}\right)\left[\frac{A_{ij}^2}{2}\left(\bs{v}_{ij}+\bs{e}_{ij}\cdot\bs{v}_{ij}\bs{e}_{ij}\right)+\frac{B_{ij}^2-A_{ij}^2}{D}\bs{e}_{ij}\bs{e}_{ij}\cdot\bs{v}_{ij}\right]dt &\\
&=-\frac{1}{2}\sum_{j\in\mathcal{N}_i}\left(\frac{1}{T_i}+\frac{1}{T_j}\right)\left[\frac{A_{ij}^2}{2}\bs{v}_{ij}+\left(\frac{A_{ij}^2}{2}+\frac{B_{ij}^2-A_{ij}^2}{D}\right)\bs{e}_{ij}\cdot\bs{v}_{ij}\bs{e}_{ij}\right]dt. &\\
\end{flalign}

\begin{flalign}
T_i\sum_{j\in\mathcal{N}_i}\frac{ d\tilde{S}_i d\tilde{S}_j}{2k_B}&=\sum_{j\in\mathcal{N}_i}\frac{1}{T_j}\delta_{ij}\sum_{k\in\mathcal{N}_i}\left(\frac{A_{ik}^2}{2}\left[\left(\frac{\bs{v}_{ik}}{2}\right)^2+\left(\bs{e}_{ik}\cdot\frac{\bs{v}_{ik}}{2}\right)^2\right]+\frac{B_{ik}^2-A_{ik}^2}{D}\left(\bs{e}_{ik}\cdot\frac{\bs{v}_{ik}}{2}\right)^2+C_{ik}^2\right)dt &\\
&\quad+\sum_{j\in\mathcal{N}_i}\frac{1}{T_j}\left(\frac{A_{ij}^2}{2}\left[\left(\frac{\bs{v}_{ij}}{2}\right)^2+\left(\bs{e}_{ij}\cdot\frac{\bs{v}_{ij}}{2}\right)^2\right]+\frac{B_{ij}^2-A_{ij}^2}{D}\left(\bs{e}_{ij}\cdot\frac{\bs{v}_{ij}}{2}\right)^2-C_{ij}^2\right)dt &\\
&=\frac{1}{T_i}\sum_{j\in\mathcal{N}_i}\left(\frac{A_{ij}^2}{2}\left[\left(\frac{\bs{v}_{ij}}{2}\right)^2+\left(\bs{e}_{ij}\cdot\frac{\bs{v}_{ij}}{2}\right)^2\right]+\frac{B_{ij}^2-A_{ij}^2}{D}\left(\bs{e}_{ij}\cdot\frac{\bs{v}_{ij}}{2}\right)^2+C_{ij}^2\right)dt &\\
&\quad+\sum_{j\in\mathcal{N}_i}\frac{1}{T_j}\left(\frac{A_{ij}^2}{2}\left[\left(\frac{\bs{v}_{ij}}{2}\right)^2+\left(\bs{e}_{ij}\cdot\frac{\bs{v}_{ij}}{2}\right)^2\right]+\frac{B_{ij}^2-A_{ij}^2}{D}\left(\bs{e}_{ij}\cdot\frac{\bs{v}_{ij}}{2}\right)^2-C_{ij}^2\right)dt &\\
&=\frac{1}{4}\sum_{j\in\mathcal{N}_i}\left(\frac{1}{T_i}+\frac{1}{T_j}\right)\left(\frac{A_{ij}^2}{2}\left[\bs{v}_{ij}^2+(\bs{v}_{ij}\cdot\bs{e}_{ij})^2\right]+\frac{B_{ij}^2-A_{ij}^2}{D}(\bs{v}_{ij}\cdot\bs{e}_{ij})^2\right)dt+\sum_{j\in\mathcal{N}_i}\left(\frac{1}{T_i}-\frac{1}{T_j}\right)C_{ij}^2dt &\\
&=\frac{1}{4}\sum_{j\in\mathcal{N}_i}\left(\frac{1}{T_i}+\frac{1}{T_j}\right)\left[\frac{A_{ij}^2}{2}\bs{v}_{ij}^2+\left(\frac{A_{ij}^2}{2}+\frac{B_{ij}^2-A_{ij}^2}{D}\right)(\bs{v}_{ij}\cdot\bs{e}_{ij})^2\right]dt+\sum_{j\in\mathcal{N}_i}\left(\frac{1}{T_i}-\frac{1}{T_j}\right)C_{ij}^2dt. &\\
\end{flalign}

Last, we derive the divergence terms $\bs{\nabla}\cdot\bs{M}$. Note that we need to define the heat capacity at constant volume $C_i$ as the following expression:
\begin{equation}
\dpar{T_i}{S_i}=\frac{T_i}{C_i}.
\end{equation} 

We also need to establish the following divergence identities
\begin{equation}
\begin{split}
\dpar{}{\bs{v}_i}\cdot\bs{v}_{ij}&=\dpar{}{\bs{v}_i}\cdot(\bs{v}_i-\bs{v}_j)=\dpar{}{\bs{v}_i}\cdot\bs{v}_i=D,\\
\dpar{}{\bs{v}_j}\cdot\bs{v}_{ij}&=\dpar{}{\bs{v}_j}\cdot(\bs{v}_i-\bs{v}_j)=-\dpar{}{\bs{v}_j}\cdot\bs{v}_j=-D,\\
\dpar{}{\bs{v}_j}\cdot(\bs{e}_{ij}\cdot\bs{v}_{ij}\bs{e}_{ij})&=\dpar{}{\bs{v}_j^\beta}\bs{e}_{ij}^\alpha\bs{v}_{ij}^\alpha\bs{e}_{ij}^\beta=-\dpar{}{\bs{v}_j^\beta}\bs{e}_{ij}^\alpha\bs{v}_{ji}^\alpha\bs{e}_{ij}^\beta=-\bs{e}_{ij}^\beta\bs{e}_{ij}^\beta=-1,
\end{split}
\end{equation}
and usual derivatives
\begin{equation}
\dpar{}{S_i}\left(\frac{1}{T_i}\right)=-\frac{1}{T_iC_i}, \quad \dpar{}{S_i}\left(\frac{1}{T_i^2}\right)=-\frac{2}{T_i^2C_i}, \quad \dpar{A_{ij}}{S_i}=\dpar{A_{ij}}{T_i}\dpar{T_i}{S_i}=\dpar{A_{ij}}{T_i}\frac{T_i}{C_i}.
\end{equation} 

Now we can compute the remaining terms:

\begin{flalign}
m^2\sum_{j\in\mathcal{N}_i}\dpar{}{\bs{v}_j}\cdot\frac{ d\tilde{\bs{v}}_i d\tilde{\bs{v}}_j^T}{2k_B}&=\sum_{j\in\mathcal{N}_i}\dpar{}{\bs{v}_j}\cdot\delta_{ij}\sum_{k\in\mathcal{N}_i}\left[\frac{A_{ik}^2}{2}\left(\bs{I} + \bs{e}_{ik}\bs{e}_{ik}^T\right)+\frac{B_{ik}^2-A_{ik}^2}{D}\bs{e}_{ik}\bs{e}_{ik}^T\right]dt &\\
&\quad-\sum_{j\in\mathcal{N}_i}\dpar{}{\bs{v}_j}\cdot\left[\frac{A_{ij}^2}{2}\left(\bs{I} + \bs{e}_{ij}\bs{e}_{ij}^T\right)+\frac{B_{ij}^2-A_{ij}^2}{D}\bs{e}_{ij}\bs{e}_{ij}^T\right]dt=\bs{0}. &\\
\end{flalign}

\begin{flalign}
m\sum_{j\in\mathcal{N}_i}\dpar{}{S_j}\cdot\frac{ d\tilde{\bs{v}}_i d\tilde{S}_j}{2k_B}&=-\sum_{j\in\mathcal{N}_i}\dpar{}{S_j}\cdot\frac{1}{T_j}\delta_{ij}\sum_{k\in\mathcal{N}_i}\left[\frac{A_{ik}^2}{2}\left(\frac{\bs{v}_{ik}}{2}+\bs{e}_{ik}\cdot\frac{\bs{v}_{ik}}{2}\bs{e}_{ik}\right)+\frac{B_{ik}^2-A_{ik}^2}{D}\bs{e}_{ik}\bs{e}_{ik}\cdot\frac{\bs{v}_{ik}}{2}\right]dt &\\
&\quad-\sum_{j\in\mathcal{N}_i}\dpar{}{S_j}\cdot\frac{1}{T_j}\left[\frac{A_{ij}^2}{2}\left(\frac{\bs{v}_{ij}}{2}+\bs{e}_{ij}\cdot\frac{\bs{v}_{ij}}{2}\bs{e}_{ij}\right)+\frac{B_{ij}^2-A_{ij}^2}{D}\bs{e}_{ij}\bs{e}_{ij}\cdot\frac{\bs{v}_{ij}}{2}\right]dt &\\
&=-\frac{1}{2}\dpar{}{S_i}\cdot\frac{1}{T_i}\sum_{j\in\mathcal{N}_i}\left[\frac{A_{ij}^2}{2}\left(\bs{v}_{ij}+\bs{e}_{ij}\cdot\bs{v}_{ij}\bs{e}_{ij}\right)+\frac{B_{ij}^2-A_{ij}^2}{D}\bs{e}_{ij}\bs{e}_{ij}\cdot\bs{v}_{ij}\right]dt &\\
&\quad-\frac{1}{2}\sum_{j\in\mathcal{N}_i}\dpar{}{S_j}\cdot\frac{1}{T_j}\left[\frac{A_{ij}^2}{2}\left(\bs{v}_{ij}+\bs{e}_{ij}\cdot\bs{v}_{ij}\bs{e}_{ij}\right)+\frac{B_{ij}^2-A_{ij}^2}{D}\bs{e}_{ij}\bs{e}_{ij}\cdot\bs{v}_{ij}\right]dt &\\
&=\frac{1}{2}\sum_{j\in\mathcal{N}_i}\left(\frac{1}{T_iC_i}+\frac{1}{T_jC_j}\right)\left[\frac{A_{ij}^2}{2}\bs{v}_{ij}+\left(\frac{A_{ij}^2}{2}+\frac{B_{ij}^2-A_{ij}^2}{D}\right)\bs{e}_{ij}\cdot\bs{v}_{ij}\bs{e}_{ij}\right]dt &\\
&\quad -\frac{1}{2}\sum_{j\in\mathcal{N}_i}\frac{1}{T_i}\left[A_{ij}\dpar{A_{ij}}{S_i}\bs{v}_{ij}+\left(A_{ij}\dpar{A_{ij}}{S_i}+\frac{2}{D}B_{ij}\dpar{B_{ij}}{S_i} - \frac{2}{D}A_{ij}\dpar{A_{ij}}{S_i}\right)\bs{e}_{ij}\cdot\bs{v}_{ij}\bs{e}_{ij}\right]dt &\\
&\quad -\frac{1}{2}\sum_{j\in\mathcal{N}_i}\frac{1}{T_j}\left[A_{ij}\dpar{A_{ij}}{S_j}\bs{v}_{ij}+\left(A_{ij}\dpar{A_{ij}}{S_j}+\frac{2}{D}B_{ij}\dpar{B_{ij}}{S_j} - \frac{2}{D}A_{ij}\dpar{A_{ij}}{S_j}\right)\bs{e}_{ij}\cdot\bs{v}_{ij}\bs{e}_{ij}\right]dt &\\
&=\frac{1}{2}\sum_{j\in\mathcal{N}_i}\left(\frac{1}{T_iC_i}+\frac{1}{T_jC_j}\right)\left[\frac{A_{ij}^2}{2}\bs{v}_{ij}+\left(\frac{A_{ij}^2}{2}+\frac{B_{ij}^2-A_{ij}^2}{D}\right)\bs{e}_{ij}\cdot\bs{v}_{ij}\bs{e}_{ij}\right]dt &\\
&\quad -\frac{1}{2}\sum_{j\in\mathcal{N}_i}\frac{1}{C_i}\left[A_{ij}\dpar{A_{ij}}{T_i}\bs{v}_{ij}+\left(A_{ij}\dpar{A_{ij}}{T_i}+\frac{2}{D}B_{ij}\dpar{B_{ij}}{T_i} - \frac{2}{D}A_{ij}\dpar{A_{ij}}{T_i}\right)\bs{e}_{ij}\cdot\bs{v}_{ij}\bs{e}_{ij}\right]dt &\\
&\quad -\frac{1}{2}\sum_{j\in\mathcal{N}_i}\frac{1}{C_j}\left[A_{ij}\dpar{A_{ij}}{T_j}\bs{v}_{ij}+\left(A_{ij}\dpar{A_{ij}}{T_j}+\frac{2}{D}B_{ij}\dpar{B_{ij}}{T_j} - \frac{2}{D}A_{ij}\dpar{A_{ij}}{T_j}\right)\bs{e}_{ij}\cdot\bs{v}_{ij}\bs{e}_{ij}\right]dt &\\
\end{flalign}

\begin{flalign}
T_i\sum_{j\in\mathcal{N}_i}\dpar{}{\bs{v}_j}\cdot\frac{ d\tilde{S}_i d\tilde{\bs{v}}_j^T}{2k_B}&=-\sum_{j\in\mathcal{N}_i}\dpar{}{\bs{v}_j}\cdot\frac{1}{m}\delta_{ij}\sum_{k\in\mathcal{N}_i}\left[\frac{A_{ik}^2}{2}\left(\frac{\bs{v}_{ik}}{2}+\bs{e}_{ik}\cdot\frac{\bs{v}_{ik}}{2}\bs{e}_{ik}\right)+\frac{B_{ik}^2-A_{ik}^2}{D}\bs{e}_{ik}\bs{e}_{ik}\cdot\frac{\bs{v}_{ik}}{2}\right]dt &\\
&\quad+\sum_{j\in\mathcal{N}_i}\dpar{}{\bs{v}_j}\cdot\frac{1}{m}\left[\frac{A_{ij}^2}{2}\left(\frac{\bs{v}_{ij}}{2}+\bs{e}_{ij}\cdot\frac{\bs{v}_{ij}}{2}\bs{e}_{ij}\right)+\frac{B_{ij}^2-A_{ij}^2}{D}\bs{e}_{ij}\bs{e}_{ij}\cdot\frac{\bs{v}_{ij}}{2}\right]dt &\\
&=-\frac{1}{2m}\dpar{}{\bs{v}_i}\cdot\sum_{j\in\mathcal{N}_i}\left[\frac{A_{ij}^2}{2}\left(\bs{v}_{ij}+\bs{e}_{ij}\cdot\bs{v}_{ij}\bs{e}_{ij}\right)+\frac{B_{ij}^2-A_{ij}^2}{D}\bs{e}_{ij}\bs{e}_{ij}\cdot\bs{v}_{ij}\right]dt &\\
&\quad+\frac{1}{2m}\sum_{j\in\mathcal{N}_i}\dpar{}{\bs{v}_j}\cdot\left[\frac{A_{ij}^2}{2}\left(\bs{v}_{ij}+\bs{e}_{ij}\cdot\bs{v}_{ij}\bs{e}_{ij}\right)+\frac{B_{ij}^2-A_{ij}^2}{D}\bs{e}_{ij}\bs{e}_{ij}\cdot\bs{v}_{ij}\right]dt &\\
&=-\frac{1}{m}\sum_{j\in\mathcal{N}_i}\left[(D+1)\frac{A_{ij}^2}{2}+\frac{B_{ij}^2-A_{ij}^2}{D}\right]dt.&\\
\end{flalign}

\begin{flalign}
\sum_{j\in\mathcal{N}_i}\dpar{}{S_j}\cdot\frac{ d\tilde{S}_i d\tilde{S}_j}{2k_B}&=\sum_{j\in\mathcal{N}_i}\dpar{}{S_j}\cdot\frac{1}{T_iT_j}\delta_{ij}\sum_{k\in\mathcal{N}_i}\left(\frac{A_{ik}^2}{2}\left[\left(\frac{\bs{v}_{ik}}{2}\right)^2+\left(\bs{e}_{ik}\cdot\frac{\bs{v}_{ik}}{2}\right)^2\right]+\frac{B_{ik}^2-A_{ik}^2}{D}\left(\bs{e}_{ik}\cdot\frac{\bs{v}_{ik}}{2}\right)^2+C_{ik}^2\right)dt &\\
&\quad+\sum_{j\in\mathcal{N}_i}\dpar{}{S_j}\cdot\frac{1}{T_iT_j}\left(\frac{A_{ij}^2}{2}\left[\left(\frac{\bs{v}_{ij}}{2}\right)^2+\left(\bs{e}_{ij}\cdot\frac{\bs{v}_{ij}}{2}\right)^2\right]+\frac{B_{ij}^2-A_{ij}^2}{D}\left(\bs{e}_{ij}\cdot\frac{\bs{v}_{ij}}{2}\right)^2-C_{ij}^2\right)dt &\\
&=\dpar{}{S_i}\cdot\frac{1}{T_i^2}\sum_{j\in\mathcal{N}_i}\left(\frac{A_{ij}^2}{2}\left[\left(\frac{\bs{v}_{ij}}{2}\right)^2+\left(\bs{e}_{ij}\cdot\frac{\bs{v}_{ij}}{2}\right)^2\right]+\frac{B_{ij}^2-A_{ij}^2}{D}\left(\bs{e}_{ij}\cdot\frac{\bs{v}_{ij}}{2}\right)^2+C_{ij}^2\right)dt &\\
&\quad+\sum_{j\in\mathcal{N}_i}\dpar{}{S_j}\cdot\frac{1}{T_iT_j}\left(\frac{A_{ij}^2}{2}\left[\left(\frac{\bs{v}_{ij}}{2}\right)^2+\left(\bs{e}_{ij}\cdot\frac{\bs{v}_{ij}}{2}\right)^2\right]+\frac{B_{ij}^2-A_{ij}^2}{D}\left(\bs{e}_{ij}\cdot\frac{\bs{v}_{ij}}{2}\right)^2-C_{ij}^2\right)dt &\\
&=-\frac{1}{4T_i}\sum_{j\in\mathcal{N}_i}\left(\frac{2}{T_iC_i}+\frac{1}{T_jC_j}\right)\left[\frac{A_{ij}^2}{2}\bs{v}_{ij}^2+\left(\frac{A_{ij}^2}{2}+\frac{B_{ij}^2-A_{ij}^2}{D}\right)(\bs{v}_{ij}\cdot\bs{e}_{ij})^2\right]dt &\\
&\quad-\frac{1}{T_i}\sum_{j\in\mathcal{N}_i}\left(\frac{2}{T_iC_i}-\frac{1}{T_jC_j}\right)C_{ij}^2dt &\\
&\quad+\frac{1}{4T_i}\sum_{j\in\mathcal{N}_i}\frac{1}{T_i}\left[A_{ij}\dpar{A_{ij}}{S_i}\bs{v}_{ij}^2+\left(A_{ij}\dpar{A_{ij}}{S_i}+\frac{2}{D}B_{ij}\dpar{B_{ij}}{S_i} - \frac{2}{D}A_{ij}\dpar{A_{ij}}{S_i}\right)(\bs{v}_{ij}\cdot\bs{e}_{ij})^2\right]dt &\\
&\quad+\frac{1}{4T_i}\sum_{j\in\mathcal{N}_i}\frac{1}{T_j}\left[A_{ij}\dpar{A_{ij}}{S_j}\bs{v}_{ij}^2+\left(A_{ij}\dpar{A_{ij}}{S_j}+\frac{2}{D}B_{ij}\dpar{B_{ij}}{S_j} - \frac{2}{D}A_{ij}\dpar{A_{ij}}{S_j}\right)(\bs{v}_{ij}\cdot\bs{e}_{ij})^2\right]dt &\\
&\quad+\frac{1}{T_i}\sum_{j\in\mathcal{N}_i}\left(\frac{1}{T_i}2C_{ij}\dpar{C_{ij}}{S_i} - \frac{1}{T_j}2C_{ij}\dpar{C_{ij}}{S_j}\right)dt &\\
&=-\frac{1}{4T_i}\sum_{j\in\mathcal{N}_i}\left(\frac{2}{T_iC_i}+\frac{1}{T_jC_j}\right)\left[\frac{A_{ij}^2}{2}\bs{v}_{ij}^2+\left(\frac{A_{ij}^2}{2}+\frac{B_{ij}^2-A_{ij}^2}{D}\right)(\bs{v}_{ij}\cdot\bs{e}_{ij})^2\right]dt &\\
&\quad-\frac{1}{T_i}\sum_{j\in\mathcal{N}_i}\left(\frac{2}{T_iC_i}-\frac{1}{T_jC_j}\right)C_{ij}^2dt &\\
&\quad+\frac{1}{4T_i}\sum_{j\in\mathcal{N}_i}\frac{1}{C_i}\left[A_{ij}\dpar{A_{ij}}{T_i}\bs{v}_{ij}^2+\left(A_{ij}\dpar{A_{ij}}{T_i}+\frac{2}{D}B_{ij}\dpar{B_{ij}}{T_i} - \frac{2}{D}A_{ij}\dpar{A_{ij}}{T_i}\right)(\bs{v}_{ij}\cdot\bs{e}_{ij})^2\right]dt &\\
&\quad+\frac{1}{4T_i}\sum_{j\in\mathcal{N}_i}\frac{1}{C_j}\left[A_{ij}\dpar{A_{ij}}{T_j}\bs{v}_{ij}^2+\left(A_{ij}\dpar{A_{ij}}{T_j}+\frac{2}{D}B_{ij}\dpar{B_{ij}}{T_j} - \frac{2}{D}A_{ij}\dpar{A_{ij}}{T_j}\right)(\bs{v}_{ij}\cdot\bs{e}_{ij})^2\right]dt &\\
&\quad+\frac{2}{T_i}\sum_{j\in\mathcal{N}_i}\left(\frac{1}{C_i}C_{ij}\dpar{C_{ij}}{T_i} - \frac{1}{C_j}C_{ij}\dpar{C_{ij}}{T_j}\right)dt. &\\
\end{flalign}

\subsection{Complete equations}\label{subsec:dynamics}

With the parameterization specified we finally can summarize the dynamic equations of the method.

\begin{flalign}
d\bs{r}_i&=\bs{v}_idt, \label{eq:r_eq}&\\
md\bs{v}_i&=\sum_{j\in\mathcal{N}_i}\left[\frac{P_j}{d_j^2} \dpar{W_{ji}}{\bs{r}_{ji}}-\frac{P_i}{d_i^2} \dpar{W_{ij}}{\bs{r}_{ij}}\right]dt + \sum_j\left[\bs{\tau}_j\dpar{\overline{\bs{W}}_{ji}}{\bs{u}_{ji}} - \bs{\tau}_i\dpar{\overline{\bs{W}}_{ij}}{\bs{u}_{ij}}\right] &\\
&\quad-\frac{1}{2}\sum_{j\in\mathcal{N}_i}\left(\frac{1}{T_i}+\frac{1}{T_j}\right)\left[\frac{A_{ij}^2}{2}\bs{v}_{ij}+\left(\frac{A_{ij}^2}{2}+\frac{B_{ij}^2-A_{ij}^2}{D}\right)\bs{e}_{ij}\cdot\bs{v}_{ij}\bs{e}_{ij}\right]dt &\\
&\quad+\frac{k_B}{2}\sum_{j\in\mathcal{N}_i}\left(\frac{1}{T_iC_i}+\frac{1}{T_jC_j}\right)\left[\frac{A_{ij}^2}{2}\bs{v}_{ij}+\left(\frac{A_{ij}^2}{2}+\frac{B_{ij}^2-A_{ij}^2}{D}\right)\bs{e}_{ij}\cdot\bs{v}_{ij}\bs{e}_{ij}\right]dt \label{eq:v_eq}&\\
&\quad-\frac{k_B}{2}\sum_{j\in\mathcal{N}_i}\frac{1}{C_i}\left[A_{ij}\dpar{A_{ij}}{T_i}\bs{v}_{ij}+\left(A_{ij}\dpar{A_{ij}}{T_i}+\frac{2}{D}B_{ij}\dpar{B_{ij}}{T_i} - \frac{2}{D}A_{ij}\dpar{A_{ij}}{T_i}\right)\bs{e}_{ij}\cdot\bs{v}_{ij}\bs{e}_{ij}\right]dt&\\
&\quad-\frac{k_B}{2}\sum_{j\in\mathcal{N}_i}\frac{1}{C_j}\left[A_{ij}\dpar{A_{ij}}{T_j}\bs{v}_{ij}+\left(A_{ij}\dpar{A_{ij}}{T_j}+\frac{2}{D}B_{ij}\dpar{B_{ij}}{T_j} - \frac{2}{D}A_{ij}\dpar{A_{ij}}{T_j}\right)\bs{e}_{ij}\cdot\bs{v}_{ij}\bs{e}_{ij}\right]dt&\\
&\quad+\sqrt{2k_B}\sum_{j\in\mathcal{N}_i}\left[A_{ij}d\bs{\overline{W}}_{ij}+B_{ij}\frac{1}{D}\tr(d\bs{W}_{ij})\right]\bs{e}_{ij}, &\\
T_idS_i&=\frac{1}{4}\sum_{j\in\mathcal{N}_i}\left(\frac{1}{T_i}+\frac{1}{T_j}\right)\left[\frac{A_{ij}^2}{2}\bs{v}_{ij}^2+\left(\frac{A_{ij}^2}{2}+\frac{B_{ij}^2-A_{ij}^2}{D}\right)(\bs{v}_{ij}\cdot\bs{e}_{ij})^2\right]dt+\sum_{j\in\mathcal{N}_i}\left(\frac{1}{T_i}-\frac{1}{T_j}\right)C_{ij}^2dt &\\
&\quad-\frac{k_B}{m}\sum_{j\in\mathcal{N}_i}\left[(D+1)\frac{A_{ij}^2}{2}+\frac{B_{ij}^2-A_{ij}^2}{D}\right]dt &\\
&\quad-\frac{k_B}{4}\sum_{j\in\mathcal{N}_i}\left(\frac{2}{T_iC_i}+\frac{1}{T_jC_j}\right)\left[\frac{A_{ij}^2}{2}\bs{v}_{ij}^2+\left(\frac{A_{ij}^2}{2}+\frac{B_{ij}^2-A_{ij}^2}{D}\right)(\bs{v}_{ij}\cdot\bs{e}_{ij})^2\right]dt-k_B\sum_{j\in\mathcal{N}_i}\left(\frac{2}{T_iC_i}-\frac{1}{T_jC_j}\right)C_{ij}^2dt &\\
&\quad+\frac{k_B}{4}\sum_{j\in\mathcal{N}_i}\frac{1}{C_i}\left[A_{ij}\dpar{A_{ij}}{T_i}\bs{v}_{ij}^2+\left(A_{ij}\dpar{A_{ij}}{T_i}+\frac{2}{D}B_{ij}\dpar{B_{ij}}{T_i} - \frac{2}{D}A_{ij}\dpar{A_{ij}}{T_i}\right)(\bs{v}_{ij}\cdot\bs{e}_{ij})^2\right]dt \label{eq:S_eq}&\\
&\quad+\frac{k_B}{4}\sum_{j\in\mathcal{N}_i}\frac{1}{C_j}\left[A_{ij}\dpar{A_{ij}}{T_j}\bs{v}_{ij}^2+\left(A_{ij}\dpar{A_{ij}}{T_j}+\frac{2}{D}B_{ij}\dpar{B_{ij}}{T_j} - \frac{2}{D}A_{ij}\dpar{A_{ij}}{T_j}\right)(\bs{v}_{ij}\cdot\bs{e}_{ij})^2\right]dt &\\
&\quad+2k_B\sum_{j\in\mathcal{N}_i}\left(\frac{1}{C_i}C_{ij}\dpar{C_{ij}}{T_i} - \frac{1}{C_j}C_{ij}\dpar{C_{ij}}{T_j}\right)dt &\\
&\quad-\frac{\sqrt{2k_B}}{2}\sum_{j\in\mathcal{N}_i}\left[A_{ij}d\bs{\overline{W}}_{ij}+B_{ij}\frac{1}{D}\tr(d\bs{W}_{ij})\right]:\bs{e}_{ij}\bs{v}_{ij}+\sqrt{2k_B}\sum_{j\in\mathcal{N}_i} C_{ij}dV_{ij}.
\end{flalign}

In the next section we prove that the proposed methodology fulfills the required thermodynamics and conservation laws.

\section{Conservation laws}\label{app:proofs}

\begin{proposition}\label{prop:momentum}
The metriplectic equation in \myeqref{eq:GENERIC_discrete_part} satisfies momentum conservation $d\bs{P}=\bs{0}$.
\end{proposition}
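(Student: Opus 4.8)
The plan is to show that every contribution to $m\,d\bs{v}_i$ in the momentum equation is a neighbor sum $m\,d\bs{v}_i=\sum_{j\in\mathcal{N}_i}\bs{G}_{ij}$ whose summand is \emph{antisymmetric under exchange of the particle labels}, $\bs{G}_{ij}=-\bs{G}_{ji}$. Granting this, $d\bs{P}=\sum_i m\,d\bs{v}_i=\sum_i\sum_{j\in\mathcal{N}_i}\bs{G}_{ij}=\sum_{\{i,j\}}\bigl(\bs{G}_{ij}+\bs{G}_{ji}\bigr)=\bs{0}$, using that $\mathcal{N}_i$ is a symmetric relation; and since the cancellation is realization by realization it applies to the stochastic increment exactly as to the drift. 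So the proof reduces to checking the antisymmetry of each family of terms appearing in \myeqref{eq:v_eq}.

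First, the conservative forces. The pressure force $\tfrac{P_j}{d_j^2}\dpar{W_{ji}}{\bs{r}_{ji}}-\tfrac{P_i}{d_i^2}\dpar{W_{ij}}{\bs{r}_{ij}}$ and the elastic restoring force $\bs{\tau}_j\dpar{\overline{\bs{W}}_{ji}}{\bs{u}_{ji}}-\bs{\tau}_i\dpar{\overline{\bs{W}}_{ij}}{\bs{u}_{ij}}$ are already displayed in the manifestly antisymmetric form $[\,\cdot\,]_j-[\,\cdot\,]_i$, so $\bs{G}_{ij}=-\bs{G}_{ji}$ is immediate. Equivalently, $\sum_i\dpar{U}{\bs{r}_i}=\bs{0}$, because $U=\sum_k U(\mathcal{V}_k,S_k,\bar{\bs{\varepsilon}}_k)$ depends on positions only through the relative vectors $\bs{r}_{km}=\bs{r}_k-\bs{r}_m$ entering \myeqref{eq:volume} and \myeqref{eq:strain}, hence is invariant under rigid translation (Noether's theorem).

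Second, the dissipative and divergence drift terms. Each is a neighbor sum of a scalar prefactor built from $\tfrac1{T_i}+\tfrac1{T_j}$, $\tfrac1{T_iC_i}+\tfrac1{T_jC_j}$, $A_{ij}^2$ and $B_{ij}^2$ — all symmetric under $i\leftrightarrow j$ by $A_{ij}=A_{ji}$, $B_{ij}=B_{ji}$ — multiplying one of the vectors $\bs{v}_{ij}$ or $(\bs{e}_{ij}\cdot\bs{v}_{ij})\bs{e}_{ij}$, each of which changes sign under $i\leftrightarrow j$ by $\bs{v}_{ij}=-\bs{v}_{ji}$, $\bs{e}_{ij}=-\bs{e}_{ji}$; so these terms are antisymmetric. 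The one subtlety is the pair of divergence contributions carrying the non-symmetric prefactors $\tfrac1{C_i}$ and $\tfrac1{C_j}$ together with $T_i$- and $T_j$-derivatives respectively: they must first be grouped into a single pairwise object, after which swapping $i\leftrightarrow j$ interchanges the two pieces and flips the sign of $\bs{v}_{ij}$, giving $\bs{G}_{ij}=-\bs{G}_{ji}$ for the combined term.

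Third, the stochastic forcing $\sqrt{2k_B}\sum_{j\in\mathcal{N}_i}\bigl[A_{ij}d\bs{\overline{W}}_{ij}+B_{ij}\tfrac1D\tr(d\bs{W}_{ij})\bigr]\bs{e}_{ij}$: here $A_{ij}=A_{ji}$, $B_{ij}=B_{ji}$, and the Wiener increments satisfy $d\bs{W}_{ij}=d\bs{W}_{ji}$, so $d\bs{\overline{W}}_{ij}=d\bs{\overline{W}}_{ji}$ and $\tr(d\bs{W}_{ij})=\tr(d\bs{W}_{ji})$, while $\bs{e}_{ij}=-\bs{e}_{ji}$; the summand is again antisymmetric, so $\sum_i m\,d\tilde{\bs{v}}_i=\bs{0}$ pathwise. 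Summing all families over $i$ then gives $d\bs{P}=\bs{0}$. The only real obstacle is bookkeeping: correctly pairing the $\tfrac1{C_i}$ and $\tfrac1{C_j}$ divergence terms so that exchange symmetry applies to their sum rather than to each piece in isolation, and keeping straight which of the many quantities in the friction/divergence formulas are symmetric and which are skew in the particle indices.
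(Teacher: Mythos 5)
Your proof is correct, but it follows a genuinely different route from the paper's for the dissipative and divergence contributions. The paper never verifies antisymmetry of the expanded drift terms in \myeqref{eq:v_eq} at all: it applies It\^o's lemma to $\bs{P}$, notes that the dissipative term is $\dpar{\bs{P}}{\bs{x}}\cdot\bs{M}\dpar{S}{\bs{x}}$ and that the divergence plus It\^o-correction terms combine into $k_B\dpar{}{\bs{x}}\cdot\bigl(\bs{M}\dpar{\bs{P}}{\bs{x}}\bigr)$, and then uses the dyadic factorization $2k_B\bs{M}\,dt=d\tilde{\bs{x}}d\tilde{\bs{x}}^T$ to reduce all of these to the single pathwise identity $\dpar{\bs{P}}{\bs{x}}\cdot d\tilde{\bs{x}}=\sum_i m\,d\tilde{\bs{v}}_i=\bs{0}$, which it proves by the same index-swapping argument you use for the noise. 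Only the conservative force and the noise are ever checked explicitly. Your version instead verifies $\bs{G}_{ij}=-\bs{G}_{ji}$ term by term in the fully expanded momentum equation, including the $(1/T_i+1/T_j)$ dissipative block and the $1/C_i$, $1/C_j$ divergence blocks (where you correctly flag that exchange symmetry only holds for the grouped pair, not each piece separately). What the paper's argument buys is economy and robustness: momentum conservation of the friction and divergence terms is inherited automatically from the noise ansatz for \emph{any} $\bs{M}$ built from a momentum-conserving $d\tilde{\bs{x}}$, so the lengthy algebra of Appendix A need not be trusted. What your argument buys is a direct audit of the implemented equations \myeqref{eq:v_eq} themselves, which catches bookkeeping errors in the derivation of the dissipative and divergence terms that the structural argument would silently forgive. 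One small point of rigor in your favor: since $\bs{P}$ is linear in the state, the It\^o correction vanishes identically and $d\bs{P}=\sum_i m\,d\bs{v}_i$ holds exactly, so your direct summation needs no separate justification.
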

\begin{proof}

This is a direct consequence of the skew-symmetry of the particle pair forces over index swapping. The total deterministic momentum over all particles is $\bs{P}=\sum_im\bs{v}_i$.  By using Itô's lemma, we have that:
\begin{equation}
d\bs{P}=\dpar{\bs{P}}{\bs{x}}\cdot d\bs{x}+\frac{1}{2}\frac{\partial^2 \bs{P}}{\partial\bs{x}^2}:d\tilde{\bs{x}}d\tilde{\bs{x}}^T,
\end{equation}
which is equivalent, by using the fluctuation dissipation theorem, to 
\begin{equation}
d\bs{P}=\dpar{\bs{P}}{\bs{x}}\cdot d\bs{x}+k_B\frac{\partial^2 \bs{P}}{\partial\bs{x}^2}:\bs{M}=\dpar{\bs{P}}{\bs{x}}\cdot\left[\left(\bs{L}\dpar{E}{\bs{x}} + \bs{M}\dpar{S}{\bs{x}} + k_B\dpar{}{\bs{x}}\cdot\bs{M}\right)dt + d\tilde{\bs{x}}\right]+k_B\frac{\partial^2 \bs{P}}{\partial\bs{x}^2}:\bs{M}.
\end{equation}

We now show each summand is zero. The first term, by expanding the gradient of the internal energy from \myeqref{eq:gradU}:

\begin{equation}
\dpar{\bs{P}}{\bs{x}}\cdot\bs{L}\dpar{E}{\bs{x}}=\sum_i
\left[\bs{0},m\bs{1},0 \right]\frac{1}{m}
\begin{bmatrix}
m\bs{v}_i \\
-\dpar{U}{\bs{r}_i} \\
0 
\end{bmatrix} = \sum_i -\dpar{U}{\bs{r}_i} = \sum_i \sum_{j\in\mathcal{N}_i}\left[\frac{P_j}{d_j^2} \dpar{W_{ji}}{\bs{r}_{ji}}-\frac{P_i}{d_i^2} \dpar{W_{ij}}{\bs{r}_{ij}}+\bs{\tau}_j:\dpar{\overline{\bs{W}}_{ji}}{\bs{u}_{ji}} - \bs{\tau}_i:\dpar{\overline{\bs{W}}_{ij}}{\bs{u}_{ij}}\right].
\end{equation}

As the particle interactions are defined within a fixed neighbourhood $\mathcal{N}_i$ for each particle, we can split the set of interactions in three subsets: $\{(i,j)\}=\{(i',j')\}\cup\{(j',i')\}\cup\{(i',i')\}$. This means that every particle interaction $(i',j')$ has a mirror interaction with swapped indices $(j',i')$ for $i'\neq j'$, as well as self interactions of a particle with itself $(i',i')$ when $i'=j'$. The position vectors are $\bs{r}_{i'j'}=-\bs{r}_{j'i'}$ and $\bs{r}_{i'i'}=\bs{0}$ respectively. By applying the index swapping to the derivative and the condition of a spherical symmetric kernel $\bs{\nabla}W(\bs{0})=\bs{0}$, we get the following split sum:
\begin{equation}
\begin{split}
\dpar{\bs{P}}{\bs{x}}\cdot\bs{L}\dpar{E}{\bs{x}} &= \sum_{i',j'}\left[ \frac{P_{j'}}{d_{j'}^2} \dpar{W_{j'i'}}{\bs{r}_{j'i'}} - \frac{P_{i'}}{d_{i'}^2} \dpar{W_{i'j'}}{\bs{r}_{i'j'}}\right] + \sum_{j',i'}\left[ \frac{P_{i'}}{d_{i'}^2} \dpar{W_{i'j'}}{\bs{r}_{i'j'}}-\frac{P_{j'}}{d_{j'}^2} \dpar{W_{j'i'}}{\bs{r}_{j'i'}}\right] + \sum_{i',i'}2\frac{P_{i'}}{d_{i'}^2} \dpar{W_{i'i'}}{\bs{r}_{i'i'}} \\
&\quad + \sum_{i'j'}\left[\bs{\tau}_{j'}:\dpar{\overline{\bs{W}}_{j'i'}}{\bs{u}_{j'i'}} - \bs{\tau}_{i'}:\dpar{\overline{\bs{W}}_{i'j'}}{\bs{u}_{i'j'}}\right] + \sum_{j'i'}\left[\bs{\tau}_{i'}:\dpar{\overline{\bs{W}}_{i'j'}}{\bs{u}_{i'j'}} - \bs{\tau}_{j'}:\dpar{\overline{\bs{W}}_{j'i'}}{\bs{u}_{j'i'}}\right] + \bs{0} = \bs{0}.
\end{split}
\end{equation}

We can gather the divergence and second order term into a single divergence term
\begin{equation}
\dpar{\bs{P}}{\bs{x}}\cdot k_B\dpar{}{\bs{x}}\cdot\bs{M}+k_B\frac{\partial^2 \bs{P}}{\partial\bs{x}^2}:\bs{M}=k_B\dpar{}{\bs{x}}\cdot\left(\bs{M}\dpar{\bs{P}}{\bs{x}}\right)=0.
\end{equation}

By decomposing $\bs{M}$ on its dyadic $d\bs{x}d\bs{x}^T=2k_B\bs{M}dt$ we get that the sufficient condition is that $\dpar{\bs{P}}{\bs{x}}\cdot d\tilde{\bs{x}}=\bs{0}$. By expanding and using the noise ansatz,
\begin{equation}
\dpar{\bs{P}}{\bs{x}}\cdot d\tilde{\bs{x}}=\sum_i
\left[\bs{0},m\bs{1},0 \right]
\begin{bmatrix}
\bs{0} \\
d\tilde{\bs{v}}_i \\
d\tilde{S}_i
\end{bmatrix}=\sum_i m d\tilde{\bs{v}}_i=\sum_i\sqrt{2k_B}\sum_{j\in\mathcal{N}_i}\left[A_{ij}d\bs{\overline{W}}_{ij}+B_{ij}\frac{1}{D}\tr(d\bs{W}_{ij})\right]\bs{e}_{ij}.
\end{equation}

We obtain the desired cancellation by a similar reasoning as before: pairwise interactions cancel out due to the symmetries of $A_{ij}$, $B_{ij}$ and $d\bs{W}_{ij}$ and skew-symmetry of the interdistance unit vectors $\bs{e}_{ij}=-\bs{e}_{ij}$ and $\bs{e}_{ii}=\bs{0}$. Thus, the fluctuating velocities do not contribute to the total momentum.
\begin{flalign}
\dpar{\bs{P}}{\bs{x}}\cdot d\tilde{\bs{x}}&=\sqrt{2k_B}\sum_{i',j'}\left[A_{i'j'}d\bs{\overline{W}}_{i'j'}+B_{i'j'}\frac{1}{D}\tr(d\bs{W}_{i'j'})\right]\bs{e}_{i'j'}+\sqrt{2k_B}\sum_{j',i'}\left[A_{j'i'}d\bs{\overline{W}}_{j'i'}+B_{j'i'}\frac{1}{D}\tr(d\bs{W}_{j'i'})\right]\bs{e}_{j'i'} &\\
&\quad +\sqrt{2k_B}\sum_{i',i'}\left[A_{i'i'}d\bs{\overline{W}}_{i'i'}+B_{i'i'}\frac{1}{D}\tr(d\bs{W}_{i'i'})\right]\bs{e}_{i'i'} &\\
&=\sqrt{2k_B}\sum_{i',j'}\left[A_{i'j'}d\bs{\overline{W}}_{i'j'}+B_{i'j'}\frac{1}{D}\tr(d\bs{W}_{i'j'})\right]\bs{e}_{i'j'}-\sqrt{2k_B}\sum_{i',j'}\left[A_{i'j'}d\bs{\overline{W}}_{i'j'}+B_{i'j'}\frac{1}{D}\tr(d\bs{W}_{i'j'})\right]\bs{e}_{i'j'}=\bs{0}. &\\
\end{flalign}

The second and fourth terms are $\bs{M}\dpar{\bs{P}}{\bs{x}}$ and $\dpar{\bs{P}}{\bs{x}}\cdot d\tilde{\bs{x}}$ respectively, already proven to be zero.

\end{proof}

\begin{proposition}\label{prop:energy_stochastic}
\myeqref{eq:GENERIC_discrete_part} satisfies energy conservation $dE=0$ if $\bs{M}\bs{\nabla} E=\bs{0}$ and $d\tilde{\bs{x}}\cdot\bs{\nabla}E=0$.
\end{proposition}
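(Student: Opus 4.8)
The plan is to apply Itô's lemma to the scalar functional $E(\bs{x})$ along the solution of \myeqref{eq:GENERIC_discrete_part}, in exact parallel with the momentum computation of \mypropref{prop:momentum}. Writing $d\bs{x}$ for the right-hand side of \myeqref{eq:GENERIC_discrete_part} and invoking the fluctuation-dissipation theorem \myeqref{eq:FDT} to replace the quadratic variation $d\tilde{\bs{x}}d\tilde{\bs{x}}^T$ by $2k_B\bs{M}\,dt$, Itô's formula yields
\begin{equation}
dE=\dpar{E}{\bs{x}}\cdot\left[\left(\bs{L}\dpar{E}{\bs{x}}+\bs{M}\dpar{S}{\bs{x}}+k_B\dpar{}{\bs{x}}\cdot\bs{M}\right)dt+d\tilde{\bs{x}}\right]+k_B\frac{\partial^2 E}{\partial\bs{x}^2}:\bs{M}\,dt,
\end{equation}
and the proof reduces to showing that each contribution on the right vanishes.

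First I would dispatch the reversible term: $\dpar{E}{\bs{x}}\cdot\bs{L}\dpar{E}{\bs{x}}=0$ since $\bs{L}=-\bs{L}^T$ is a skew form evaluated on a vector and itself. Next, the entropic term $\dpar{E}{\bs{x}}\cdot\bs{M}\dpar{S}{\bs{x}}$ vanishes by moving $\bs{M}$ onto the energy gradient using $\bs{M}=\bs{M}^T$, giving $\bigl(\bs{M}\dpar{E}{\bs{x}}\bigr)\cdot\dpar{S}{\bs{x}}=0$ by the degeneracy hypothesis $\bs{M}\bs{\nabla}E=\bs{0}$. The stochastic increment $\dpar{E}{\bs{x}}\cdot d\tilde{\bs{x}}$ is zero by the second hypothesis. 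The only slightly delicate step is the remaining pair of terms, the drift correction $k_B\,\dpar{E}{\bs{x}}\cdot\bigl(\dpar{}{\bs{x}}\cdot\bs{M}\bigr)$ and the second-order Itô term $k_B\,\frac{\partial^2 E}{\partial\bs{x}^2}:\bs{M}$: here I would use the product rule for the divergence to recombine them into $k_B\,\dpar{}{\bs{x}}\cdot\bigl(\bs{M}\dpar{E}{\bs{x}}\bigr)$, which is identically zero because the vector field $\bs{M}\bs{\nabla}E$ vanishes at every state, so in particular its divergence does.

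The main obstacle I anticipate is bookkeeping rather than conceptual: because $E$ is quadratic in the velocities, $\partial^2 E/\partial\bs{x}^2$ is a nonzero constant block and cannot simply be discarded — the cancellation genuinely requires pairing it with the $\bs{\nabla}\cdot\bs{M}$ term via the divergence product rule, exactly as in \mypropref{prop:momentum}. For completeness I would also note that the two hypotheses $\bs{M}\bs{\nabla}E=\bs{0}$ and $d\tilde{\bs{x}}\cdot\bs{\nabla}E=0$ are precisely those secured for the particle discretization by the pairwise $\bs{e}_{ij}$ structure of $d\tilde{\bs{v}}_i$ together with the balancing $T_id\tilde{S}_i$ term in \myeqref{eq:dx_tilde}; this can be cited from \myappref{app:diss} rather than re-derived here, so that the proposition applies verbatim to the model actually used.
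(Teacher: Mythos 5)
Your proposal is correct and follows essentially the same route as the paper's proof: Itô's lemma plus the fluctuation--dissipation substitution, skew-symmetry of $\bs{L}$ for the reversible term, symmetry of $\bs{M}$ plus the degeneracy hypothesis for the entropic term, recombination of the drift correction and second-order Itô term into $k_B\,\bs{\nabla}\cdot(\bs{M}\bs{\nabla}E)=0$, and the stochastic term vanishing by assumption. Your closing remark that the hypotheses are verified for the particle ansatz elsewhere corresponds to the paper's Proposition~\ref{prop:degeneracy}.
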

\begin{proof}
By Itô's lemma, we have that
\begin{equation}
dE=\dpar{E}{\bs{x}}\cdot d\bs{x}+\frac{1}{2}\frac{\partial^2 E}{\partial\bs{x}^2}:d\tilde{\bs{x}}d\tilde{\bs{x}}^T,
\end{equation}
which is equivalent, by using the fluctuation dissipation theorem, to 
\begin{equation}
dE=\dpar{E}{\bs{x}}\cdot d\bs{x}+k_B\frac{\partial^2 E}{\partial\bs{x}^2}:\bs{M}=\dpar{E}{\bs{x}}\cdot\left[\left(\bs{L}\dpar{E}{\bs{x}} + \bs{M}\dpar{S}{\bs{x}} + k_B\dpar{}{\bs{x}}\cdot\bs{M}\right)dt + d\tilde{\bs{x}}\right]+k_B\frac{\partial^2 E}{\partial\bs{x}^2}:\bs{M}.
\end{equation}
We now show each summand is zero. 

Due to the skew-symmetry of $\bs{L}$ we have that the first summand is
\begin{equation}
\dpar{E}{\bs{x}}\cdot\bs{L}\dpar{E}{\bs{x}}=0.
\end{equation}

For the second, following the symmetric positive semi-definiteness and degeneracy condition for $\bs{M}$
\begin{equation}
\dpar{E}{\bs{x}}\cdot\bs{M}\dpar{S}{\bs{x}}=\dpar{S}{\bs{x}}\cdot\bs{M}\dpar{E}{\bs{x}}=0.
\end{equation}

We can gather the divergence and second order term into a single divergence which can be cancelled via the degeneracy condition
\begin{equation}
\dpar{E}{\bs{x}}\cdot k_B\dpar{}{\bs{x}}\cdot\bs{M}+k_B\frac{\partial^2 E}{\partial\bs{x}^2}:\bs{M}=k_B\dpar{}{\bs{x}}\cdot\left(\bs{M}\dpar{E}{\bs{x}}\right)=0.
\end{equation}
The final stochastic differential term is zero by assumption:
\begin{equation}
\dpar{E}{\bs{x}}\cdot d\tilde{\bs{x}} = d\tilde{\bs{x}}\cdot \dpar{E}{\bs{x}} = 0.
\end{equation}

\end{proof}

\begin{proposition}\label{prop:degeneracy}
The metriplectic equation in \myeqref{eq:GENERIC_discrete_part} satisfies the degeneracy conditions $\bs{L}\nabla S=\bs{M}\nabla E = \bs{0}$ and $d\tilde{\bs{x}}\cdot\bs{\nabla}E=0$, thus it satisfies energy conservation.
\end{proposition}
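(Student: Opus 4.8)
The plan is to verify the three degeneracy identities one at a time — each falls out of a single structural feature of the construction — and then hand off to Proposition~\ref{prop:energy_stochastic} for energy conservation.

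First I would dispatch $\bs{L}\nabla S = \bs{0}$, which is immediate: since $S=\sum_iS_i$ depends only on the irreversible variables, \eqref{eq:gradE_gradS} gives $\partial S/\partial\bs{x}_j=(\bs{0},\bs{0},1)^T$, while every block $\bs{L}_{ij}$ appearing in \eqref{eq:GENERIC_discrete_part} has a vanishing third row and third column, so the product annihilates. This is the ``trivial by construction'' claim already anticipated in the main text.

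Next I would prove $d\tilde{\bs{x}}\cdot\nabla E=0$ directly from the noise ansatz \eqref{eq:dx_tilde} together with $\partial E/\partial\bs{x}_i=(\partial U/\partial\bs{r}_i,\,m\bs{v}_i,\,T_i)^T$. Because $d\tilde{\bs{r}}_i=\bs{0}$, the pairing collapses to $\sum_i\big(m\bs{v}_i\cdot d\tilde{\bs{v}}_i+T_i\,d\tilde S_i\big)$. Writing $\bs{N}_{ij}:=A_{ij}d\bs{\overline{W}}_{ij}+B_{ij}\tfrac{\bs{I}}{D}\tr(d\bs{W}_{ij})$, which is $i\!\leftrightarrow\!j$ symmetric by $A_{ij}=A_{ji}$, $B_{ij}=B_{ji}$ and $d\bs{W}_{ij}=d\bs{W}_{ji}$, I would relabel the summation indices and use $\bs{e}_{ij}=-\bs{e}_{ji}$ to write $\sum_i m\bs{v}_i\cdot d\tilde{\bs{v}}_i=\sqrt{2k_B}\sum_{i,j}\bs{v}_i\cdot\bs{N}_{ij}\bs{e}_{ij}=\tfrac{\sqrt{2k_B}}{2}\sum_{i,j}\bs{N}_{ij}:\bs{e}_{ij}\bs{v}_{ij}$. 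This exactly cancels the first group of terms in $\sum_iT_i\,d\tilde S_i$, while the leftover $\sqrt{2k_B}\sum_{i,j}C_{ij}\,dV_{ij}$ vanishes because $C_{ij}=C_{ji}$ is symmetric and $dV_{ij}=-dV_{ji}$ is skew. I would stress that this is a pathwise identity in the Gaussian increments, not merely a statement about quadratic variation — which is what makes the next step go through.

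Then $\bs{M}\nabla E=\bs{0}$ follows for free: the noise is built as $d\tilde{\bs{x}}=\bs{Q}\,d\bs{W}$ with $\bs{M}\propto\bs{Q}\bs{Q}^T$ by the Cholesky / fluctuation--dissipation relation \eqref{eq:FDT}, and since $(\bs{Q}\,d\bs{W})\cdot\nabla E=0$ holds for every increment $d\bs{W}$, one gets $\bs{Q}^T\nabla E=\bs{0}$ and hence $\bs{M}\nabla E\propto\bs{Q}(\bs{Q}^T\nabla E)=\bs{0}$; equivalently one may read the same conclusion off the explicit block assembly of $\bs{M}$ and $\bs\nabla\cdot\bs{M}$ in Appendix~\ref{app:diss}, where every block is a multiple of $\bs{N}_{ij}$ or $C_{ij}$ contracted against factors with definite symmetry in $i,j$. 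With all three degeneracies established, $dE=0$ is precisely the content of Proposition~\ref{prop:energy_stochastic}, which closes the argument. The one delicate point I expect is the index bookkeeping in the second step — keeping the $(i,j)$-interaction contributions to $d\tilde{\bs{v}}_i$, $d\tilde{\bs{v}}_j$, $d\tilde S_i$, $d\tilde S_j$ properly aligned so that the dyad $\bs{N}_{ij}:\bs{e}_{ij}\bs{v}_{ij}$ produced by symmetrizing the kinetic term matches the entropic noise term by term; once the relabeling is carried out cleanly, the cancellation is forced by the stated pair symmetries.
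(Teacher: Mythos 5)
Your proposal is correct and follows essentially the same route as the paper's proof: $\bs{L}\nabla S=\bs{0}$ from the zero block structure, a pathwise cancellation of $\sum_i(m\bs{v}_i\cdot d\tilde{\bs{v}}_i+T_i d\tilde S_i)$ via the pair symmetries of $A_{ij},B_{ij},C_{ij},d\bs{W}_{ij}$ and the skew-symmetry of $\bs{e}_{ij},dV_{ij}$, and then $\bs{M}\nabla E=\bs{0}$ inferred from the fluctuation--dissipation factorization. The only cosmetic differences are that you symmetrize the kinetic term into $\tfrac12\sum_{i,j}\bs{N}_{ij}\!:\!\bs{e}_{ij}\bs{v}_{ij}$ rather than expanding $\bs{v}_{ij}$ in the entropic term, and you pass through $\bs{Q}^T\nabla E=\bs{0}$ (as in \mypropref{prop:3tensor}) where the paper contracts the dyadic $d\tilde{\bs{x}}d\tilde{\bs{x}}^T=2k_B\bs{M}\,dt$ directly; both are equivalent.
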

\begin{proof}

The first degeneracy condition is satisfied trivially by the definition of the entropy as a state variable and the zero block structure of the Poisson matrix,
\begin{equation}
\bs{L}\dpar{S}{\bs{x}}=
\sum_{j\in\mathcal{N}_i}\frac{1}{m}
\begin{bmatrix}
\bs{0} & -\bs{I}\delta_{ij} & \bs{0}\\
\bs{I}\delta_{ij} & \bs{0} & \bs{0}\\
\bs{0} & \bs{0} & \bs{0}
\end{bmatrix}
\begin{bmatrix}
\bs{0} \\
\bs{0} \\
1
\end{bmatrix}=\bs{0}.
\end{equation}

For the second degeneracy, we need to use the factorization of the friction matrix into the dyadic of the noise differentials $d\bs{x}d\bs{x}^T=2k_B\bs{M}dt$ in the following way,
\begin{equation}
d\tilde{\bs{x}}d\tilde{\bs{x}}^T\dpar{E}{\bs{x}}=2k_B\bs{M}dt\dpar{E}{\bs{x}}.
\end{equation}

Now we can prove that the left term is equal to zero, which also holds for the right term. Considering that $2k_Bdt\neq 0$, it implies that the second degeneracy is zero. By applying the definition of $\bs{\nabla}E$ and expanding the relative velocity $\bs{v}_{ij}=\bs{v}_i-\bs{v}_j$ we get
\begin{flalign}
d\tilde{\bs{x}}\cdot\dpar{E}{\bs{x}} &= \sum_i
\left[\bs{0},d\tilde{\bs{v}}_i,d\tilde{S}_i \right]
\begin{bmatrix}
-\dpar{U}{\bs{r}_i} \\
m\bs{v}_i \\
T_i
\end{bmatrix} 
= \sum_i (m\bs{v}_i\cdot d\tilde{\bs{v}}_i + T_i d\tilde{S}_i)
=\sum_i \bs{v}_i\cdot\sqrt{2k_B}\sum_{j\in\mathcal{N}_i}[A_{ij}d\bs{\overline{W}}_{ij}+B_{ij}\frac{1}{D}\tr(d\bs{W}_{ij})]\bs{e}_{ij} &\\
&\quad +\sum_i-\frac{\sqrt{2k_B}}{2}\sum_{j\in\mathcal{N}_i}\left[A_{ij}d\bs{\overline{W}}_{ij}+B_{ij}\frac{1}{D}\tr(d\bs{W}_{ij})\right]:\bs{e}_{ij}\bs{v}_{ij} +\sum_i\sqrt{2k_B}\sum_{j\in\mathcal{N}_i} C_{ij}dV_{ij} &\\
&=\sqrt{2k_B}\sum_{i,j}\left[A_{ij}d\bs{\overline{W}}_{ij}+B_{ij}\frac{1}{D}\tr(d\bs{W}_{ij})\right]\bs{e}_{ij}\cdot\bs{v}_i-\frac{\sqrt{2k_B}}{2}\sum_{i,j}\left[A_{ij}d\bs{\overline{W}}_{ij}+B_{ij}\frac{1}{D}\tr(d\bs{W}_{ij})\right]\bs{e}_{ij}\cdot\bs{v}_i &\\
&\quad +\frac{\sqrt{2k_B}}{2}\sum_{i,j}\left[A_{ij}d\bs{\overline{W}}_{ij}+B_{ij}\frac{1}{D}\tr(d\bs{W}_{ij})\right]\bs{e}_{ij}\cdot\bs{v}_j &\\
&=\sqrt{2k_B}\sum_{i,j}\left[A_{ij}d\bs{\overline{W}}_{ij}+B_{ij}\frac{1}{D}\tr(d\bs{W}_{ij})\right]\bs{e}_{ij}\cdot\bs{v}_i-\sqrt{2k_B}\sum_{i,j}\left[A_{ij}d\bs{\overline{W}}_{ij}+B_{ij}\frac{1}{D}\tr(d\bs{W}_{ij})\right]\bs{e}_{ij}\cdot\bs{v}_i=0 .
\end{flalign}

We have used the same index swapping cancellations between pairwise particles as in \mypropref{prop:momentum}. The independent term vanishes due to the skew-symmetry of $dV_{ij}$ and symmetry of $C_{ij}$, which makes a cancellation for each pair of particles. Similarly, we have performed an index swap by exploiting the symmetries of $A_{ij}$, $B_{ij}$, $C_{ij}$ and $d\bs{W}_{ij}$.

\end{proof}

\begin{proposition}\label{prop:3tensor}
The degeneracies $\bs{M}\bs{\nabla}E=\bs{0}$ and $d\tilde{\bs{x}}\cdot\bs{\nabla}E=0$ are satisfied if $Q_{ji}=C_{ijk}\dpar{E}{\bs{x}_k}$ where $C_{ijk}$ is a 3-tensor with the following skew-symmetry $C_{ijk}=-C_{ikj}$.
\end{proposition}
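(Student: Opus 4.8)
The plan is to collapse both degeneracy conditions onto the single linear-algebraic identity $\bs{Q}^T\bs{\nabla}E=\bs{0}$, using the decompositions $\bs{M}=\bs{Q}\bs{Q}^T$ and $d\tilde{\bs{x}}=\bs{Q}\,d\bs{W}$ fixed in the construction. First I would write the target identity componentwise: for each column index $i$ of $\bs{Q}$, substituting the ansatz $Q_{ji}=C_{ijk}\,\partial E/\partial\bs{x}_k$ (summation over the repeated index $k$) gives
\begin{equation}
\sum_j Q_{ji}\dpar{E}{\bs{x}_j}=\sum_{j,k}C_{ijk}\dpar{E}{\bs{x}_j}\dpar{E}{\bs{x}_k}.
\end{equation}

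Next I would observe that the right-hand side is a full contraction, over the index pair $(j,k)$, between $C_{ijk}$ --- which is skew-symmetric under $j\leftrightarrow k$ by the hypothesis $C_{ijk}=-C_{ikj}$ --- and the tensor $\partial_j E\,\partial_k E$, which is symmetric under the same swap. A contraction of a skew-symmetric tensor against a symmetric tensor over a common pair of indices vanishes, so $\sum_j Q_{ji}\,\partial E/\partial\bs{x}_j=0$ for every $i$, i.e. $\bs{Q}^T\bs{\nabla}E=\bs{0}$. I would note that this step uses only the algebraic symmetry of $C$; it does not require $\bs{Q}$ to be square or triangular, and the sparsity of $C$ plays no role.

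From $\bs{Q}^T\bs{\nabla}E=\bs{0}$ both claims follow in one line each. For the friction degeneracy, $\bs{M}\bs{\nabla}E=\bs{Q}\big(\bs{Q}^T\bs{\nabla}E\big)=\bs{0}$; moreover $\bs{M}=\bs{Q}\bs{Q}^T$ is automatically symmetric positive semi-definite, so the ansatz simultaneously delivers a friction operator of the required type. For the noise degeneracy, $d\tilde{\bs{x}}\cdot\bs{\nabla}E=(\bs{Q}\,d\bs{W})^T\bs{\nabla}E=d\bs{W}^T\big(\bs{Q}^T\bs{\nabla}E\big)=0$ for every realization of the increments $d\bs{W}$. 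The remaining degeneracy $\bs{L}\bs{\nabla}S=\bs{0}$ is unaffected and is handled by the block structure of $\bs{L}$ exactly as in \mypropref{prop:degeneracy}.

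There is essentially no analytic obstacle here: the argument is a single symmetric/skew-symmetric contraction. The only point demanding care is index bookkeeping --- tracking which slot of $C_{ijk}$ is contracted with $\bs{\nabla}E$ to build $\bs{Q}$, and which slot is contracted with a second $\bs{\nabla}E$ when forming $\bs{Q}^T\bs{\nabla}E$, so that the skew-symmetric pair of $C$ is precisely the pair that meets the symmetric pair $\partial_j E\,\partial_k E$. Checking that this alignment is forced by the stated placement of indices is the whole content of the proposition.
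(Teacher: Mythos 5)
Your proposal is correct and follows essentially the same route as the paper: both reduce the two degeneracies to the single identity $\bs{Q}^T\bs{\nabla}E=\bs{0}$ via $\bs{M}=\bs{Q}\bs{Q}^T$ and $d\tilde{\bs{x}}=\bs{Q}\,d\bs{W}$, and both establish that identity from the vanishing of a skew-symmetric tensor contracted against the symmetric tensor $\partial_j E\,\partial_k E$ (the paper writes this out as an explicit index-relabeling chain $Q_{ji}\partial_j E=-Q_{ji}\partial_j E$, which is the same argument).
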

\begin{proof}
The noise vector is described by $d\tilde{\bs{x}} = \bs{Q} d\bs{W}$ where $d\bs{W}$ denotes a vector of independent and identically distributed (IID) increments of a Wiener process. From the degeneracy $d\tilde{\bs{x}} \cdot \bs{\nabla}E = d\bs{W}^T\bs{Q}^T\bs{\nabla}E=\bs{0}$ we obtain the constraint that $\bs{Q}^T\bs{\nabla}E=\bs{0}$ implies $d\tilde{\bs{x}} \cdot \bs{\nabla}E=0$ for any sample $d\bs{W}$. The other degeneracy condition follows immediately $\bs{M}\bs{\nabla}E=\bs{Q}\bs{Q}^T\bs{\nabla}E=\bs{0}$.

Now we proof the condition $\bs{Q}^T\bs{\nabla}E=\bs{0}$. By using the 3-tensor skew-symmetric parametrization we get:
\begin{equation}
Q_{ji}\dpar{E}{x_j}=C_{ijk}\dpar{E}{x_k}\dpar{E}{x_j}=C_{ikj}\dpar{E}{x_j}\dpar{E}{x_k}=-C_{ijk}\dpar{E}{x_k}\dpar{E}{x_j}=-Q_{ji}\dpar{E}{x_j}.
\end{equation}

By rearranging the first and last terms of the equality, we have
\begin{equation}
Q_{ji}\dpar{E}{x_j}+Q_{ji}\dpar{E}{x_j}=2Q_{ji}\dpar{E}{x_j}=0\rightarrow Q_{ji}\dpar{E}{x_j}=0.
\end{equation}

\end{proof}

\begin{remark}
The proposed parametrization can be constructed in several ways, designing specific parameters $\theta$ in increasing level of complexity and respecting the required skew-symmetry:
\begin{itemize}
    \item Nodal parameters + edge skew-symmetry: $C_{ijk}=\theta_i(\theta'_{jk}-\theta'_{kj})$.
    \item Edge parameters + edge skew-symmetry: $C_{ijk}=\theta_{ij}(\theta'_{jk}-\theta'_{kj})$.
    \item General tensor skew-symmetry: $C_{ijk}=\theta_{ijk} - \theta_{ikj}$.
\end{itemize}
\end{remark}

\begin{remark}
This parametrization can also be extended in case there are more than one entropic variables: $S_1,...,S_N$ by using a higher rank tensor with similar symmetries $C_{ijk_1...k_a...k_N}=-C_{ik_ak_1...j...k_N}$ for all extra indices and defining $Q_{ji}=C_{ijk_1...k_N}\dpar{E}{\bs{x}_{k_1}} ... \dpar{E}{\bs{x}_{k_N}}$.
\end{remark}

\section{Derivation of log-likelihood} \label{app:NLL}

Let $\mathcal{D}=\{(\bs{x}_{\text{rev}}^t,\bs{x}_{\text{rev}}^{t+1})\}_{t=0}^{N_\text{train}}$ be a dataset consisting of position-velocity pairs. To train the system, we construct a joint probability distribution $p(\bs{x}_{\text{rev}}^{t+1},\bs{x}_{\text{irr}}^{t+1}|\bs{x}_{\text{rev}}^{t},\bs{x}_{\text{irr}}^{t})$ corresponding to a numerical integration of \myeqref{eq:GENERIC_discrete_part} and train the architectures with maximum likelihood. Recall that for a general drift-diffusion SDE
\begin{equation}
    d\bs{x} = f(\bs{x},t) dt + g(\bs{x},t) d\bs{W},
\end{equation}
application of the Euler-Maruyama integrator provides a Gaussian increment
\begin{equation}\label{eq:eulermaruyama}
\bs{x}^{t+1}|\bs{x}^t \sim \mathcal{N}(\bs{\mu}^{t+1} = \bs{x}^t + f(\bs{x}^t,t)\Delta t,\bs{\Sigma}^{t+1} = g(\bs{x}^t,t)^2\Delta t).
\end{equation}
Under a Markovian assumption, we can thus define a joint distribution for the complete time series and derive the negative log-likelihood.
\begin{equation}
    \text{NLL}(x^1,...,x^T) = - \sum_{t=1}^T \log p(x^t|x^{t-1}).
\end{equation}
To obtain robust long-term rollouts, we specialize this candidate log-likelihood function to \myeqref{eq:GENERIC_discrete_part} and modify the transition probabilities slightly to develop a semi-implicit Euler scheme. We denote all trainable parameters via
$$\Theta=\{\theta_\mathcal{V},\theta_U,\theta_A,\theta_B,\theta_C,\theta_S,k_B,m\}$$ and define the maximum log-likelihood objective
\begin{equation}
\Theta^*=\arg\min_{\Theta} \text{NLL}(x^1,...,x^T).
\end{equation}
To modify \myeqref{eq:eulermaruyama}, we identify a transition probability $p(x^t|x^{t-1})$ consistent with the update rule.
\begin{equation}
\begin{split}
\bs{v}^{t+1}&=\bs{v}^t+\frac{d\bs{v}^t}{dt}\Delta t + \Delta \tilde{\bs{v}}^t, \\
\bs{r}^{t+1}&=\bs{r}^t+\bs{v}^{t+1}\Delta t, \\
S^{t+1}&=S^t+\frac{dS^t}{dt}\Delta t + \Delta \tilde{S}^t.
\end{split}
\end{equation}

For small $\Delta t$, this is provided by the multivariate normal distribution $\bs{x}^{t+1}|\bs{x}^t\sim N(\bs{\mu}^{t+1},\bs{\Sigma}^{t+1})$ with mean and covariance
\begin{equation}
\bs{\mu}^{t+1}=
\begin{bmatrix}
\bs{r}^t+\bs{v}^t\Delta t+\frac{d\bs{v}^t}{dt}\Delta t^2 \\
\bs{v}^t+\frac{d\bs{v}^t}{dt}\Delta t \\
S^t+\frac{dS^t}{dt}\Delta t
\end{bmatrix},\quad \bs{\Sigma}^{t+1}=\begin{bmatrix}
\Delta\tilde{\bs{v}} \Delta\tilde{\bs{v}}^T\Delta t^2 & \Delta\tilde{\bs{v}} \Delta\tilde{\bs{v}}^T\Delta t &  \Delta\tilde{\bs{v}} \Delta\tilde{S}\Delta t \\
\Delta\tilde{\bs{v}} \Delta\tilde{\bs{v}}^T\Delta t & \Delta\tilde{\bs{v}} \Delta\tilde{\bs{v}}^T &  \Delta\tilde{\bs{v}} \Delta\tilde{S}\\
\Delta\tilde{S} \Delta\tilde{\bs{v}}^T\Delta t & \Delta\tilde{S} \Delta\tilde{\bs{v}}^T &  \Delta\tilde{S} \Delta\tilde{S}
\end{bmatrix}.
\end{equation}

Now we can define the loss function to be the standard negative log-likelihood of the observations as
\begin{equation}
\mathcal{L}=\sum_{t=0}^{N_\text{train}}\Big[\frac{1}{2}\ln|\bs{\Sigma}^{t+1}|+\frac{1}{2}(\bs{x}^{t+1}-\bs{\mu}^{t+1})^T\left(\bs{\Sigma}^{t+1}\right)^{-1}(\bs{x}^{t+1}-\bs{\mu}^{t+1})\Big].
\end{equation}

There are two problems with this approach. First, the computation of $\bs{\Sigma}$ involves the explicit construction of the global $\bs{M}$ matrix which has a space complexity scaling of $O(N^2)$, so it is very expensive in terms of memory usage and computation time for the determinant and inverse operations. We can overcome this problem by using only the marginal covariances $\bs{\Sigma}_{ii}$ of each particle, which scales linearly with the number of particles $O(N)$. Second, the covariance matrix $\bs{\Sigma}$ is singular because the position has a deterministic relationship with the velocity up to a linear scaling of $\Delta t$. We overcome this by also marginalizing over the dependent position variable, which solves the rank deficiency.

Now we can split the problem as a particle-wise problem with the following statistical properties:
\begin{equation}\label{eq:mu_sigma}
\bs{\mu}^{t+1}_i=
\begin{bmatrix}
\bs{v}_i^t+\frac{d\bs{v}_i^t}{dt}\Delta t \\
S_i^t+\frac{dS_i^t}{dt}\Delta t
\end{bmatrix},\quad \bs{\Sigma}^{t+1}_{ii}=\begin{bmatrix}
\Delta\tilde{\bs{v}}_i \Delta\tilde{\bs{v}}_i^T &  \Delta\tilde{\bs{v}}_i \Delta\tilde{S}_i\\
\Delta\tilde{S}_i \Delta\tilde{\bs{v}}^T_i &  \Delta\tilde{S}_i\Delta\tilde{S}_i
\end{bmatrix}.
\end{equation}
Thus the final loss function is defined as
\begin{equation}\label{eq:loss}
\mathcal{L}=\frac{1}{N_\text{train}}\sum_{t=0}^{N_\text{train}}\frac{1}{N}\sum_{i=0}^{N}\Big[\frac{1}{2}\ln|\bs{\Sigma}_{ii}^{t+1}|+\frac{1}{2}(\bs{x}_i^{t+1}-\bs{\mu}_i^{t+1})^T\left(\bs{\Sigma}_{ii}^{t+1}\right)^{-1}(\bs{x}_i^{t+1}-\bs{\mu}_i^{t+1})\Big].
\end{equation}

\subsection{Computation of marginal covariances} \label{app:cov}

For the marginal covariances, we need to get the $i=j$ components of the complete covariance matrix $\bs{M}$, whose components were already computed in the previous section. By the following properties $\delta_{ii}=1$, $\bs{e}_{ii}=\bs{v}_{ii}=0$ and the fact the the summation over $j\in\mathcal{N}_i$ only contains nodes different than $i$, we obtain the following contributions:

\begin{flalign}
m^2\frac{ d\tilde{\bs{v}}_i d\tilde{\bs{v}}_i^T}{2k_B}&=\delta_{ii}\sum_{k\in\mathcal{N}_i}\left[\frac{A_{ik}^2}{2}\left(\bs{I} + \bs{e}_{ik}\bs{e}_{ik}^T\right)+\frac{B_{ik}^2-A_{ik}^2}{D}\bs{e}_{ik}\bs{e}_{ik}^T\right]dt-\left[\frac{A_{ii}^2}{2}\left(\bs{I} + \bs{e}_{ii}\bs{e}_{ii}^T\right)+\frac{B_{ii}^2-A_{ii}^2}{D}\bs{e}_{ii}\bs{e}_{ii}^T\right]dt &\\
&=\sum_{j\in\mathcal{N}_i}\left[\frac{A_{ij}^2}{2}\bs{I}+\left(\frac{A_{ij}^2}{2}+\frac{B_{ij}^2-A_{ij}^2}{D}\right)\bs{e}_{ij}\bs{e}_{ij}^T\right]dt. &\\
\end{flalign}

\begin{flalign}
mT_i\frac{ d\tilde{\bs{v}}_i d\tilde{S}_i}{2k_B}&=-\delta_{ii}\sum_{k\in\mathcal{N}_i}\left[\frac{A_{ik}^2}{2}\left(\frac{\bs{v}_{ik}}{2}+\bs{e}_{ik}\cdot\frac{\bs{v}_{ik}}{2}\bs{e}_{ik}\right)+\frac{B_{ik}^2-A_{ik}^2}{D}\bs{e}_{ik}\bs{e}_{ik}\cdot\frac{\bs{v}_{ik}}{2}\right]dt &\\
&\quad-\left[\frac{A_{ii}^2}{2}\left(\frac{\bs{v}_{ii}}{2}+\bs{e}_{ii}\cdot\frac{\bs{v}_{ii}}{2}\bs{e}_{ii}\right)+\frac{B_{ii}^2-A_{ii}^2}{D}\bs{e}_{ii}\bs{e}_{ii}\cdot\frac{\bs{v}_{ii}}{2}\right]dt &\\
&\quad=-\frac{1}{2}\sum_{j\in\mathcal{N}_i}\left[\frac{A_{ij}^2}{2}\bs{v}_{ij}+\left(\frac{A_{ij}^2}{2}+\frac{B_{ij}^2-A_{ij}^2}{D}\right)\bs{e}_{ij}\cdot\bs{v}_{ij}\bs{e}_{ij}\right]dt.&\\
\end{flalign}

\begin{flalign}
mT_i\frac{ d\tilde{S}_i d\tilde{\bs{v}}_i^T}{2k_B}&=-\delta_{ii}\sum_{k\in\mathcal{N}_i}\left[\frac{A_{ik}^2}{2}\left(\frac{\bs{v}_{ik}^T}{2}+\bs{e}_{ik}\cdot\frac{\bs{v}_{ik}}{2}\bs{e}_{ik}^T\right)+\frac{B_{ik}^2-A_{ik}^2}{D}\bs{e}_{ik}^T\bs{e}_{ik}\cdot\frac{\bs{v}_{ik}}{2}\right]dt &\\
&\quad+\left[\frac{A_{ii}^2}{2}\left(\frac{\bs{v}_{ii}^T}{2}+\bs{e}_{ii}\cdot\frac{\bs{v}_{ii}}{2}\bs{e}_{ii}^T\right)+\frac{B_{ii}^2-A_{ii}^2}{D}\bs{e}_{ii}^T\bs{e}_{ii}\cdot\frac{\bs{v}_{ii}}{2}\right]dt &\\
&\quad=-\frac{1}{2}\sum_{j\in\mathcal{N}_i}\left[\frac{A_{ij}^2}{2}\bs{v}_{ij}^T+\left(\frac{A_{ij}^2}{2}+\frac{B_{ij}^2-A_{ij}^2}{D}\right)\bs{e}_{ij}\cdot\bs{v}_{ij}\bs{e}_{ij}^T\right]dt.&\\
\end{flalign}

\begin{flalign}
T_i^2\frac{ d\tilde{S}_i d\tilde{S}_i}{2k_B}&=\delta_{ii}\sum_{k\in\mathcal{N}_i}\left(\frac{A_{ik}^2}{2}\left[\left(\frac{\bs{v}_{ik}}{2}\right)^2+\left(\bs{e}_{ik}\cdot\frac{\bs{v}_{ik}}{2}\right)^2\right]+\frac{B_{ik}^2-A_{ik}^2}{D}\left(\bs{e}_{ik}\cdot\frac{\bs{v}_{ik}}{2}\right)^2+C_{ik}^2\right)dt &\\
&\quad+\left(\frac{A_{ii}^2}{2}\left[\left(\frac{\bs{v}_{ii}}{2}\right)^2+\left(\bs{e}_{ii}\cdot\frac{\bs{v}_{ii}}{2}\right)^2\right]+\frac{B_{ii}^2-A_{ii}^2}{D}\left(\bs{e}_{ii}\cdot\frac{\bs{v}_{ii}}{2}\right)^2-C_{ii}^2\right)dt &\\
&\quad=\sum_{j\in\mathcal{N}_i}\left(\frac{A_{ij}^2}{2}\left[\left(\frac{\bs{v}_{ij}}{2}\right)^2+\left(\bs{e}_{ij}\cdot\frac{\bs{v}_{ij}}{2}\right)^2\right]+\frac{B_{ij}^2-A_{ij}^2}{D}\left(\bs{e}_{ij}\cdot\frac{\bs{v}_{ij}}{2}\right)^2+C_{ij}^2\right)dt&\\
&\quad=\frac{1}{4}\sum_{j\in\mathcal{N}_i}\left[\frac{A_{ij}^2}{2}\bs{v}_{ij}^2+\left(\frac{A_{ij}^2}{2}+\frac{B_{ij}^2-A_{ij}^2}{D}\right)\left(\bs{e}_{ij}\cdot\bs{v}_{ij}\right)^2\right]dt+\sum_{j\in\mathcal{N}_i}C_{ij}^2dt.&\\
\end{flalign}

Note that the resulting marginal covariance matrices are symmetric, as expected. Furthermore, as the complete covariance matrix $\bs{M}$ is positive semidefinite by construction, all its leading minors are also positive semidefinite matrices. Thus, all the marginal covariance matrices are valid.

\section{Constrained monotonic neural networks}\label{app:cmnn}

The purpose of Constrained Monotonic Neural Networks \cite{runje2023constrained} is to construct a universal approximator with a precondition of (partial) monotonicity and/or convexity of a multivariate, multidimensional function. The basic building block is a monotone constrained fully connected layer:
\begin{equation}
\bs{h}=\rho^{\bs{s}}(|\bs{W}^T|_{\bs{t}}\bs{x}+\bs{b})
\end{equation}
where $\bs{x}\in\mathbb{R}^n$ is the input vector, $\bs{h}\in\mathbb{R}^m$ is the output vector, $\bs{W}\in\mathbb{R}^{n\times m}$ and $\bs{b}\in\mathbb{R}^m$ are the weights and biases, $\rho^{\bs{s}}$ is the combined activation function, and $\bs{t}$ is the monotonicity indicator vector. This is a usual MLP architecture whose weights and activation functions are specifically chosen to ensure monotonicity and convexity of the output results. 

In particular, we need to model the internal energy $U_i=U(S_i,\mathcal{V}_i)$ scalar function which is increasing in entropy $S_i$ and decreasing in volume $\mathcal{V}_i$, both scalar quantities. Thus, the monotonicity indicator vector is $\bs{t}=[1,-1]$ and the weight matrix is modified by the operation $|\bs{W}^T|_{\bs{t}}$ which assigns the following sign change:
\begin{equation}
w_{ji}^{\prime} = \begin{cases}
|w_{ji}| & \text{if } t_i = 1 \\
-|w_{ji}| & \text{if } t_i = -1 \\
w_{ji} & \text{otherwise}
\end{cases}.
\end{equation}
For the subsequent hidden layers, monotonic dense units with the indicator vector $\bs{t}$ always being set to 1 are used in order to preserve monotonicity. Last, we enforce convexity with respect to both inputs by using a convex activation function $\rho^{\bs{s}}$. In our experiments, we have used a softmax activation. It is noteworthy that even with the imposed constraints on weights and activation functions, the network retains its universal approximation property. The complete proof is provided in \cite{runje2023constrained}.

\newpage
\section{Training and testing algorithms} \label{app:algo}

\begin{algorithm}
\begin{algorithmic}
\State \textbf{Input:} State variables: $(\bs{r}^t,\bs{v}^t)$
\For{each particle $i$} 
\State Compute neighbour particles $j\in\mathcal{N}_i$
\State Estimate entropy $S_i$:
\State \hskip 1.0em $S_i=\text{NN}(\bs{r}_{ij}, \bs{v}_{ij})$ 
\State Compute particle volume $\mathcal{V}_i$ and traceless strain tensor $\bar{\bs{\varepsilon}}_i$: 
\State \hskip 1.0em $d_i=\text{NN}(\bs{r}_{ij})$ 
\State \hskip 1.0em $\bar{\bs{\varepsilon}}_i=\text{NN}(\bs{r}_{ij},\bs{r}_{ij}^0)$ 
\State Compute internal energy $U_i$: 
\State \hskip 1.0em $U_i=\text{NN}(S_{i},\mathcal{V}_i,\bar{\bs{\varepsilon}}_i)$ 
\State \hskip 1.0em $P_i=-\dpar{U_i}{\mathcal{V}_i}$, $T_i=\dpar{U_i}{S_i}$, $\bs{\tau}_i=\dpar{U_i}{\bar{\bs{\varepsilon}}_i}$ 
\State Compute prediction statistics: 
\State \hskip 1.0em $A_{ij},B_{ij},C_{ij}=\text{NN}(\bs{r}_{ij},T_i, T_j)$ 
\State \hskip 1.0em Compute $\bs{\Sigma}_{ii}^{t+1}$ and $\bs{\mu}_i^{t+1}$ 
\EndFor
\State Compute negative log-likelihood loss $\mathcal{L}$ 
\end{algorithmic}
\caption{Training algorithm for the proposed method.}\label{alg:train}
\end{algorithm}

\begin{algorithm}
\begin{algorithmic}
\State \textbf{Input:} Initial conditions: $(\bs{r}^0,\bs{v}^0)$
\State Estimate initial entropy $S^0=\text{NN}(\bs{r}_{ij}^0, \bs{v}_{ij}^0)$ 
\For{each snapshot $t$} 
\For{each particle $i$} 
\State Compute neighbour particles $j\in\mathcal{N}_i$
\State Compute particle volume $\mathcal{V}_i$ and traceless strain tensor $\bar{\bs{\varepsilon}}_i$: 
\State \hskip 1.0em $d_i=\text{NN}(\bs{r}_{ij})$ 
\State \hskip 1.0em $\bar{\bs{\varepsilon}}_i=\text{NN}(\bs{r}_{ij},\bs{r}_{ij}^0)$ 
\State Compute internal energy $U_i$: 
\State \hskip 1.0em $U_i=\text{NN}(S_{i},\mathcal{V}_i,\bar{\bs{\varepsilon}}_i)$ 
\State \hskip 1.0em $P_i=-\dpar{U_i}{\mathcal{V}_i}$, $T_i=\dpar{U_i}{S_i}$, $\bs{\tau}_i=\dpar{U_i}{\bar{\bs{\varepsilon}}_i}$ 
\State Compute stochastic increments $d\tilde{\bs{x}}_i$: 
\State \hskip 1.0em $A_{ij},B_{ij},C_{ij}=\text{NN}(\bs{r}_{ij},T_i,T_j)$ 
\State \hskip 1.0em Random numbers $d\bs{W}_{ij}$ and $dV_{ij}$ 
\State \hskip 1.0em Compute $d\tilde{\bs{x}}_i=(\bs{0},d\tilde{\bs{v}}_i,d\tilde{S}_i)$ 
\EndFor
\State Solve dynamics $\bs{x}^{t+1}$ via GENERIC 
\State Update state vector: $\bs{x}^t\leftarrow\bs{x}^{t+1}$
\EndFor
\end{algorithmic}
\caption{Testing algorithm for the proposed method.}\label{alg:test}
\end{algorithm}

\section{Alternative architectures} \label{app:gnns}

\subsection{GNS}

Graph Network-based Simulator (GNS) \cite{sanchez2020learning} is a deep learning architecture specifically designed for capturing the dynamical behaviour of complex fluids. For GNS, we use the original deterministic acceleration prediction based on positions and velocities. We use the following encoder-processor-decoder message-passing structure:

\begin{itemize}
\item Encoder: Following the original paper, the nodal features are chosen to be the velocities of the particles $\bs{v}_i$, and the edge features the relative positions $\bs{r}_{ij}$ and distances $|\bs{r}_{ij}|$. Those are encoded in node $\bs{z}_i$ and edge $\bs{z}_{ij}$ latent vectors by using two networks:

\begin{equation}
\bs{z}_{ij}=\text{MLP}_e\left(\frac{\bs{r}_{ij}}{h}, \frac{|\bs{r}_{ij}|}{h}\right),\quad \bs{z}_i=\text{MLP}_v(\bs{v}_i).
\end{equation}

\item Processor: The latent nodes and edges are then processed with a message passing layer with mean aggregation function. This scheme is repeated $M$ times with residual connections in order for the information to transverse the complete graph. 
\begin{equation}
\bs{z}^{l+1}_{ij}=\bs{z}_{ij}^{l}+\text{MLP}^l_m(\bs{z}^{l}_i,\bs{z}^{l}_j,\bs{z}^{l}_{ij}),\quad \bs{z}^{l+1}_i=\bs{z}^{l}_i+\text{MLP}_v^l\left(\bs{z}^{l}_i,\frac{1}{|\mathcal{N}_i|}\sum_{j\in\mathcal{N}_i}\bs{z}^{l+1}_{ij}\right).
\end{equation}
\item Decoder: The decoder extracts the dynamic information from the latent nodes as an acceleration prediction:
\begin{equation}
\bs{a}_i=\text{MLP}_d(\bs{z}^{M}_i)dt.
\end{equation}
\end{itemize}

The dynamic equations are then the following:

\begin{equation}
\begin{split}
d\bs{r}_i&=\bs{v}_idt\\
d\bs{v}_i&=\text{GNS}(\bs{r}_i,\bs{v}_i)dt=\bs{a}_idt
\end{split}
\end{equation}

\subsection{GNS-SDE}

The original GNS paper does not account for stochastic effects. We have adapted the methodology to account for stochastic effects in a second architecture which we refer to as GNS-SDE \cite{dridi2021learning}. The idea is to have two independent GNS which acount for the drift and diffusion terms of a generic stochastic differential equation as follows:
\begin{equation}
\begin{split}
d\bs{r}_i&=\bs{v}_idt,\\
d\bs{v}_i&=\text{GNS}_{\text{drift}}(\bs{r}_i,\bs{v}_i) dt + \text{GNS}_{\text{diff}}(\bs{r}_i,\bs{v}_i) d\bs{w}
\end{split}
\end{equation}
where $\text{GNS}_{\text{drift}}(\bs{x})$ and $\text{GNS}_{\text{diff}}(\bs{x})$ are the drift and diffusion coefficients respectively and $d\bs{w}$ is a Wiener process with zero mean and unit variance.

\subsection{DPD}

As a traditional coarse-graining baseline, we use the standard Dissipative Particle Dynamics (DPD) force decomposition in conservative, dissipative and random forces between particles as:

\begin{equation}
\begin{split}
d\bs{r}_i&=\bs{v}_idt,\\
md\bs{v}_i&=\sum_{j\in\mathcal{N}_i}(\bs{F}_{ij}^C dt+\bs{F}_{ij}^D dt+\bs{F}_{ij}^R d\bs{w})
\end{split}
\end{equation}

where each force is computed using:

\begin{equation}
\begin{split}
\bs{F}_{ij}^C&=\alpha w^C(|\bs{r}_{ij}|)\bs{e}_{ij},\\
\bs{F}_{ij}^D&=-\gamma w^D(|\bs{r}_{ij}|)(\bs{e}_{ij}\cdot\bs{v}_{ij})\bs{e}_{ij},\\
\bs{F}_{ij}^R&=\sigma w^R(|\bs{r}_{ij}|)\bs{e}_{ij}.
\end{split}
\end{equation}

The conservative and random pairwise weight function is chosen to be the usual linear function of the normalized distance
\begin{equation}
w^C(|\bs{r}_{ij}|)=w^R(|\bs{r}_{ij}|)=1-\frac{|\bs{r}_{ij}|}{h}.
\end{equation}

The dissipative forces are fully determined by the fluctuation-dissipation theorem:
\begin{equation}
\begin{split}
\sigma^2&=2\gamma k_BT,\\
w^D(|\bs{r}_{ij}|)&=w^R(|\bs{r}_{ij}|)^2.
 \end{split}
\end{equation}

To sum up, this model has 4 trainable parameters: the force amplitudes $\alpha$ and $\sigma$, the particle mass $m$ and the thermal energy $k_BT$.

\section{Hyperparameters}\label{app:hyperparams}

\subsection{Datasets}

The datasets were generated using the following hyperparameters:

\begin{itemize}
\item Number of CG particles ($N$): Discretization of the domain in coarse-grained particles. 
\item Length of the domain ($L$): Length of the cubic box in 2D and 3D. For molecular dynamics examples, the periodic boundary conditions are applied.
\item Cutoff radius ($h$): The radius which determines the neighbourhood from which the interactions are active, following the minimum image convention for periodic boundary conditions.
\item Time step ($\Delta t$): Time step of the coearse-grained learnt simulation. In molecular dynamics examples, several orders of magnitude bigger than the original time scale. In solid mechanics example, it is determined by the CFL condition and the measured data avilability.

\end{itemize}

\begin{table}[h]
  \caption{Dataset hyperparameters.}
  \label{tab:dataset_hyperparameters}
  \centering  
\begin{tabular}{lcccc}
\toprule
& $N$ & $L$& $h$ & $\Delta t$ \\
\midrule
 \textsc{Ideal gas}       & 500 & 1.00 (3D periodic) & 0.20 & 5.0e-4 \\
 \textsc{Star polymer 11} & 1000 & 30.18 (3D periodic) & 4.75 & 2.5e-2 \\
 \textsc{Star polymer 51} & 2000 & 63.41 (3D periodic) & 7.61 & 4.0e-2 \\
 \textsc{Viscoelastic}    & 900 & 2 (2D) & 0.13 & 1.0e-2 \\
 \textsc{Colloids}          & 1135 & 2 (2D) & 0.11 & 5.0e-4 \\
\bottomrule
\end{tabular} 
\end{table}

\subsubsection{\textsc{Ideal gas}}

The first example is the simulation of $N=500$ coarse-grained particles in a 3D periodic box of length $L=1$. The fluid has a shear and bulk viscosity of $\eta=\zeta=0.1$ and thermal conductivity of $\kappa=0.1$. The equation of state is assumed to be an monoatomic ideal gas,
\begin{equation}
U(S,\mathcal{V})=\frac{3h_p^2 N_p^2}{4\pi m}\left(\frac{N_p}{\mathcal{V}}\right)^{\frac{2}{3}}\exp\left(\frac{2S}{3k_BN_p}-\frac{5}{3}\right),
\end{equation}
where $h_p$ is the Planck constant and $N_p$ the number of particles per fluid particle. We select units in which $N_ph_p=1$, $N_pk_B=10$ and $m=1$. The dataset was generated directly in the coarse-grained domain using the methodology described in \cite{espanol2003smoothed}.

\subsubsection{\textsc{Star polymer 11} and \textsc{Star polymer 51}}

The second example is a polymer melt in a 3D periodic box. Each polymer is composed of a core atom and 10 arms bonded by a FENE potential, whereas the interaction between polymers is modelled by a Lennard-Jones potential. Two different internal configurations are considered: 1 and 5 beads per arm, which results in less and higher coarse-graining levels. The fully resolved datasets are generated using LAMMPS \cite{plimpton1995fast} and then coarse-grained by computing the center of mass position and velocity of each polymer. 

\subsubsection{\textsc{Viscoelastic}}

The 2D solid square domain of length $L=2$ with a imposed shear displacement of $\Delta L_x=1$ in the X axis. The material is chosen to be a Mooney-Rivlin model with density $\rho=1000$ and  parameters $C_{10}=1.5\cdot 10^6$, $C_{01}=5\cdot 10^5$ and $D_1=10^{-7}$. A time-dependant viscoelastic behaviour was included as a two-term Prony series with shear relaxation modulus ratio and relaxation time of $G_1=0.5$, $\tau_1=0.2$ and $G_2=0.49$, $\tau_2=0.3$ respectively. The system is discretized in $N=900$ coarse graining particles. The reference solution was computed using the finite element solver Abaqus. 

\subsubsection{\textsc{Colloids}}

The last example is a real measured dataset of a jammed 2D monolayer of repulsive microspheres under cyclic shear forcing. The system consists of sulfate latex particles adsorbed at a water–decane interface, forming a soft jammed material due to electrostatic dipole-dipole repulsion. The material is sheared using an interfacial stress rheometer (ISR), where a magnetized needle applies oscillatory shear stress within a confined channel \cite{keim2014mechanical,galloway2020scaling}. High-resolution optical microscopy is used to capture the positions of each particle, processed with a computer vision particle tracking algorithm, and the velocities are obtained with finite differences. The system is discretized in $N=1135$ coarse graining particles. 

\subsection{Training}

The hyperparameters used to train the models are the following:
\begin{itemize}
\item Number of training snapshots ($N_{\text{train}}$): Time horizon considered for the training from the initial conditions. They are shuffled in $75\%$ train and $25\%$ validation split.
\item Number of extrapolation snapshots ($N_{\text{extrap}}$): Sequential rollout of the model from the initial conditions. In molecular dynamics, the model is extrapolated to $N_{\text{extrap}}=25N_{\text{train}}$ to reach the steady-state statistics. 
\item Learning rate ($l_r$): Initial learning rate for training. 
\item Number of epochs ($N_{\text{epoch}}$): Number of total training epochs.
\item  Number of hidden layers ($N_h$): Number of hidden layers of the MLPs. All the MLPs used are 2 hidden layers with SiLU activation functions \cite{ramachandran2017searching}.
\item \commentquercus{Number of parameters ($N_{\text{params}}$): Parameter count of the trained model.}.
\end{itemize}

\begin{table}[h]
  \caption{Training hyperparameters.}
  \label{tab:train_hyperparameters}
  \centering  
\begin{tabular}{lccccccc}
\toprule
 & $N_{\text{train}}$ & $N_{\text{extrap}}$ & $l_r$ & $N_{\text{epoch}}$ & $N_h$ & $N_{\text{params}}$ \\
\midrule
 \textsc{Ideal gas}       & 300 & 7.5e3 & 1.0e-2 & 5.0e3 & 50 & 16560 \\
 \textsc{Star polymer 11} & 400 & 1.0e4 & 1.0e-3 & 5.0e3 & 100 & 63110 \\
 \textsc{Star polymer 51} & 400 & 1.0e4 & 1.0e-3 & 1.0e4 & 50 & 16560 \\
 \textsc{Viscoelastic}    & 800 & 1000 & 1.0e-3 & 2.0e3 & 100 & 84517 \\
 \textsc{Colloids}          & 2000 & 3500 & 1.0e-3 & 2.0e3 & 100 & 84517 \\
\bottomrule
\end{tabular} 
\end{table}

\section{Additional figures}

\begin{figure}[h]
\centering
\begin{tikzpicture}
\pgfplotsset{width=\textwidth, height=6cm}

\begin{axis}[
    xlabel=$t$,
    width=0.33\textwidth,
    ylabel=VACF,
    grid=major, 
    grid style={dashed,gray!30}, 
    tick label style={font=\tiny}, 
    font=\small,
    ylabel style={align=center, yshift=-1em}, 
    restrict y to domain*=0:4.5,
    legend style={font=\tiny}, 
]
\addplot [color=black, thick, dashed] table[x=t_vec, y=VACF_gt] {plots/taylor_green2/taylor_green.txt};
\addplot [color=tabblue, thick] table[x=t_vec, y=VACF_net] {plots/taylor_green2/taylor_green.txt};
\addplot [color=taborange, thick] table[x=t_vec, y expr={ ( \thisrow{VACF_net} <= 0 || \thisrow{VACF_net} >= 4.5 ) ? nan : \thisrow{VACF_net} },] {plots/taylor_green2/taylor_green_gns.txt};
\addplot [color=tabgreen, thick] table[x=t_vec, y expr={ ( \thisrow{VACF_net} <= 0 || \thisrow{VACF_net} >= 4.5 ) ? nan : \thisrow{VACF_net} },] {plots/taylor_green2/taylor_green_gns_sde.txt};
\addplot [color=tabred, thick] table[x=t_vec, y=VACF_net] {plots/taylor_green2/taylor_green_dpd.txt};

\end{axis}

\begin{axis}[xshift=0.33\textwidth,
    width=0.33\textwidth,
    xlabel=$r$,
    ylabel=RDF,
    grid=major, 
    grid style={dashed,gray!30}, 
    tick label style={font=\tiny}, 
    font=\small,
    ylabel style={align=center, yshift=-1.5em}, 
    legend columns=5,              
    transpose legend=false,
    legend style={
    at={(0.5,1.05)},  
    anchor=south,
    legend cell align=left,
    font=\scriptsize,
    draw=none,                 
    /tikz/every even column/.append style={column sep=0.3cm}
    }
]
\addplot [color=black, thick, dashed] table[x=r_RDF, y=RDF_gt] {plots/taylor_green2/taylor_green.txt};
\addplot [color=tabblue, thick] table[x=r_RDF, y=RDF_net] {plots/taylor_green2/taylor_green.txt};
\addplot [color=taborange, thick] table[x=r_RDF, y=RDF_net] {plots/taylor_green2/taylor_green_gns.txt};
\addplot [color=tabgreen, thick] table[x=r_RDF, y=RDF_net] {plots/taylor_green2/taylor_green_gns_sde.txt};
\addplot [color=tabred, thick] table[x=r_RDF, y=RDF_net] {plots/taylor_green2/taylor_green_dpd.txt};

\addlegendimage{black, thick, dashed}
\addlegendentry{GT}
\addlegendimage{tabblue, thick}
\addlegendentry{Ours}
\addlegendimage{taborange, thick}
\addlegendentry{GNS}
\addlegendimage{tabgreen, thick}
\addlegendentry{GNS-SDE}
\addlegendimage{tabred, thick}
\addlegendentry{DPD}

\end{axis}

\begin{axis}[xshift=0.66\textwidth,
    width=0.33\textwidth,
    xlabel=$t$,
    ylabel=MSD,
    grid=major, 
    grid style={dashed,gray!30}, 
    tick label style={font=\tiny}, 
    font=\small,
    ylabel style={align=center, yshift=-1em}, 
    restrict y to domain*=0:0.65,
]
\addplot [color=black, thick, dashed] table[x=t_vec, y=MSD_gt] {plots/taylor_green2/taylor_green.txt};
\addplot [color=tabblue, thick] table[x=t_vec, y=MSD_net] {plots/taylor_green2/taylor_green.txt};
\addplot [color=taborange, thick] table[x=t_vec, y expr={ ( \thisrow{MSD_net} <= 0 || \thisrow{MSD_net} >= 0.65 ) ? nan : \thisrow{MSD_net} },] {plots/taylor_green2/taylor_green_gns.txt};
\addplot [color=tabgreen, thick] table[x=t_vec, y expr={ ( \thisrow{MSD_net} <= 0 || \thisrow{MSD_net} >= 0.65 ) ? nan : \thisrow{MSD_net} },] {plots/taylor_green2/taylor_green_gns_sde.txt};
\addplot [color=tabred, thick] table[x=t_vec, y expr={ ( \thisrow{MSD_net} <= 0 || \thisrow{MSD_net} >= 0.65 ) ? nan : \thisrow{MSD_net} },] {plots/taylor_green2/taylor_green_dpd.txt};
\end{axis}
        
\end{tikzpicture}
\caption{\commentquercus{Correlation metrics for the \textsc{Taylor-Green} dataset trained with forcing conditions.}}
\label{fig:taylor_green_app}
\end{figure}

\begin{table}[h] 
  \caption{L2 relative error ($\downarrow$) of correlation metrics for the $\textsc{Ideal Gas}$ interpolation and extrapolation experiments and $\textsc{Star Polymer}$ examples.}
  \label{tab:results_app}
  \centering  
\begin{tabular}{llccc}
\toprule
\multicolumn{2}{c}{} & VACF & RDF & MSD \\
\midrule
\textsc{Taylor-Green} & GNS      & 1.09e4 & 3.27e-1 & 2.24e4 \\
                      & GNS-SDE  & 8.23e4 & 3.40e-1 & 1.70e5 \\
                      & DPD   & 3.25e-1 & 4.41e-2 & 2.66e-1 \\
                      & Ours     & \textbf{5.18e-2} & \textbf{9.82e-3} & \textbf{7.73e-2} \\       
\textsc{Self-diffusion} & GNS    & 1.02e4 & 3.15e-1 & 1.94e4 \\
                      & GNS-SDE  & 6.46e4 & 3.27e-1 & 1.23e5 \\
                      & DPD   & 4.52e-1 & 4.44e-2 & 4.65e-1 \\
                      & Ours     & \textbf{4.19e-2} & \textbf{1.59e-2} & \textbf{4.54e-2} \\ 
\textsc{Shear flow}   & GNS      & 7.59e5 & 3.43e-1 & 3.45e-1 \\
                      & GNS-SDE  & 1.47e5 & 2.94e-1 & 3.74e-1 \\
                      & DPD   & 2.18e-1 & 3.68e-2 & 4.64e-2 \\
                      & Ours     & \textbf{7.94e-2} & \textbf{1.12e-2} & \textbf{3.76e-2} \\    
\midrule           
\textsc{Star polymer 11} & GNS   & 1.25e6 & 4.29e-1 & 2.09e6 \\
                      & GNS-SDE  & 3.87e5 & 4.27e-1 & 6.45e5 \\
                      & DPD   & 6.66e-1 & 2.26e-1 & 9.51e-1 \\
                      & Ours     & \textbf{9.89e-2} & \textbf{1.99e-2} & \textbf{4.73e-2} \\ 
\textsc{Star polymer 51}   & GNS & 1.27e1 & 2.13e-1 & 2.12e1 \\
                      & GNS-SDE  & 7.97e6 & 2.23e-1 & 1.61e7 \\
                      & DPD   & 4.00e0 & 9.64e-2 & 7.63e0 \\
                      & Ours     & \textbf{1.41e-1} & \textbf{5.28e-2} & \textbf{5.32e-2} \\  
\bottomrule
\end{tabular} 

\end{table}

\begin{figure}
\centering
\usetikzlibrary{arrows.meta}
\begin{tikzpicture}
\begin{axis}[
    height=5cm,
    width=8cm,
    xlabel=$\bar{v}_x$,
    ylabel=$Z$,
    xmin=-0.6, xmax=0.6,
    ymin=-0.6, ymax=0.6,
    grid=major,
    grid style={dashed,gray!30}, 
    font=\small, 
    tick label style={font=\tiny},
    legend pos=north west,
    legend cell align={left},
]

\pgfplotstableread{plots/shear_flow/shear_flow.txt}\loadedtable

\addplot[only marks, mark=*, taborange, mark size=1pt] table [x=v_mean_gt, y=z_bin_centers] {\loadedtable};
\addlegendentry{GT}

\addplot[only marks, mark=*, tabblue, mark size=1pt] table [x=v_mean_net, y=z_bin_centers] {\loadedtable};
\addlegendentry{Ours}

\addplot[black, dotted, thick] coordinates {(-0.5,-0.5) (0.5,0.5)};

\addplot[-{Latex}] coordinates {(0, 0.5) (0.5, 0.5)};
\addplot[-{Latex}] coordinates {(0, -0.5) (-0.5, -0.5)};

\addplot[-{Latex}] coordinates {(0, 0.1666) (0.1666, 0.1666)};
\addplot[-{Latex}] coordinates {(0, -0.1666) (-0.1666, -0.1666)};

\addplot[-{Latex}] coordinates {(0, 0.3333) (0.3333, 0.3333)};
\addplot[-{Latex}] coordinates {(0, -0.3333) (-0.3333, -0.3333)};

\end{axis}
\end{tikzpicture}
\caption{Velocity profile of the \textsc{Shear flow} test dataset averaged over the X axis. The forced boundary condition is represented as a dotted line.}
\label{fig:shear_flow_app}
\end{figure}

\begin{figure}[h]
\centering
\input{plots/self_diffusion/self_diffusion.tex}
\caption{Visualization of the neural network inference simulation of the \textsc{Self diffusion} case with a bigger domain $L=2$ at same density. The upper panels show several snapshots of the mixing of particles due to the thermal fluctuations. The lower panels represent the corresponding spatial concentration of red and blue particle profiles averaged over the X axis.}
\label{fig:self_diffusion_app}
\end{figure}

\begin{figure}[h]
\centering
\begin{tikzpicture}
\pgfplotsset{width=\textwidth, height=6cm}

\begin{axis}[
    xlabel=$t$,
    width=0.33\textwidth,
    ylabel=VACF,
    grid=major, 
    grid style={dashed,gray!30}, 
    tick label style={font=\tiny}, 
    font=\small,
    ylabel style={align=center, yshift=-1em}, 
    restrict y to domain*=0:4.5,
    legend style={font=\tiny}, 
]
\addplot [color=black, thick, dashed] table[x=t_vec, y=VACF_gt] {plots/ideal_gas/taylor_green.txt};
\addplot [color=tabblue, thick] table[x=t_vec, y=VACF_net] {plots/ideal_gas/taylor_green.txt};
\end{axis}

\begin{axis}[xshift=0.33\textwidth,
    width=0.33\textwidth,
    xlabel=$r$,
    ylabel=RDF,
    grid=major, 
    grid style={dashed,gray!30}, 
    tick label style={font=\tiny}, 
    font=\small,
    ylabel style={align=center, yshift=-1em}, 
    legend style={at={(0.95,0.05)},anchor=south east} 
]
\addplot [color=black, thick, dashed] table[x=r_RDF, y=RDF_gt] {plots/ideal_gas/taylor_green.txt};
\addplot [color=tabblue, thick] table[x=r_RDF, y=RDF_net] {plots/ideal_gas/taylor_green.txt};
\legend{GT, Ours}
\end{axis}

\begin{axis}[xshift=0.66\textwidth,
    width=0.33\textwidth,
    xlabel=$t$,
    ylabel=MSD,
    grid=major, 
    grid style={dashed,gray!30}, 
    tick label style={font=\tiny}, 
    font=\small,
    ylabel style={align=center, yshift=-1em}, 
    restrict y to domain*=0:0.65,
]
\addplot [color=black, thick, dashed] table[x=t_vec, y=MSD_gt] {plots/ideal_gas/taylor_green.txt};
\addplot [color=tabblue, thick] table[x=t_vec, y=MSD_net] {plots/ideal_gas/taylor_green.txt};
\end{axis}
        
\end{tikzpicture}
\caption{\commentquercus{Correlation metrics for the \textsc{Taylor-Green} dataset trained with equilibrium conditions.}}
\label{fig:ideal_gas_app}
\end{figure}

\begin{table}[h]
  \caption{Time performance (mean $\pm$ std) in ms/step ($\downarrow$) for the evaluated methods and the fully resolved dynamics (full-order).}
  \label{tab:time}
  \centering  
\begin{tabular}{lccc}
\toprule
 & \textsc{Ideal gas} & \textsc{Star polymer 11} & \textsc{Star polymer 51} \\
\midrule
GNS     & 1.32 $\pm$ 0.04  & 1.93 $\pm$ 0.03  & 2.51 $\pm$ 0.02 \\
GNS-SDE & 2.57 $\pm$ 0.02  & 3.75 $\pm$ 0.04 & 4.99 $\pm$ 0.04 \\
DPD     & \textbf{0.17 $\bs{\pm}$ 0.00}  & \textbf{0.18 $\bs{\pm}$ 0.00} & \textbf{0.17 $\bs{\pm}$ 0.00} \\
Ours    & 5.98 $\pm$ 0.05  & 6.12 $\pm$ 0.06 &  6.00 $\pm$ 0.08 \\
Full-order & - & 21.87 & 74.40 \\
\bottomrule
\end{tabular} 
\end{table}

\begin{figure}
\centering
\input{plots/results_viscoelastic/results_viscoelastic}
\caption{\commentquercus{Results for the \textsc{Viscoelastic} dataset. Intermediate snapshot with color encoding velocity magnitude for (A) ground truth and (B) model prediction. (C) Radial Distribution Function, indicating the correct prediction of structural statistics.}}
\label{fig:results_viscoelastic_app}
\end{figure}

\begin{table}[h] 
  \caption{L2 relative error ($\downarrow$) of correlation metrics for the $\textsc{Viscoelastic}$ and $\textsc{Colloids}$ examples.}
  \label{tab:results_app}
  \centering  
\begin{tabular}{llcc}
\toprule
\multicolumn{2}{c}{} & RDF \\
\midrule
\textsc{Viscoelastic} & GNS    & 3.08e-1\\
                      & GNS-SDE  & 5.07e-1\\
                      & DPD   & 5.06e-1\\
                      & Ours no $U^{\text{dev}}$ & 1.84e-1 \\
                      & Ours    & \textbf{9.17e-2} \\       
\textsc{Colloids} & GNS      & 4.33e-1\\
                      & GNS-SDE  & 4.90e-1\\
                      & DPD   & 6.89e-1\\
                      & Ours no $U^{\text{dev}}$ & 4.49e-1 \\
                      & Ours    & \textbf{9.22e-2} \\ 
\bottomrule
\end{tabular} 

\end{table}

\begin{figure}
\centerline{\includegraphics[width=\linewidth]{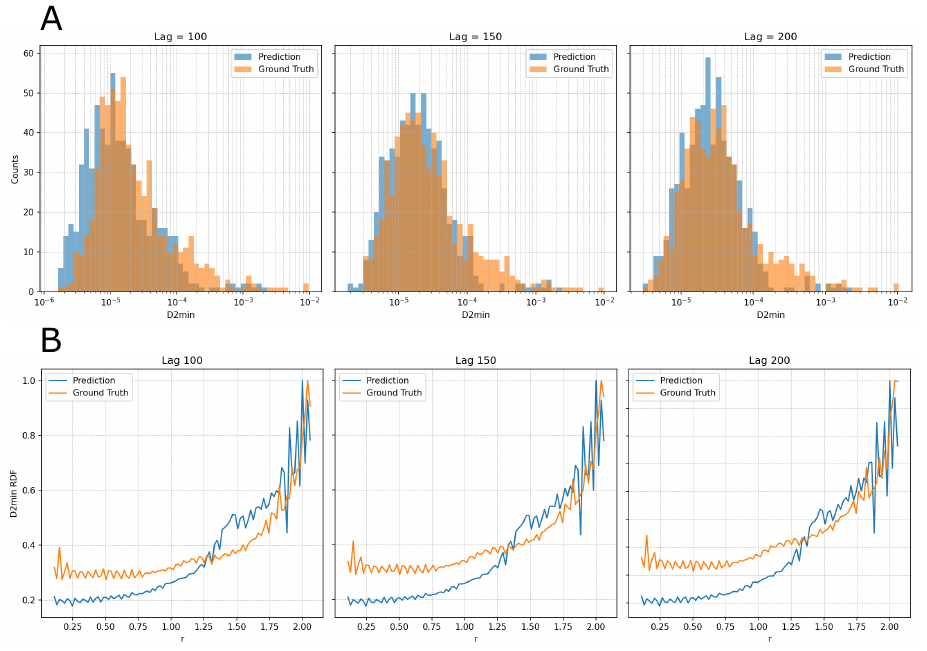}}
\caption{Dynamical properties for the \textsc{Colloids} dataset with lag times 100, 150 and 200. (A) Histograms of non-affine displacements $D^2_{\text{min}}$ for all the domain particles of the prediction and the reference solution. (B) Normalized radial distribution function of the local non-affine displacements.}
\label{fig:needle_app}
\end{figure}

\end{document}